\newcommand{\lpr}{{\underline{P}}}
\newcommand{\upr}{{\overline{P}}}
\newcommand{\cred}{{\mathcal{P}}}
\newtheorem{exam}{Example}
\newtheorem{proposition}{Proposition}
\newtheorem{definition}{Definition}
\newtheorem{lemma}{Lemma}
\newtheorem{corollary}{Corollary}
\newtheorem{example}{Example}
\newtheorem{theorem}{Theorem}
\newcommand{\ocap}{\textcircled{$\scriptstyle{\cap}$}}
\newcommand{\ocup}{\textcircled{$\scriptstyle{\cup}$}}
\title{The intersection probability: betting with probability intervals}
\author{
    Fabio Cuzzolin
    \\
    Visual Artificial Intelligence Laboratory\\ Oxford Brookes University, Oxford, UK
    \\
    fabio.cuzzolin@brookes.ac.uk
}
\begin{document}

\maketitle

\begin{abstract}
Probability intervals are an attractive tool for reasoning under uncertainty. Unlike belief functions, though, they lack a natural probability transformation to be used for decision making in a utility theory framework. In this paper we propose the use of the intersection probability, a transform derived originally for belief functions in the framework of the geometric approach to uncertainty, as the most natural such transformation. We recall its rationale and definition, compare it with other candidate representives of systems of probability intervals, discuss its credal rationale as focus of a pair of simplices in the probability simplex, and outline a possible decision making framework for probability intervals, analogous to the Transferable Belief Model for belief functions.
\end{abstract}

\section{Introduction}

An estimation or a decision problem $Q$ usually involvesknowing in which state we are, where the possible states of the world are often assumed to belong to a finite set $\Theta = \{ x_1,...,x_n \}$. Our uncertainty about the outcome of $Q$ can be described in many ways: the classical option is to assume a probability distribution on $\Theta$.
This, however, can only model the \emph{aleatory} uncertainty about the problem, in which the outcome is random, but the probability distribution that governs the process is fully known.
For instance, if a person plays a fair roulette wheel they will not, by any means, know the outcome in advance, but they will nevertheless be able to predict the long-term frequency with which each outcome manifests itself (1/36).

In opposition, in practical situations we are required to incorporate imprecise measurements and people's opinions in our knowledge state, or need to cope with missing or scarce information. A more cautious approach is therefore to assume that we have no access to the `correct' probability distribution, but that the available evidence provides us with some constraint on this unknown distribution.
This more fundamental type of uncertainty is referred to as \emph{epistemic} \cite{hullermeier2021aleatoric}, and is caused by lack of knowledge about the very process that generates the data. Suppose that the player is presented with ten different doors, which lead to rooms each containing a roulette wheel modelled by a different probability distribution. They will then be uncertain about the very game they are supposed to play. How will this affect their betting behaviour, for instance?

Numerous mathematical theories of epistemic uncertainty have been proposed, starting from de Finetti's pioneering work on subjective probability \cite{DeFinetti74}. 
To list just the most impactful efforts, we could mention possibility theory \cite{Zadeh78,Dubois90}, credal sets \cite{levi80book,kyburg87bayesian}, monotone capacities \cite{wang97choquet}, random sets \cite{Nguyen78} and imprecise probability theory \cite{walley91book}. New original foundations of subjective probability in behavioural terms \cite{walley00towards} or by means of game theory \cite{shafer01book} have been put forward.

One of the simplest such approaches is given by \emph{probability intervals} \cite{tessem92interval,decampos94,halpern03book}: the probability values $p(x)$ of the elements of the decision space $\Theta$ are assumed to belong to an interval $l(x) \leq p(x) \leq u(x)$ delimited by a lower bound $l(x)$ and an upper bound $u(x)$. This encodes a possible (convex) set of probabilities, usually called \emph{credal set} \cite{levi80book}, from which decision can then be taken. When considering an associated decision problem $Q$, many extensions of the classical expected utility rule proposes to extract not one, but multiple potentially optimal decisions~\cite{Troffaes07}. 

There are many situations, however, in which one must converge to a unique decision. 
Decision rules producing one optimal decision in the imprecise-probabilistic literature either focus on specific bounds (e.g., maxi-min \cite{Wald1945,Troffaes07} or maxi-max rules), not accounting for the whole representation, or require solving complex optimisation problems (e.g., selecting the maximal entropy distribution).

An alternative approach which has been often studied within the belief function theory \cite{cobb03on} consists in approximating a complex uncertainty measure (such as a belief function) by a single probability distribution, from which a unique optimal decision can be deduced. 
This is, for instance,  the case of the Transferable Belief Model \cite{smets88beliefversus}, in which the so called \emph{pignistic transformation} \cite{smets05ijar} is employed for this purpose. 

Similarly to the case of belief functions, it could be useful to apply such a transformation to reduce a set of probability intervals to a single probability distribution prior to actually making a decision. However, this problem has been quite neglected so far. One could of course pick a representative from the corresponding credal set, but it makes sense to wonder whether a transformation inherently designed for probability intervals as such could be found.

In this paper we argue that the natural candidate for the role of a probability transform of probability intervals is the \emph{intersection probability}, originally identified in the context of the geometric approach to uncertainty \cite{cuzzolin07smcb}. Even though originally introduced as a probability transform for belief functions \cite{cuzzolin07smcb}, the intersection probability is (as we show here) inherently associated with probability intervals, in which context its rationale clearly emerges.
\\
It can be shown that the intersection probability is the only probability distribution that behave homogeneously in each element $x$ of the frame $\Theta$, i.e., it assigns the same fraction of the available probability interval to each element of the decision space.

\subsection{Contributions and paper outline} \label{sec:paper-outline}


We first recall the basic notions of the theories of lower and upper probabilities, interval probabilities and belief functions (Section \ref{sec:theories}), to then quickly review the prior art on probability transform and its geometric interpretation (Section \ref{sec:transform}).

We then formally define the intersection probability and its rationale (Section \ref{sec:definition}),
showing that it can be defined for any interval probability system as the unique probability distribution obtained by assigning the same fraction of the uncertainty interval to all the elements of the domain. We compare it with other possible representatives of interval probability systems, 
and recall its geometric interpretation in the space of belief functions and the justification for its name that derives from it (Section \ref{sec:geometric}).

In Section \ref{sec:credal} we extensively illustrate the credal rationale for the intersection probability as focus of the pair of lower and upper simplices associated with the interval probability system.

As a belief function determines itself an interval probability system, the intersection probability exists for belief functions too and can therefore be compared with classical approximations of belief functions like the pignistic function \cite{smets88beliefversus} and relative plausibility and belief \cite{cuzzolin2008dual,cuzzolin2008semantics} of singletons, or more recent approximations proposed by Sudano \cite{sudano01icif}.
In Section \ref{sec:relations} we thus analyse the relations of intersection probability with other probability transforms of belief functions, while in Section \ref{sec:operators} we discuss its properties with respect to affine combination and convex closure.

The potential use of the intersection probability to bet on interval probability systems, in a framework analogous to the Transferable Belief Model, is outlined in Section \ref{sec:decision}, which concludes the paper.


\section{Uncertainty theories} \label{sec:theories}

\subsection{Lower and upper probabilities} \label{sec:lower-upper}

A \emph{lower probability} $\lpr$ is a function from $2^\Theta$, the power set of $\Theta$, to the unit interval $[0,1]$. A lower probability is associated with a dual \emph{upper probability} $\upr$, defined for any $A \subseteq \Theta$ as $\overline{P}(A) = 1 - \underline{P}(A^c)$, where $A^c$ is the complement of $A$ (this means that we can simply focus on one of the two measures, provided it is defined for all subsets). A lower probability $\lpr$ can also be associated with a (closed convex) set 
\begin{equation}\label{eq:credal}
\cred(\lpr)= \Big \{p: P(A) \geq \lpr(A), \forall A \subseteq \Theta \Big \}
\end{equation}
of probability distributions $p$ whose measure $P$ dominates $\lpr$. Such a polytope or convex set of probability distributions is usually called a \emph{credal set}. A lower probability $\lpr$ will be called \emph{consistent} if $\cred(\lpr)$ and \emph{tight} if $\inf_{p \in \cred(\lpr)} P(A) = \lpr(A)$ (respectively it `avoids sure loss' and is `coherent' in Peter Walley's~\cite{walley91book} terms). Consistency means that lower bound constraints $\lpr(A)$ can be satisfied, while tightness means that $\lpr$ is the lower envelope on subsets of $\cred(\lpr)$. Note that not all convex sets of probabilities can be described by only focusing on events (see Walley~\cite{walley91book}), however they will be sufficient here. The notation $\mathcal{P}$ 
will simply denote the set of all possible probabilities on $\Theta$.

\subsection{Probability intervals} \label{sec:probability-intervals}

Dealing with general lower probabilities defined on $2^\Theta$ can be difficult when $\Theta$ is big, and it may be interesting in applications to focus on simpler models. One popular and practical model used to model such kind of uncertainty are probability intervals. 

A \emph{set of probability intervals} or \emph{interval probability system} is a system of constraints on the probability values of a probability distribution $p:\Theta \rightarrow [0,1]$ on a finite domain $\Theta$ of the form
\begin{equation} \label{eq:credal-interval}
\mathcal{P}(l,u) \doteq \Big \{p: l(x) \leq p(x) \leq u(x), \forall x \in \Theta \Big \}.
\end{equation}
Probability intervals have been introduced as a tool for uncertain reasoning in \cite{decampos94}, where combination and marginalization of intervals were studied in detail. In \cite{decampos94} the authors also studied the specific constraints for such intervals to be consistent and tight. 

As pointed out for instance in \cite{unclog08book}, a typical way in which probability intervals arise is through measurement errors. As a matter of fact, measurements can be inherently of interval nature (due to the finite resolution of the instruments). In that case the \emph{probability} interval of interest is the class of probability measures consistent with the \emph{measured} interval. A set of constraints of the form (\ref{eq:credal-interval}) determines a credal set, which is just a sub-class of sets described by lower and upper probabilities. 

The lower and upper probabilities induced by $\mathcal{P}(l,u)$  on any subset $A \subseteq \Theta$ from bounds $(l,u)$ can be obtained using the simple formulas:
\begin{equation}\label{eq:lowup-from-int}
\lpr(A)=\max \left \{ \sum_{x \in A} l(x), 1- \sum_{x \not\in A} u(x) \right \}, \; \lpr(A)=\min \left \{ \sum_{x \in A} u(x), 1- \sum_{x \not\in A} l(x) \right \}.
\end{equation}

\subsection{Belief functions} \label{sec:belief-functions}

A special class of lower and upper probabilities is provided by belief and plausibility measures.
Namely, 
a \emph{basic probability assignment} (BPA) \cite{Shafer76} is a set function \cite{denneberg99interaction,dubois86logical} $m : 2^\Theta\rightarrow[0,1]$ such that 
\[
m(\emptyset)=0, \quad
\sum_{A\subset\Theta} m(A)=1.
\]
Subsets of $\Theta$ whose mass values are non-zero are called \emph{focal elements} of $m$.
The \emph{belief function} (BF) associated with a BPA $m : 2^\Theta\rightarrow[0,1]$ is the set function $Bel : 2^\Theta\rightarrow[0,1]$ defined as
\begin{equation} \label{eq:belief-function}
Bel(A) = \sum_{B\subseteq A} m(B). 
\end{equation}
The corresponding \emph{plausibility function} is 
\[
Pl(A) \doteq \sum_{B\cap A\neq \emptyset} m(B) \geq Bel(A).
\]
Note that belief functions can also be equivalently defined in axiomatic terms \cite{Shafer76}.
 
Classical probability measures on $\Theta$ are a special case of belief functions (those assigning mass to singletons only), termed \emph{Bayesian belief functions}.
A BF is said to be \emph{consonant} if its focal elements $A_1,...,A_m$ are nested: $A_1 \subset \cdots \subset A_m$, and corresponds to a possibility measure \cite{dubois88possibility,dubois2012possibility,Shafer76,cuzzolin2021springer}.

\subsubsection{Combination} 

In belief theory conditioning is replaced by the notion of (associative) \emph{combination} of any number of belief functions. 

The \emph{Dempster combination} $Bel_1 \oplus Bel_2$ 
of two belief functions 
on $\Theta$ is the unique BF there with as focal elements all the {non-empty} intersections of focal elements of $Bel_1$ and $Bel_2$, and basic probability assignment
\begin{equation} \label{eq:dempster}
m_{\oplus}(A) = \frac{m_\cap(A)} {1- m_\cap(\emptyset)},
\end{equation}
where 
\begin{equation} \label{eq:intersection}
m_\cap(A) = \sum_{B \cap C = A} m_1(B) m_2(C) 
\end{equation}
and $m_i$ is the BPA of the input belief function $Bel_i$.

Nevertheless, Dempster's combination naturally induces a conditioning operator. Given a conditioning event $A \subset \Theta$, the `logical' or \emph{categorical} belief function $Bel_A$ such that $m(A)=1$ is combined via Dempster's rule with the a-priori belief function $Bel$. The resulting BF $Bel \oplus Bel_A$ is the {conditional belief function given $A$} \emph{a la Dempster}, denoted by $Bel_\oplus(A|B)$.

Many alternative combination rules have since been defined \cite{Klawonn:1992:DBT:2074540.2074558,yager87on,dubois88representation,DENOEUX2008234}, often associated with a distinct approach to conditioning \cite{Denneberg1994,fagin91new,suppes1977}. An exhaustive review of these proposals can be found in \cite{cuzzolin2021springer}, Section 4.3.

Rather than normalising (as in Dempster's rule) or reassigning the conflicting mass $m_\cap(\emptyset)$ to other non-empty subsets, 
Philippe Smets's \emph{conjunctive rule} leaves the conflicting mass with the empty set,
\begin{equation} \label{eq:conjunctive}
m_{\text{\text{\ocap}}}(A) = \left \{ \begin{array}{ll} m_\cap (A) & \emptyset \neq A \subseteq \Theta, 
\\ 
m_\cap(\emptyset) & A = \emptyset, \end{array} \right .
\end{equation}
and thus is applicable to \emph{unnormalised} belief functions \cite{ubf}.

In Dempster's original random-set idea, consensus between two sources is expressed by the intersection of the supported events (\ref{eq:intersection}). When the \emph{union} of the supported propositions is taken to represent such a consensus instead, we obtain what Smets called the \emph{disjunctive} rule of combination,
\begin{equation} \label{eq:disjunctive}
m_{\text{\text{\ocup}}}(A) = \sum_{B \cup C = A} m_1(B) m_2(C).
\end{equation}
It is interesting to note that under disjunctive combination, 
\[
Bel_1 \ocup Bel_2 (A) = Bel_1(A) \ast Bel_2(A), 
\]
i.e., the belief values of the input belief functions are simply multiplied.

\subsubsection{Belief functions and other measures}

Each belief function $Bel$ uniquely identifies a credal set \cite{kyburg87bayesian}
\begin{equation} \label{eq:consistent}
\mathcal{P}[Bel] = \{ P \in \mathcal{P} : P(A) \geq Bel(A) \}
\end{equation}
(where $\mathcal{P}$ is the set of all probabilities one can define on $\Theta$), of which it is its lower envelope: $Bel(A) = \underline{P}(A)$.  Belief functions are thus a special case of lower probabilities (Section \ref{sec:lower-upper}).
The corresponding plausibility measure is the upper probability of an event $A$: $Pl(A) = \overline{P}(A)$.
The probability intervals resulting from Dempster's updating of the credal set associated with a BF, however, are included in those resulting from Bayesian updating \cite{kyburg87bayesian}.

Belief functions are also infinitely \emph{monotone capacities} \cite{Choquet53,sugeno74fuzzy}, and a special case of \emph{coherent lower previsions} \cite{walley91book,walley00towards}. Finally, every belief function specifies a unique \emph{probability box} \cite{Ferson03pboxes}, i.e., a class of cumulative distribution functions delimited by two lower and upper bounds.

\subsection{The geometry of uncertainty measures}

\emph{Geometry} has been proposed by this author and others as a unifying language for the field \cite{cuzzolin18belief-maxent,Cuzzolin99,cuzzolin05isipta,cuzzolin00mtns,cuzzolin13fusion,gennari02-integrating,gong2017belief,black97geometric,rota97book,ha98geometric,wang91geometrical}, possibly in conjunction with an algebraic view \cite{cuzzolin00rss,cuzzolin01bcc,cuzzolin08isaim-matroid,cuzzolin01lattice,cuzzolin05amai,cuzzolin07bcc,cuzzolin14algebraic}.

Indeed, uncertainty measures can be seen as points of a suitably complex geometric space, and there manipulated (e.g. combined, conditioned and so on) \cite{cuzzolin01thesis,cuzzolin2008geometric,cuzzolin2021springer}.
Much work has been focusing on the geometry of belief functions, which live in a convex space termed the \emph{belief space}, which can be described both in terms of a simplex (a higher-dimensional triangle) and in terms of a recursive bundle structure \cite{cuzzolin01space,cuzzolin03isipta,cuzzolin14annals,cuzzolin14lap}. The analysis can be extended to Dempster's rule of combination by introducing the notion of a conditional subspace and outlining a geometric construction for Dempster's sum \cite{cuzzolin02fsdk,cuzzolin04smcb}.
The combinatorial properties of plausibility and commonality functions, as equivalent representations of the evidence carried by a belief function, have also been studied \cite{cuzzolin08pricai-moebius,cuzzolin10ida}. The corresponding spaces are simplices which are congruent to the belief space.
\\
Subsequent work extended the geometric approach to other uncertainty measures, focusing in particular on possibility measures (consonant belief functions) \cite{cuzzolin10fss} and consistent belief functions \cite{cuzzolin11-consistent,cuzzolin09isipta-consistent,cuzzolin08isaim-simplicial}, in terms of simplicial complexes \cite{cuzzolin04ipmu}. Analyses of belief functions in terms credal sets have also been conducted \cite{cuzzolin08-credal,antonucci10-credal,burger10brest}.

The geometry of the relationship between measures of different kinds has also been extensively studied \cite{cuzzolin05hawaii,cuzzolin09-intersection,cuzzolin07ecsqaru,cuzzolin2010credal}, with particular attention to the problem of transforming a belief function into a classical probability measure \cite{Cobb03isf,voorbraak89efficient,Smets:1990:CPP:647232.719592} (see Section \ref{sec:transform}). One can distinguish between an \emph{affine} family of probability transformations \cite{cuzzolin07smcb} (those which commute with affine combination in the belief space), and an \emph{epistemic} family of transforms \cite{cuzzolin07report}, formed by the relative belief and relative plausibility of singletons \cite{cuzzolin08unclog-semantics,cuzzolin2008semantics,CUZZOLIN2012786,cuzzolin06-geometry,cuzzolin10amai}, which possess dual properties with respect to Dempster's sum \cite{cuzzolin2008dual}.
The problem of finding the possibility measure which best approximates a given belief function \cite{aregui08constructing} can also be approached in geometric terms \cite{cuzzolin09ecsqaru,cuzzolin11isipta-consonant,cuzzolin14lp,Cuzzolin2014tfs}. In particular, approximations induced by classical Minkowski norms can be derived and compared with classical outer consonant approximations \cite{Dubois90}.
Minkowski consistent approximations of belief functions in both the mass and the belief space representations can also be derived \cite{cuzzolin11-consistent}.

The geometric approach to uncertainty can also be applied to the conditioning problem \cite{lehrer05updating}. Conditional belief functions can be defined as those which minimise an appropriate distance between the original belief function and the `conditioning simplex' associated with the conditioning event \cite{cuzzolin10brest,cuzzolin11isipta-conditional}. 

Recent papers on this topic include \cite{luo2020vector,pan2020probability,long2021visualization}.

\section{Probability transform} \label{sec:transform}

\subsection{Probability transforms of belief functions} \label{sec: transform-belief}

The relation between belief and probability in the theory of evidence has been and continues to be an important subject of study\cite{Weiler2003approximation,Kramosil95,bauer97approximation,Yaghlane01ecsqaru,Denoeux01ijufk,Denoeux02ijar,HAENNI2002103}. 
A probability transform mapping belief functions to probability measures can be instrumental in addressing a number of issues: mitigating the inherently exponential complexity of belief calculus \cite{bauer97approximation}, making decisions via the probability distributions obtained in a utility theory framework \cite{Smets:1990:CPP:647232.719592} and obtaining pointwise estimates of quantities of interest from belief functions (e.g., the pose of an articulated object in computer vision: see \cite{cuzzolin14lap}, Chapter 8, or \cite{cuzzolin05isipta}).

As both belief and probability measures can be assimilated into points of a Cartesian space \cite{cuzzolin2021springer}, the problem can (as mentioned) be posed in a geometric setting. 
Without loss of generality, we can define a \emph{probability transform} as a mapping from the space of belief functions $\mathcal{B}$ on the domain of interest to the probability simplex $\mathcal{P}$ there,
\[
\begin{array}{lllll}
\mathcal{PT} & : & \mathcal{B} & \rightarrow & \mathcal{P}, 
\\ 
& & Bel \in \mathcal{B} & \mapsto & \mathcal{PT}[Bel] \in \mathcal{P}, 
\end{array}
\]
such that an appropriate distance function or similarity measure $d$ from $Bel$ is minimised \cite{daniel06ijis}:
\begin{equation} \label{eq:pt}
\mathcal{PT}[Bel] = \arg \min_{P \in \mathcal{P}} d(Bel,P).
\end{equation}
A minimal, sensible requirement is for the probability which results from the transform to be compatible with the upper and lower bounds that the original belief function $Bel$ enforces {on the singletons only}, rather than on all the focal sets. Thus, this does not require probability transforms to adhere to the upper--lower probability semantics of belief functions.
As a matter of fact, some important transforms of this kind are not compatible with such semantics.

Many such transformations have been proposed, according to different criteria \cite{Lowrance90,tessem93approximations,Yaghlane01ecsqaru,Denoeux01ijufk,Denoeux02ijar,bauer97approximation}.
In Smets's transferable belief model \cite{smets88beliefversus,smets94transferable}, in particular, decisions are made by resorting to the \emph{pignistic probability}:
\begin{equation} \label{eq:pignistic}
BetP[Bel](x) = \sum_{A\supseteq \{x\}} \frac{m(A)}{|A|},
\end{equation}
which is
the output of the \emph{pignistic transform}.

An interesting approach to the problem seeks approximations which enjoy commutativity properties with respect to a specific combination rule, in particular Dempster's sum \cite{Dempster68a,Dempster68b}. This is the case of the \emph{relative plausibility of singletons} \cite{voorbraak89efficient}, the unique probability that, given a belief function $Bel$ with plausibility $Pl(A) = 1 - Bel(A^c)$, assigns to each singleton its normalized plausibility:\footnote{With a harmless abuse of notation, we will often denote the values of belief functions and plausibility functions on a singleton $x$ by $m(x), Pl(x)$ rather than by $m(\{x\}),Pl(\{x\})$.} 
\begin{equation}\label{eq:relplaus}
\tilde{Pl}[Bel](x) = \frac{Pl(x)}{\sum_{y\in\Theta} Pl(y)}.
\end{equation}
Its properties have been later analyzed by Cobb and Shenoy \cite{Cobb03isf,cobb06ijar}.
Voorbraak proved that his (in our terminology) relative plausibility of singletons $\tilde{Pl}[Bel]$ is a perfect representative of $Bel$ when combined with other probabilities $P \in \mathcal{P}$ through Dempster's rule $\oplus$:
\begin{equation}\label{eq:voorbraak}
\tilde{Pl}[Bel] \oplus P = Bel \oplus P \quad \forall P \in\mathcal{P}.
\end{equation}
Dually, a \emph{relative belief transform} $\tilde{Bel} : \mathcal{B} \rightarrow \mathcal{P}$, $Bel \mapsto \tilde{Bel}[Bel]$ mapping each belief function $Bel$ to the corresponding \emph{relative belief of singletons} \cite{cuzzolin2008dual,cuzzolin08unclog-semantics,haenni08aggregating,daniel06ijis},
\begin{equation}\label{eq:btilde}
\tilde{Bel}[Bel](x) = \frac{Bel(x)}{\sum_{y \in \Theta} Bel(y)},
\end{equation}
can be defined. The notion of a relative belief transform (under the name of `normalised belief of singletons') was first proposed by Daniel in \cite{daniel06ijis}. Some analyses of the relative belief transform and its close relationship with the (relative) plausibility transform were presented in \cite{cuzzolin2008dual,cuzzolin08unclog-semantics}.

\subsection{Geometric approaches} \label{sec: transform-geometric}

Only a few authors have in the past posed the study of the connections between belief functions and probabilities in a geometric setting.
In particular, Ha and Haddawy \cite{Ha} proposed an `affine operator', which can be considered a generalisation of both belief functions and interval probabilities, and can be used as a tool for constructing convex sets of probability distributions. In their work, uncertainty is modelled as sets of probabilities represented as `affine trees', while actions (modifications of the uncertain state) are defined as tree manipulators.
In a later publication \cite{ha98geometric}, the same authors presented an interval generalisation of the probability cross-product operator, called the `convex-closure' (cc) operator, 
analysed the properties of the cc operator relative to manipulations of sets of probabilities and presented interval versions of Bayesian propagation algorithms based on it. Probability intervals were represented there in a computationally efficient fashion by means of a data structure called a `pcc-tree', in which branches are annotated with intervals, and nodes with convex sets of probabilities.

The intersection probability introduced in this paper is somewhat related to Ha's cc operator, as it commutes (at least under certain conditions) with affine combination, and is therefore part of the \emph{affine family} of Bayesian transforms of which Smets's pignistic transform \cite{smets2005decision} is the foremost representative.

\section{The intersection probability} \label{sec:definition}

When our uncertainty is described by imprecise probabilities, it may be desirable for some reasons to transform this knowledge into a classical unique probability. Such reasons include the need to take a unique optimal decision, the need to use classical probabilistic calculus (e.g., for efficiency), or more simply the will to obtain a unique probability from partial probabilistic information. Existing proposals are general in scope but rather complex, e.g., they imply solving a convex optimization problem.  

\subsection{Definition} \label{sec:definition-intersection-probability}

There are clearly many ways of selecting a single measure to represent a collection of probability intervals (\ref{eq:credal-interval}). Note, however, that each of the intervals $[l(x),u(x)]$, $x \in\Theta$, carries the same weight within the system of constraints (\ref{eq:credal-interval}), as there is no reason for the different elements $x$ of the domain to be treated differently.
It is then sensible to require that the desired representative probability should behave homogeneously in each element $x$ of the frame $\Theta$. 

Mathematically, this translates into seeking a probability distribution $p : \Theta \rightarrow [0,1]$ such that
\[
p(x) = l(x) + \alpha (u(x) - l(x))
\]
for all the elements $x$ of $\Theta$, and some constant value $\alpha \in [0,1]$ (see Fig. \ref{fig:notion-intersection}). This value needs to be between 0 and 1 in order for the sought probability distribution $p$ to belong to the interval.
\begin{figure}[ht!]
\begin{center}
\includegraphics[width = \textwidth]{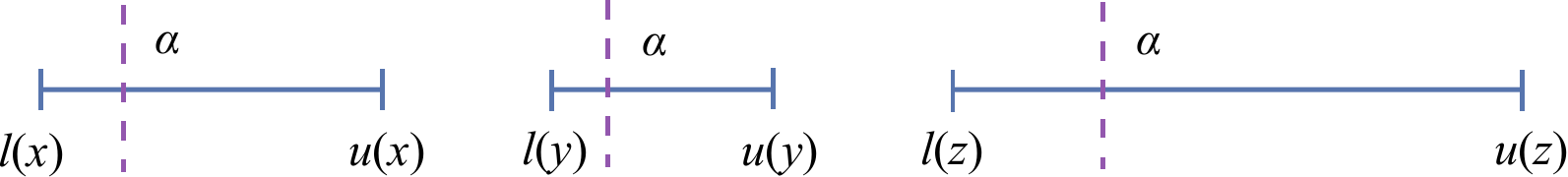}
\end{center}
\caption{An illustration of the notion of the intersection probability for an interval probability system $(l,u)$ on $\Theta = \{x,y,z\}$ (\ref{eq:credal-interval}). \label{fig:notion-intersection}}
\end{figure}
It is easy to see that there is indeed a \emph{unique} solution to this problem. It suffices to enforce the normalisation constraint
\[
\sum_x p(x) = \sum_x \Big [ l(x) + \alpha (u(x) - l(x)) \Big ] = 1
\]
to understand that the unique value of $\alpha$ is given by
\begin{equation} \label{eq:beta-interval}
\alpha = \beta[(l,u)] \doteq \frac{1 - \sum_{x \in \Theta} l(x)}{\sum_{x \in \Theta} \big(u(x) - l(x) \big)}.
\end{equation} 
\begin{definition}
The \emph{intersection probability} $p[(l,u)] : \Theta \rightarrow [0,1]$ associated with the interval probability system (\ref{eq:credal-interval}) is the probability distribution
\begin{equation} \label{eq:form1}
p[(l,u)](x) = \beta[(l,u)] u(x) + (1 - \beta[(l,u)]) l(x),
\end{equation}
with $\beta[(l,u)]$ given by (\ref{eq:beta-interval}). 
\end{definition}
The ratio $\beta[(l,u)]$ \eqref{eq:beta-interval} measures the fraction of each interval $[l(x),u(x)]$ which we need to add to the lower bound $l(x)$ to obtain a valid probability function (adding up to one).

It is easy to see that when $(l,u)$ are a pair of belief/plausibility measures $(Bel,Pl)$, we can define the intersection probability for belief functions as well. Although originally defined by geometric means \cite{cuzzolin07smcb}, the intersection probability is thus in fact `the' rational probability transform for general interval probability systems.

Note that  $p[(l,u)]$ can also be written as
\begin{equation} \label{eq:form2}
p[(l,u)](x) = l(x) + \left ( 1 - \sum_x l(x) \right ) R[(l,u)](x),
\end{equation}
where
\begin{equation} \label{eq:relative-uncertainty}
R[(l,u)](x) \doteq \frac{u(x) - l(x)}{\sum_{y\in\Theta} (u(y) - l(y))} = \frac{\Delta(x)}{\sum_{y\in\Theta} \Delta(y)}.
\end{equation}
Here $\Delta(x)$ measures the width of the probability interval for $x$, whereas  
$R[(l,u)] : \Theta \rightarrow [0,1]$ measures how much the uncertainty in the probability value of each singleton `weighs' on the total width of the interval system (\ref{eq:credal-interval}). We thus term it the \emph{relative uncertainty} of singletons. Therefore, we can say that $p[(l,u)]$ distributes the mass $(1 - \sum_x l(x))$ to each singleton $x \in \Theta$ according to the relative uncertainty $R[(l,u)](x)$ it carries for the given interval.

\begin{example} \label{exm:probint}

Consider as an example an interval probability system on a domain $\Theta = \{ x,y,z \}$ of size 3:
\begin{equation} \label{eq:interval-example}
\begin{array}{lll}
0.2 \leq p(x) \leq 0.8, \;\;\; & 0.4 \leq p(y) \leq 1, \;\;\; & 0.3 \leq p(z) \leq 0.3.
\end{array}
\end{equation}
Notice that there is no uncertainty at all on the value of $p(z) = 0.3$.
The widths of the corresponding intervals are $\Delta(x) = 0.6$, $\Delta(y) = 0.6$, $\Delta(z) = 0$ respectively.
The relative uncertainty on each singleton (\ref{eq:relative-uncertainty}) is therefore:
\begin{equation} \label{eq:erre-example}
\begin{array}{lll}
R[(l,u)](x) & = & \frac{\Delta(x)}{\sum_{w \in \Theta} \Delta(w)} = \frac{0.6}{1.2} = \frac{1}{2}, \\ R[(l,u)](y) & = & \frac{1}{2}, \\ R[(l,u)](z) & = & \frac{\Delta(z)}{\sum_{w \in\Theta } \Delta(w)} = \frac{0}{1.2} = 0.
\end{array}
\end{equation}
Computing the intersection probability is then really easy. By Equation (\ref{eq:beta-interval}) the fraction of the uncertainty $u(x) - l(x)$ on $p(x)$ we need to add to the lower bound $l(x)$ to get an admissible, normalized probability is
\[
\beta = \frac{1 - 0.2 - 0.4 - 0.3}{0.6 + 0.6} = \frac{0.1}{1.2} = \frac{1}{12}.
\]
The intersection probability (\ref{eq:form2}) has therefore values:
\[
\begin{array}{c}
p[(l,u)](x) = 0.2 + \frac{1}{12} 0.6 = 0.25, \hspace{5mm} p[(l,u)](y) = 0.4 + \frac{1}{12} 0.6 = 0.45, 
\\ \\
p[(l,u)](z) = 0.3 + \frac{1}{12} 0 = 0.3.
\end{array}
\]
Notice that the fact of having a zero-width interval for one of the singletons does not pose a problem for the intersection probability, which falls as expected inside the probability interval for all the elements of the domain.

According to its interpretation of Equation (\ref{eq:form2}), $p[(l,u)]$ is also the result of distributing the necessary mass $( 1 - \sum_x l(x) ) = 1 - 0.2 - 0.4 - 0.3 = 0.1$ to each singleton in proportion to the relative uncertainty $R[(l,u)]$ (Equation (\ref{eq:erre-example})) of their intervals:
\[
\begin{array}{c}
p[(l,u)](x) = 0.2 + 0.1 \frac{1}{2} = 0.25, \hspace{5mm} p[(l,u)](y) = 0.4 + 0.1 \frac{1}{2} = 0.45, \\ \\
p[(l,u)](z) = 0.3 + 0.1 \cdot 0 = 0.3.
\end{array}
\]
\end{example}

\subsection{Comparison with other interval representatives} \label{sec:comparison-interval}

It can be useful to briefly compare the proposed intersection probability with other possible representatives of an interval probability system (\ref{eq:credal-interval}).

\subsubsection{Comparison with the center of mass of $P[(l,u)]$}

The naive choice of picking the barycenter of each interval $[l(x),u(x)]$ to represent an interval probability system $(l,u)$, for instance, does not yield in general a valid probability function, for
\[
\sum_{x \in \Theta} \left [ l(x) + \frac{1}{2}(u(x) - l(x)) \right ] \neq 1.
\]
This marks the difference with the case of belief functions, for which the pignistic function has a strong interpretation as barycenter of the associated credal set.

\subsubsection{Comparison normalised lower and upper bounds}

For the probability interval system (\ref{eq:interval-belief}) determined by a belief function, 
\begin{equation} \label{eq:interval-belief}
(Bel, Pl) \doteq \big \{ p \in \mathcal{P} : Bel(x) \leq p(x) \leq Pl(x), \forall x \in \Theta \big \},
\end{equation}
the probabilities we obtain by normalizing lower $\tilde{l}(x) = l(x)/\sum_y l(y)$ or upper bound $\tilde{u}(x) = u(x)/\sum_y u(y)$ are \emph{not} guaranteed to be consistent with the interval itself.

For instance, if there exists an element $x \in \Theta$ such that $Bel(x) = Pl(x)$ (the interval has width zero for that element) we have that
\[
\tilde{Bel}(x) = \frac{m(x)}{\sum_y m(y)} > Pl(x), 
\quad 
\tilde{Pl}(x) = \frac{Pl(x)}{\sum_y Pl(y)} < Bel(x).
\]
Therefore, both relative belief and plausibility of singletons fall outside the interval system (\ref{eq:interval-belief}). 
This holds for a general collection of probability intervals (\ref{eq:credal-interval}), again marking the contrast with the behavior of the intersection probability.

\subsubsection{Comparison with Sudano's proposal}

In the belief functions framework, Sudano proposed in \cite{sudano03-equivalence} the following four probability transforms:
\begin{flalign} \label{eq:prpl}
\begin{array}{lll}
PrPl[Bel](x) & \doteq & \displaystyle \sum_{A \supseteq \{x\}} m(A) \frac{Pl(x)}{\sum_{y \in A} Pl(y)},
\end{array}
\end{flalign}
\begin{equation} \label{eq:prbel}
PrBel[Bel](x) \doteq \sum_{A \supseteq \{x\}} m(A) \frac{Bel(x)}{\sum_{y \in A} Bel(y)} = \sum_{A \supseteq \{x\}} m(A) \frac{m(x)}{\sum_{y \in A} m(y)},
\end{equation}
\begin{equation} \label{eq:prNpl}
PrNPl[Bel](x) \doteq \frac{1}{\Delta} \sum_{A \cap \{x\} \neq \emptyset} m(A) = \tilde{Pl}[Bel](x),
\end{equation}
\begin{equation} \label{eq:prapl}
PraPl[Bel](x) \doteq Bel(x) + \epsilon \cdot Pl(x), \;\;\; \epsilon = \frac{1- \sum_{y\in\Theta} Bel(y)}{\sum_{y \in \Theta} Pl(y)} = \frac{1 - k_{Bel}}{k_{Pl}},
\end{equation}
where
\begin{equation} \label{eq:kappas}
k_{Bel} \doteq \sum_{x \in \Theta} Bel(x), \quad k_{Pl} \doteq \sum_{x \in \Theta} Pl(x).
\end{equation}
The first two transformations are clearly inspired by the pignistic function (\ref{eq:pignistic}). While in the latter case the mass $m(A)$ of each focal element is redistributed homogeneously to all its elements $x \in A$, $PrPl[Bel]$ (\ref{eq:prpl}) redistributes $m(A)$ proportionally to the relative plausibility of a singleton $x$ \emph{inside $A$}. Similarly, $PrBel[Bel]$ (\ref{eq:prbel}) redistributes $m(A)$ proportionally to the relative \emph{belief} of a singleton $x$ within $A$.

The fourth transformation (\ref{eq:prapl}), $PraPl[Bel]$, is more related to the case of probability intervals and to the intersection probability. By Equation (\ref{eq:form1}),
\begin{equation} \label{eq:relation}
\begin{array}{lll}
p[Bel](x) & = & \displaystyle (1 - \beta[Bel]) \tilde{Bel}(x) k_{Bel} + \beta[Bel] \tilde{Pl}(x) k_{Pl},
\end{array}
\end{equation}
where
\[
(1 - \beta[Bel]) k_{Bel} + \beta[Bel] k_{Pl} = \frac{k_{Pl} - 1}{k_{Pl} - k_{Bel}} k_{Bel} + \frac{1 - k_{Bel}}{k_{Pl} - k_{Bel}} k_{Pl} = 1,
\]
i.e., $p[Bel]$ lies on the line joining the relative plausibility $\tilde{Pl}$ and the relative belief $\tilde{Bel}$ of singletons.
Here $\beta[Bel]$ is the value of (\ref{eq:beta-interval}) for a system of probability intervals associated with a belief function $Bel$, namely:
\begin{equation} \label{eq:beta}
\beta[Bel] \doteq \frac{1 - \sum_{x \in \Theta} Bel(x)}{\sum_{x \in \Theta} \big( Pl (x) - Bel (x) \big)}.
\end{equation} 

Just like the intersection probability (\ref{eq:relation}) and the relative uncertainty of singletons \cite{cuzzolin07smcb}, $PraPl[b]$ can also be expressed as an affine combination of relative belief and plausibility of singletons:
\begin{equation} \label{eq:prapl-interval}
PraPl[Bel](x) = m (x) + \frac{1 - k_{Bel}}{k_{Pl}} Pl (x) = k_{Bel} \tilde{Bel} (x) + (1 - k_{Bel}) \tilde{Pl} (x).
\end{equation}
More to the point, as its definition only involves belief and plausibility values \emph{of singletons}, it is more correct to think of $PraPl[b]$ as of a probability transformation of a probability interval system (rather than an approximation of a belief function)
\[
PraPl[(l,u)] \doteq l(x) + \frac{1 - \sum_y l(y)}{\sum_y u(y)} u(x)
\]
just like the intersection probability.

However, it is easier to point out its weakness as a representative of probability intervals when put in the above form. Just like in the case of relative belief and plausibility of singletons, $PraPl[(l,u)]$ is not in general consistent with the original probability interval system $(l,u)$.\\ If there exists an element $x\in\Theta$ such that $l(x) = u(x)$ (the interval has width $\Delta(x)$ equal to zero for that element) we have that
\[
\begin{array}{lll}
PraPl[(l,u)](x) & = & \displaystyle l(x) + \frac{1 - \sum_y l(y)}{\sum_y u(y)} u(x) = u(x) + \frac{1 - \sum_y l(y)}{\sum_y u(y)} u(x) \\ & = & \displaystyle u(x) \cdot \frac{\sum_y u(y) + 1 - \sum_y l(y)}{\sum_y u(y)} > u(x)
\end{array}
\]
as $\frac{\sum_y u(y) + 1 - \sum_y l(y)}{\sum_y u(y)} > 1$, and $PraPl[(l,u)]$ falls outside the interval.

Another fundamental objection against $PraPl[(l,u)]$ arises when we compare it to $p[(l,u)]$. While the latter adds to the lower bound $l(x)$ an equal fraction of the uncertainty $u(x) - l(x)$ for all singletons (\ref{eq:form2}), $PraPl[(l,u)]$ adds to the lower bound $l(x)$ an equal fraction \emph{of the upper bound} $u(x)$, effectively counting twice the evidence represented by the lower bound $l(x)$ (\ref{eq:prapl-interval}). 

In the case of belief functions, this amounts to adding to the mass value $m(x)$ of $x$ yet another fraction of $m(x)$ itself, instead of distributing only the remaining mass $Pl (x) - m (x)$ allowed to be assigned to $x$.

\section{Geometric interpretation} \label{sec:geometric}

Despite having being defined as a representative for systems of probability intervals, the intersection probability was first identified in the context of the geometric analysis of belief measures \cite{cuzzolin2008geometric,cuzzolin01thesis,cuzzolin01space,cuzzolin14lap}. As we briefly recall here, its very name derives from its geometry in the space of belief functions, or \emph{belief space} \cite{cuzzolin2008geometric,cuzzolin02fsdk}.

\subsection{Geometry in the belief space}

\subsubsection{Belief space}

Given a frame of discernment $\Theta$, a belief function $Bel : 2^\Theta \rightarrow [0,1]$ is completely specified by its $N - 2$ belief values $\{ Bel(A), \emptyset \subsetneq A \subsetneq \Theta \}$, $N \doteq 2^{|\Theta|}$ (as $Bel(\emptyset) = 0$, $Bel(\Theta) = 1$ for all BFs), and can then be seen as a point of $\mathbb{R}^{N-2}$. 

The \emph{belief space} associated with $\Theta$ is the set of points $\mathcal{B} \subset \mathbb{R}^{N-2}$ which correspond to admissible belief functions \cite{cuzzolin2008geometric}. This turns out to be the simplex determined by the convex closure of all the categorical belief functions $Bel_A$, namely
\[
\mathcal{B} = Cl(Bel_A,\; \emptyset \subsetneq A \subseteq \Theta),
\]
($Bel_\Theta$ included). The \emph{faces} of a simplex are all the simplices generated by a subset of its vertices. The set of all the Bayesian belief functions on $\Theta$, $\mathcal{P} = Cl(Bel_x, x\in\Theta)$, is then a face of $\mathcal{B}$.

Plausibility functions, also determined by their $N - 2$ values $\{ Pl(A), \emptyset \subsetneq A \subsetneq \Theta \}$, can too be seen as points of $\mathbb{R}^{N-2}$. We call \emph{plausibility space} \cite{cuzzolin2008dual,cuzzolin08pricai-moebius} the corresponding region $\mathcal{PL}$ of $\mathbb{R}^{N-2}$, again, a simplex \cite{cuzzolin03isipta}.

\begin{figure}[ht!]
\begin{center}
\includegraphics[width=0.55\textwidth]{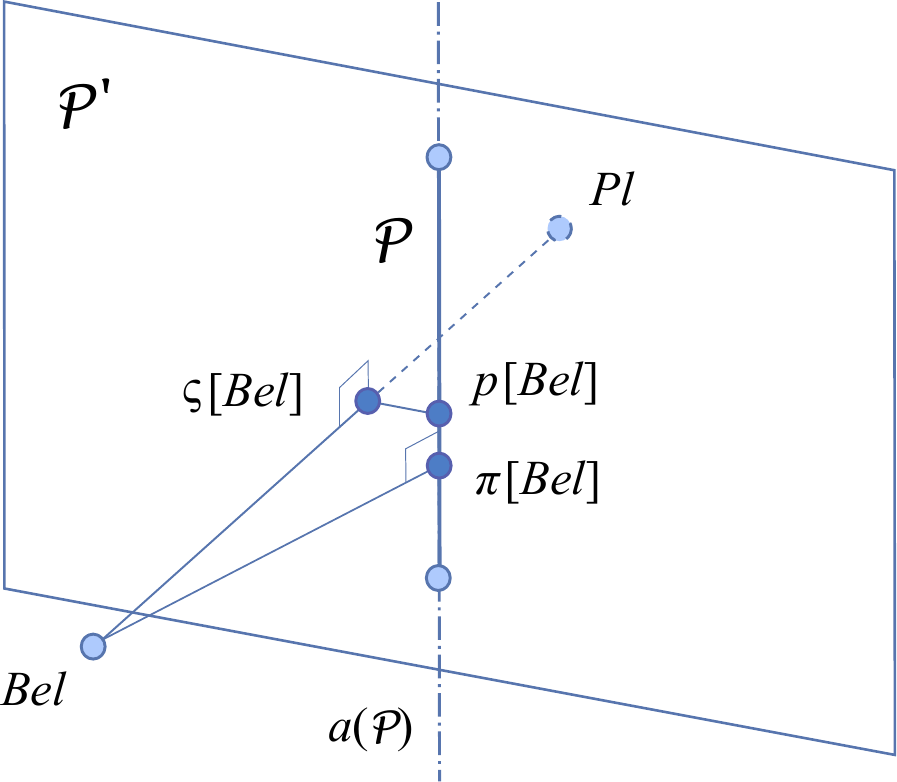}
\end{center}
\caption{The geometry of the line $a(Bel,Pl)$ and the relative locations of $p[Bel]$, $\varsigma[Bel]$ and of the \emph{orthogonal projection} $\pi[Bel]$ \cite{cuzzolin07smcb} for a frame of discernment of arbitrary size. Each belief function $Bel$ and the related plausibility function $Pl$ lie on opposite sides of the hyperplane $\mathcal{P}'$ of all Bayesian \emph{pseudo belief functions} \cite{cuzzolin07smcb}, which divides the space $\mathbb{R}^{N-2}$ of all such functions into two halves. The line $a(Bel,Pl)$ connecting them always intersects $\mathcal{P}'$, but not necessarily $a(\mathcal{P})$ (vertical line). This intersection $\varsigma[Bel]$ is naturally associated with a probability $p[Bel]$ (in general distinct from the orthogonal projection $\pi[Bel]$ of $Bel$ onto
$\mathcal{P}$), having the same components in the base $\{Bel_x, x \in \Theta\}$ of $a(\mathcal{P})$. $\mathcal{P}$ is a simplex (a segment in the figure) in $a(\mathcal{P})$: $\pi[Bel]$ and $p[Bel]$ are both `true' probabilities.
\label{fig:projections}}
\end{figure}

\subsubsection{Dual line}

The intersection probability for a belief function $Bel$ can be shown to be the unique probability distribution determined by the intersection of the \emph{dual line} joining a belief function $Bel$ and the related plausibility function $Pl$ with the region of Bayesian (pseudo) belief functions \cite{cuzzolin07smcb}.

It can be shown that this dual line $a(Bel,Pl)$ is always {orthogonal} to $\mathcal{P}$, but it {does not} intersect the probabilistic subspace in general. It does always intersect, however, the region of Bayesian pseudo belief functions (or `normalised sum functions') in a point
\begin{equation} \label{eq:varsigma}
\varsigma[Bel] \doteq Bel + \beta[Bel](Pl - Bel) = a(Bel,Pl) \cap \mathcal{P}'
\end{equation}
(where $\mathcal{P}'$ denotes the set of all Bayesian normalised sum functions in $\mathbb{R}^{N-2}$). $\varsigma[Bel]$ is a Bayesian pseudo BF but is not guaranteed to be a `proper' Bayesian {belief function}. 

But, of course, since $\sum_x m_{\varsigma[Bel]}(x) = 1$, $\varsigma[Bel]$ is naturally associated with a Bayesian belief function assigning an equal amount of mass to each singleton and 0 to each $A : |A|>1$. Namely, we can define the probability measure
\begin{equation} \label{eq:sigma}
p[Bel] \doteq \sum_{x\in\Theta} m_{\varsigma[Bel]}(x) Bel_x,
\end{equation}
where $m_{\varsigma[Bel]}(x)$ is given by 
\begin{equation} \label{eq:masses-varsigma}
m_{\varsigma[Bel]} (x) = m(x) + \beta[Bel] \sum_{A\supsetneq x} m(A).
\end{equation}

This Bayesian BF $p[Bel]$ is nothing but the intersection probability associated with the probability interval system induced by $Bel$.
The relative geometry of $\varsigma[Bel]$ and $p[Bel]$ with respect to the regions of Bayesian belief and normalised sum functions, respectively, is outlined in Fig. \ref{fig:projections}. 

\subsection{Justification for the name} \label{sec:justification}

The pseudo probability $\varsigma[Bel]$ (\ref{eq:varsigma}) provides the justification for the name `intersection probability'. It turns out that $p[Bel]$ and $\varsigma[Bel]$ are \emph{equivalent} when combined with a Bayesian belief function. 

We first need to recall the following result \cite{cuzzolin04smcb}.
\begin{proposition}
The orthogonal sum $Bel \oplus (\alpha_1 Bel_1 + \alpha_2 Bel_2)$ of a belief function $Bel$ and any affine combination $\alpha_1 Bel_1 + \alpha_2 Bel_2$, $\alpha_1 + \alpha_2 = 1$ of other two belief functions $Bel_1$, $Bel_2$ on the same frame reads as
\begin{equation} \label{eq:smcb}
Bel \oplus (\alpha_1 Bel_1 + \alpha_2 Bel_2) = \gamma_1 (Bel \oplus Bel_1) + \gamma_2 (Bel \oplus Bel_2),
\end{equation}
where
\[
\gamma_i = \frac{\alpha_i k(Bel, Bel_i)}{\alpha_1 k (Bel, Bel_1) + \alpha_2 k (Bel, Bel_2)}
\]
and $k(Bel, Bel_i)$ is the normalisation factor of the orthogonal sum $Bel \oplus Bel_i$.
\end{proposition}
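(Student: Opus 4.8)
The plan is to translate the identity into the language of basic probability assignments, where it reduces to a transparent bilinearity-plus-renormalisation computation. First I would exploit the fact that the M\"obius transform relating a (pseudo) belief function to its mass is linear, so that the affine combination $\alpha_1 Bel_1 + \alpha_2 Bel_2$ corresponds exactly to the affine combination of BPAs $\alpha_1 m_1 + \alpha_2 m_2$, where $m_i$ is the BPA of $Bel_i$. This is what lets me carry out the whole argument on the mass side; note that $\alpha_1 m_1 + \alpha_2 m_2$ need not be a genuine BPA (its values may be negative), but Dempster's rule extends to such normalised sum functions without difficulty, and equality of masses is equivalent to equality of the corresponding belief functions.

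Next I would use the bilinearity of the unnormalised conjunctive combination (\ref{eq:intersection}) in its second argument. Writing $m_\cap^{(i)}$ for the unnormalised conjunctive combination of $m$ (the BPA of $Bel$) with $m_i$, linearity yields
\[
\sum_{B \cap C = A} m(B)\big(\alpha_1 m_1(C) + \alpha_2 m_2(C)\big) = \alpha_1 m_\cap^{(1)}(A) + \alpha_2 m_\cap^{(2)}(A)
\]
for every $A \subseteq \Theta$, the case $A = \emptyset$ included. The crucial step is then computing the normalisation factor for the left-hand combination. Since $m_\cap^{(i)}(\emptyset) = 1 - k(Bel, Bel_i)$ by definition of the normalisation constant in (\ref{eq:dempster}), evaluating the displayed identity at $A = \emptyset$ and invoking $\alpha_1 + \alpha_2 = 1$ produces a normalisation factor equal to exactly $\alpha_1 k(Bel, Bel_1) + \alpha_2 k(Bel, Bel_2)$ --- precisely the denominator appearing in $\gamma_i$.

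To conclude I would divide the bilinearly expanded conjunctive masses by this constant and compare, term by term, with the mass function of $\gamma_1 (Bel \oplus Bel_1) + \gamma_2 (Bel \oplus Bel_2)$, which is $\gamma_1 m_\oplus^{(1)} + \gamma_2 m_\oplus^{(2)}$ with $m_\oplus^{(i)} = m_\cap^{(i)} / k(Bel, Bel_i)$. Substituting $\gamma_i = \alpha_i k(Bel, Bel_i) / \big(\alpha_1 k(Bel, Bel_1) + \alpha_2 k(Bel, Bel_2)\big)$, the factors $k(Bel, Bel_i)$ cancel and both sides collapse to the same expression, namely the common normalised combination. Equality of masses then gives the stated equality of belief functions.

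I do not expect a genuine obstacle: the argument is essentially bookkeeping once masses are chosen as the working representation. The only point requiring care is the handling of pseudo (sum) functions, since the affine combination may carry negative mass: I must confirm that the conjunctive rule and its normalisation remain meaningful, which amounts to assuming the orthogonal sums involved are defined, i.e. that $\alpha_1 k(Bel, Bel_1) + \alpha_2 k(Bel, Bel_2) \neq 0$. Under that mild non-degeneracy condition the identity holds for arbitrary affine --- not merely convex --- coefficients $\alpha_1, \alpha_2$.
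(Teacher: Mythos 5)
Your proof is correct. Note that the paper itself does not prove this Proposition --- it is recalled from an earlier work \cite{cuzzolin04smcb} --- but your argument is exactly the one the paper uses for its Lemma \ref{lem:commuta} (the conjunctive/disjunctive analogue): bilinearity of the unnormalised combination $m_\cap$ over masses, which is all that is needed there since those rules involve no normalisation. Your additional step --- evaluating the bilinear expansion at $A = \emptyset$ to show the combined normalisation factor equals $\alpha_1 k(Bel,Bel_1) + \alpha_2 k(Bel,Bel_2)$, and then cancelling the $k(Bel,Bel_i)$ against the $\gamma_i$ --- is precisely the extra bookkeeping that Dempster's rule requires, and your closing caveats (masses determine belief functions via M\"obius inversion; the non-degeneracy condition $\alpha_1 k(Bel,Bel_1) + \alpha_2 k(Bel,Bel_2) \neq 0$ when the coefficients are affine rather than convex) are the right ones.
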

Similar results can be proven for both conjunctive and disjunctive rules.
\begin{lemma} \label{lem:commuta}
Affine combination commutes with both conjunctive and disjunctive rules:
\[
\begin{array}{l}
Bel \ocap (\alpha_1 Bel_1 + \alpha_2 Bel_2) = \alpha_1 (Bel \ocap Bel_1) + \alpha_2 (Bel \ocap Bel_2), \\
Bel \ocup (\alpha_1 Bel_1 + \alpha_2 Bel_2) = \alpha_1 (Bel \ocup Bel_1) + \alpha_2 (Bel \ocup Bel_2)
\end{array}
\]
whenever $\alpha_1 + \alpha_2 = 1$.
\end{lemma}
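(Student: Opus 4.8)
The plan is to descend from the belief-value representation to the underlying basic probability assignments, where both combination rules are manifestly \emph{bilinear}, and then to exploit the fact that neither the conjunctive rule (\ref{eq:conjunctive}) nor the disjunctive rule (\ref{eq:disjunctive}) involves any renormalisation. It is precisely this absence of a normalising denominator that lets the affine coefficients $\alpha_1, \alpha_2$ survive intact, in contrast with the preceding Proposition, where Dempster normalisation distorts them into the weighted $\gamma_i$. First I would record that the zeta transform $m \mapsto Bel$ of (\ref{eq:belief-function}) is a linear bijection, so that the affine combination $\alpha_1 Bel_1 + \alpha_2 Bel_2$ in the belief space corresponds to the affine combination $m' \doteq \alpha_1 m_1 + \alpha_2 m_2$ of the associated masses, \emph{with the same coefficients}. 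The hypothesis $\alpha_1 + \alpha_2 = 1$ is used only to guarantee that $m'$ is itself a legitimate (pseudo-)mass input, i.e.\ that its total mass is $1$; the linear expansion below is valid regardless. Write $m$ for the mass of $Bel$.

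For the conjunctive rule I would substitute $m'$ into (\ref{eq:intersection}) and expand, noting that (\ref{eq:conjunctive}) retains the same bilinear sum for \emph{every} $A \subseteq \Theta$, the empty set included:
\[
\sum_{B \cap C = A} m(B)\, m'(C) = \alpha_1 \sum_{B \cap C = A} m(B)\, m_1(C) + \alpha_2 \sum_{B \cap C = A} m(B)\, m_2(C),
\]
which is exactly $\alpha_1\, m_{Bel \ocap Bel_1}(A) + \alpha_2\, m_{Bel \ocap Bel_2}(A)$. Translating back through the linear zeta transform yields the first claimed identity at the level of belief values. The identical manipulation with $\cup$ in place of $\cap$ in the index set of (\ref{eq:disjunctive}) settles the disjunctive case; alternatively, and more slickly, I would invoke the multiplicative property $Bel_1 \ocup Bel_2(A) = Bel_1(A)\,Bel_2(A)$ recalled just after (\ref{eq:disjunctive}) and compute directly on belief values,
\[
Bel \ocup (\alpha_1 Bel_1 + \alpha_2 Bel_2)(A) = Bel(A)\,\big(\alpha_1 Bel_1(A) + \alpha_2 Bel_2(A)\big),
\]
whose right-hand side distributes into $\alpha_1 (Bel \ocup Bel_1)(A) + \alpha_2 (Bel \ocup Bel_2)(A)$, using only linearity of belief evaluation on the affine combination.

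The one genuinely delicate point, and the step I would be most careful about, is the bookkeeping of the conflicting mass on $\emptyset$: the commutation holds cleanly exactly because the conjunctive rule leaves $m_\cap(\emptyset)$ in place rather than dividing it away. Had we used Dempster's rule (\ref{eq:dempster}) instead, the factor $1 - m_\cap(\emptyset)$ would itself depend on which belief function is being combined, the coefficients would no longer pass through unchanged, and we would recover the $\gamma_i$-weighted form of the Proposition rather than the plain $\alpha_i$. I would therefore state explicitly that bilinearity together with the absence of renormalisation \emph{is} the entire content of the lemma, so that no estimate or fixed-point argument is needed and the proof reduces to the two displayed expansions.
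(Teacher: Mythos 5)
Your proof is correct, and its core is the same as the paper's: both arguments descend to the basic probability assignments, expand the combination sum bilinearly in $m_{\alpha_1 Bel_1 + \alpha_2 Bel_2} = \alpha_1 m_1 + \alpha_2 m_2$, and observe that the coefficients pass through untouched precisely because neither rule renormalises. Two points of difference are worth noting. First, you make explicit something the paper uses tacitly, namely that the linearity of the M\"obius/zeta correspondence $m \leftrightarrow Bel$ is what licenses both the identification of the mass of the affine combination and the translation of the mass-level identity back to belief values; this is a genuine (if small) gap-filling improvement. Second, for the disjunctive half the paper merely asserts that ``an analogous proof holds,'' whereas you offer, besides the analogous mass-level expansion, an alternative one-line argument directly on belief values via the multiplicative property $Bel_1 \ocup Bel_2(A) = Bel_1(A)\,Bel_2(A)$; this is a legitimately different and slicker route for that case (and it remains valid here, since the multiplicative identity is a purely formal consequence of the definition and so holds for the pseudo belief function $\alpha_1 Bel_1 + \alpha_2 Bel_2$ even when one $\alpha_i$ is negative). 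Your closing remark contrasting this with Dempster's rule, where the data-dependent factor $1 - m_\cap(\emptyset)$ turns the $\alpha_i$ into the $\gamma_i$ of the preceding Proposition, correctly identifies why the lemma holds in this clean form.
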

\begin{proof} 
By definition (\ref{eq:disjunctive}), we have that $Bel \ocap (\alpha_1 Bel_1 + \alpha_2 Bel_2)$ has basic probability assignment:
\[
\begin{array}{lll}
& & m_{Bel \ocap (\alpha_1 Bel_1 + \alpha_2 Bel_2)}(A) 
\\ \\
& = & \displaystyle \sum_{B \cap C = A} m(B) m_{\alpha_1 Bel_1 + \alpha_2 Bel_2} (C) \\ & = & \displaystyle \sum_{B \cap C = A} m(B) \big ( \alpha_1 m_1(C) + \alpha_2 m_2(C) \big ) \\ & = & \displaystyle \alpha_1 \sum_{B \cap C = A} m(B) m_1 (C) + \alpha_2 \sum_{B \cap C = A} m (B) m_2 (C) 
\\ \\
& = & \alpha_1 m_{Bel \ocap Bel_1}(A) + \alpha_2 m_{Bel \ocap Bel_2}(A).
\end{array}
\]
An analogous proof holds for (\ref{eq:conjunctive}).
\end{proof} 

\begin{theorem} \label{the:ortpdb}
The combinations of $p[Bel]$ and $\varsigma[Bel]$ with any probability function $p \in \mathcal{P}$ coincide under both the Dempster (\ref{eq:dempster}) and conjunctive (\ref{eq:disjunctive}) rules:
\begin{equation} \label{eq:repre}
\begin{array}{ccc}
p[Bel] \oplus p = \varsigma[Bel] \oplus p, \quad & p[Bel] \cap p = \varsigma[Bel] \cap p, & \forall p\in\mathcal{P}.
\end{array}
\end{equation}
\end{theorem}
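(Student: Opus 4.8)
The plan is to reduce both identities to the behaviour of the combinations on the singletons, exploiting that a probability $p$ carries mass only on singletons. The structural fact I would record first is that, for any (pseudo-)belief function $g$, the conjunctive combination $g \cap p$ is supported on singletons and $\emptyset$ only: in \eqref{eq:intersection} the factor $m_p(C)$ is non-zero solely for $C=\{z\}$, and $B\cap\{z\}$ equals $\{z\}$ (if $z\in B$) or $\emptyset$, so $m_{g\cap p}(A)=0$ for every $A$ with $|A|\geq 2$ while $m_{g\cap p}(\{x\}) = p(x)\sum_{B\ni x} m_g(B)$. The Dempster renormalisation \eqref{eq:dempster} preserves this support. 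Hence $\varsigma[Bel]\cap p$, $p[Bel]\cap p$ and their $\oplus$-counterparts are all singleton-supported, and the theorem becomes an identity between singleton mass vectors, checked for every $p\in\mathcal{P}$.

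I would then evaluate the two sides from the formula above. Since $p[Bel]$ is Bayesian, only $B=\{x\}$ contributes and $p[Bel]\cap p$ has singleton mass $p(x)\,p[Bel](x) = p(x)\,m_{\varsigma[Bel]}(x)$. For $\varsigma[Bel]$, instead, the full sum survives, giving singleton mass $p(x)\sum_{B\ni x} m_{\varsigma[Bel]}(B)$, so the two sides differ exactly by the contribution of the mass $\varsigma[Bel]$ places on the non-singleton sets $A\supsetneq\{x\}$. To organise this comparison I would linearise via Lemma~\ref{lem:commuta}: writing $\varsigma[Bel]=(1-\beta[Bel])Bel+\beta[Bel]Pl$ as in \eqref{eq:varsigma} yields $\varsigma[Bel]\cap p = (1-\beta[Bel])(Bel\cap p)+\beta[Bel](Pl\cap p)$, with the Proposition supplying the renormalised weights and conflict factor in the Dempster case; the relative-belief/plausibility expansion \eqref{eq:relation} of $p[Bel]$ lets me expand $p[Bel]\cap p$ the same way.

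The main obstacle is precisely the final matching, i.e.\ controlling the higher-order mass of $\varsigma[Bel]$. For the conjunctive rule the two singleton vectors coincide if and only if $\sum_{A\supsetneq\{x\}} m_{\varsigma[Bel]}(A)=0$ for every $x$ (so that $\sum_{B\ni x}m_{\varsigma[Bel]}(B)$ collapses to the singleton mass $m_{\varsigma[Bel]}(x)=p[Bel](x)$), whereas for Dempster the weaker condition that these sums be proportional to $p[Bel](x)$ across $x$ suffices, since normalisation absorbs a common factor. Establishing this cancellation of the (in general signed) higher-order masses is where the real content lies, and I would attack it by an explicit M\"obius computation from \eqref{eq:masses-varsigma}, using the duality $\sum_{B\ni x} m_{Pl}(B)=Bel(x)$ between the commonality of the plausibility function and the belief of singletons to evaluate $Pl\cap p$, and then verifying that the affine reassembly of $Bel\cap p$ and $Pl\cap p$ reproduces $p[Bel]\cap p$ term by term. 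I expect this bookkeeping, rather than the abstract commutativity machinery, to be the crux of the argument.
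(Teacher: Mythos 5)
Your reduction is correct, and it is in fact more transparent than the paper's own route: any combination with a Bayesian $p$ is supported on singletons (and $\emptyset$), with $m_{g \,\ocap\, p}(\{x\}) = p(x)\sum_{B \ni x} m_g(B)$, so that (\ref{eq:repre}) for the conjunctive rule is equivalent to $\sum_{A \supsetneq \{x\}} m_{\varsigma[Bel]}(A) = 0$ for every $x$, and for Dempster's rule to proportionality of the two singleton vectors. The genuine gap is that the step on which everything hinges --- the cancellation of the higher-order masses of $\varsigma[Bel]$ --- is only announced, never carried out; and when you carry it out, it fails. From (\ref{eq:varsigma}), $\varsigma[Bel] = (1-\beta[Bel])\,Bel + \beta[Bel]\,Pl$, so by linearity of M\"obius inversion $m_{\varsigma[Bel]} = (1-\beta[Bel])\,m + \beta[Bel]\,\mu$, and (\ref{eq:diamond}) yields
\[
\sum_{B \ni x} m_{\varsigma[Bel]}(B) = (1-\beta[Bel])\,Pl(x) + \beta[Bel]\,m(x),
\]
whereas $m_{\varsigma[Bel]}(x) = p[Bel](x) = (1-\beta[Bel])\,m(x) + \beta[Bel]\,Pl(x)$: passing from the singleton masses of $\varsigma[Bel]$ (which define $p[Bel]$) to its superset sums (which are what either rule actually sees) swaps $\beta[Bel]$ with $1-\beta[Bel]$. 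Consequently
\[
\sum_{A \supsetneq \{x\}} m_{\varsigma[Bel]}(A) = \bigl(1-2\beta[Bel]\bigr)\bigl(Pl(x)-m(x)\bigr),
\]
which vanishes only if $\beta[Bel]=\tfrac{1}{2}$ or $Pl(x)=m(x)$.

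The Dempster half fares no better, because the two vectors are not even proportional in general: for the paper's own belief function (\ref{eq:example-pdb-bf}) with uniform $p$, the superset sums of $m_{\varsigma[Bel]}$ are $\bigl(\tfrac{8.7}{17},\tfrac{6}{17},\tfrac{7.4}{17}\bigr) \approx (0.512, 0.353, 0.435)$, while $p[Bel] = \bigl(\tfrac{6.6}{17},\tfrac{4.2}{17},\tfrac{6.2}{17}\bigr) \approx (0.388, 0.247, 0.365)$; the componentwise ratios $\approx 1.32,\,1.43,\,1.19$ are not constant, so $\varsigma[Bel]\oplus p \neq p[Bel]\oplus p$ there. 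In other words, your strategy, completed honestly, refutes (\ref{eq:repre}) rather than proving it, except in special cases ($\beta[Bel]=\tfrac{1}{2}$, e.g.\ 2-additive $Bel$, or $Bel$ Bayesian). This is not a weakness of your route relative to the paper's: the paper's proof, after the contributions of $p$ cancel, is left with the numerator $\beta[Bel]\,k(Bel,p)\,(Bel\oplus p) + (1-\beta[Bel])\,k(Pl,p)\,(Pl\oplus p)$ and declares it equal to (\ref{eq:arrow}); but (\ref{eq:arrow}) carries those weights the other way around ($\beta[Bel]$ on $Pl \oplus p$, $1-\beta[Bel]$ on $Bel \oplus p$), which is exactly the swap your computation exposes. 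So the obstacle you correctly flagged as ``the crux'' is real, and it cannot be overcome as the statement stands.
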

\begin{proof} 
Let us define by 
\begin{equation} \label{eq:bpla}
\mu(A) = \sum_{B\subseteq A} (-1)^{|A-B|} Pl(B) 
\end{equation}
the Moebius inverse of a plausibility function $Pl$ (see \cite{cuzzolin08pricai-moebius}). It can be proven that \cite{cuzzolin10ida}:
\begin{equation}\label{eq:diamond}
\sum_{A\supseteq \{x\}} \mu (A) = m (x).
\end{equation}
Now, applying Equation (\ref{eq:smcb}) to $\varsigma \oplus p$ yields
\begin{equation}\label{eq:arrow}
\begin{array}{lll}
\varsigma \oplus p 
& = & 
\big[ \beta[Bel] Pl + (1 - \beta[Bel]) Bel \big] \oplus p 
\\ \\
& = & 
\displaystyle \frac{\beta[Bel] k(p, Pl) Pl \oplus p + (1 - \beta[Bel]) k(p, Bel) Bel \oplus p}{\beta[Bel] k(p, Pl) + (1 - \beta[Bel]) k(p, Bel)}
\end{array}
\end{equation}
where
\[
\begin{array}{llll}
k(p, Pl) & = & \displaystyle \sum_{x\in\Theta} p(x) \left ( \sum_{A \supseteq \{x\}} \mu (A) \right ) & = \displaystyle \sum_{x \in\Theta} p(x) m (x), 
\\ \\
k(p, Bel) & = & \displaystyle \sum_{x \in \Theta} p(x) \left ( \sum_{A \supseteq \{x\}} m (A) \right ) & \displaystyle = \sum_{x \in \Theta} p(x) Pl(x).
\end{array}
\]
by Equation (\ref{eq:diamond}), and by the definition of the plausibility of singletons ($\sum_{A \supseteq \{x\}} m (A) = Pl (x)$).
On the other hand, recalling Equation (\ref{eq:relation}), we can write
\begin{equation} \label{eq:dualline}
p[Bel] = \beta[Bel] \bar{Pl} + (1 - \beta[Bel]) \bar{Bel}
\end{equation}
where we call the quantities
\begin{equation}\label{eq:pbarra}
\begin{array}{ccc}
\displaystyle \bar{Pl} \doteq \sum_{x \in \Theta} Pl (x) Bel_x 
& \quad & 
\displaystyle \bar{Bel} = \sum_{x \in \Theta} m (x) Bel_x
\end{array}
\end{equation}
\emph{plausibility of singletons} and \emph{belief of singletons}, respectively \cite{cuzzolin2008semantics,cuzzolin2008dual,cuzzolin10amai,CUZZOLIN2012786}, for sake of consistency of nomenclature. However, $\bar{Pl}$ is traditionally referred to as the \emph{contour function}.

Therefore when we apply (\ref{eq:smcb}) to $p[Bel] \oplus p$, instead, we get (by Equation (\ref{eq:dualline})):
\begin{equation}\label{eq:tmp}
\begin{array}{lll}
p[Bel] \oplus p & = & \big[ \beta[Bel] \bar{Pl} + (1 - \beta[Bel]) \bar{Bel} \big] \oplus p 
\\ \\
& = & 
\displaystyle 
\frac{\beta[Bel] k(p, \bar{Pl}) \bar{Pl} \oplus p + (1 - \beta[Bel]) k(p, \bar{Bel}) \bar{Bel} \oplus p}{\beta[Bel] k(p, \bar{Pl}) + (1 - \beta[Bel]) k(p, \bar{Bel})}.
\end{array}
\end{equation}
By definition of Dempster's combination (\ref{eq:dempster}):
\[
\begin{array}{c}
\displaystyle 
\bar{Pl} \oplus p = \frac{\displaystyle \sum_{x \in \Theta} Bel_x p(x) ( Pl (x) + 1 -
k_{Pl})}{k(p, \bar{Pl})}, 
\\ \\
\displaystyle 
\bar{Bel} \oplus p
= 
\frac{ \displaystyle \sum_{x \in \Theta} Bel_x p(x) (m(x) + 1 - k_{Bel})}{k(p, \bar{Bel})}.
\end{array}
\]
Hence
\[
\begin{array}{ll}
\displaystyle k(p, \bar{Pl}) \bar{Pl} \oplus p 
& 
\displaystyle = \sum_{x \in \Theta} Bel_x p(x) Pl (x) + (1 - k_{Pl}) \sum_{x \in \Theta} Bel_x p(x) 
\\ 
& = 
\displaystyle 
k(Bel, p) Bel \oplus p + (1 - k_{Pl}) p, 
\\ \\
\displaystyle 
k(p, \bar{Bel}) \bar{Bel} \oplus p 
& 
\displaystyle = \sum_{x \in \Theta} Bel_x p(x) m (x) + (1 - k_{Bel}) \sum_{x \in \Theta} Bel_x p(x) \\ 
& = 
\displaystyle 
k(Pl, p) Pl \oplus p + (1 - k_{Bel}) p,
\end{array}
\]
as:
\begin{enumerate}
\item $\sum_{x \in \Theta} Bel_x p(x) = p$;
\item
in the calculation of $Bel \oplus p$, each singleton $x$ is assigned mass 
\[
p(x) \sum_{A \supseteq \{x\}} m (A) = p(x) Pl (x);
\]
\item
in the calculation of $Pl \oplus p$, each singleton $x$ is assigned mass 
\[
p(x) \sum_{A \supseteq \{x\}} \mu(A) = p(x) m(x) 
\]
(again by Equation (\ref{eq:diamond})).
\end{enumerate}
After replacing these expressions in the numerator of (\ref{eq:tmp}) we can notice that, as
\[
\begin{array}{ccc}
\displaystyle \beta[Bel] = \frac{1 - k_{Bel}}{k_{Pl} - k_{Bel}}, & & \displaystyle 1 - \beta[Bel] =
\frac{k_{Pl} - 1}{k_{Pl} - k_{Bel}},
\end{array}
\]
the contributions of $p$ vanish, leaving expression (\ref{eq:arrow}) for $\varsigma \oplus p$.

As conjunctive rule and affine combination commute, and $k(Bel_1, Bel_2) = 1$ for each pair of pseudo belief functions $Bel_1, Bel_2$ under conjunctive combination, the proof holds for $\ocap$ too.
\end{proof} 

Even though $p[Bel]$ is \emph{not} the actual intersection $\varsigma[Bel]$ of the line $a(Bel, Pl)$ with the region of pseudo probabilities in the belief space, it behaves exactly like it when aggregated to a probability distribution.

Notice that Theorem \ref{the:ortpdb} is \emph{not} a simple consequence of Voorbraak's representation theorem \cite{cuzzolin2008dual}:
\[
Bel \oplus p = \tilde{Pl} \oplus p.
\]
It is indeed easy to prove that the relative plausibility of singletons of $\varsigma[Bel]$ \emph{is not} $p[Bel]$. 
As $\varsigma[Bel](A) = Bel (A) + \beta[Bel] [Pl - Bel](A)$ we have:
\[
\begin{array}{lll}
& &
Pl_{\varsigma[Bel]} (x) 
\\
& = & 
\displaystyle 
1 - \varsigma[Bel](\{x\}^c) = 1 - Bel (\{x\}^c) - \beta[Bel] \left [ Pl (\{ x \}^c) - Bel (\{ x \}^c) \right ] 
\\ 
& = & 
\displaystyle 
Pl (x) - \beta[Bel] \left [ 1 - Bel (x) - 1 + Pl (x) \right ] 
\\
& = & Pl (x) - \beta[Bel] \left [ Pl (x) - Bel (x) \right ] 
\\ 
& = & 
\beta[Bel] Bel (x) + (1 - \beta[Bel]) Pl (x) = \beta[Bel] m (x) + (1 - \beta[Bel]) Pl (x).
\end{array}
\]
Its normalization factor is
\[
\begin{array}{lll}
\sum_x Pl_{\varsigma[Bel]} (x) 
& = & 
\displaystyle
\sum_x Pl (x) - \beta[Bel]  \left [ \sum_x Pl (x) - \sum_x m (x) \right ] 
\\
& = &
 \displaystyle
\sum_x Pl (x) - 1 + \sum_x m (x).
\end{array}
\]
Clearly then
\[
\tilde{Pl}_{\varsigma[Bel]} (x) = \frac{Pl_{\varsigma[Bel]} (x)}{\sum_y Pl_{\varsigma[Bel]} (y)} = \frac{\beta[Bel] m (x) + (1 - \beta[Bel]) Pl (x)}{\sum_y Pl (y) - 1 + \sum_y m (y)}
\]
is different from $p[Bel](x) = \beta[Bel] Pl (x) + (1 - \beta[Bel]) m (x)$.

\section{Credal rationale} \label{sec:credal}

Probability interval systems admit a credal representation, which for intervals associated with belief functions is also strictly related to the credal set $\mathcal{P}[Bel]$ of all consistent probabilities \cite{cuzzolin2010credal, cuzzolin2021springer}.

By the definition (\ref{eq:consistent}) of $\mathcal{P}[Bel]$, it follows that the polytope of consistent probabilities can be decomposed into a number of component polytopes, namely
\begin{equation} \label{eq:decomposition}
\mathcal{P}[Bel] = \bigcap_{i=1}^{n-1} \mathcal{P}^i[Bel],
\end{equation}
where $\mathcal{P}^i[Bel]$ is the set of probabilities that satisfy the lower probability constraint {for size-$i$ events},
\begin{equation} \label{eq:polytopes-i}
\mathcal{P}^i[Bel] \doteq \Big \{ P \in \mathcal{P} : P(A) \geq Bel(A), \forall A : |A| = i \Big \}.
\end{equation}
Note that for $i=n$ the constraint is trivially satisfied by all probability measures $P$: $\mathcal{P}^n[Bel] = \mathcal{P}$. 

\subsection{Lower and upper simplices} \label{sec:lower-upper-simplices}

A simple and elegant geometric description of interval probability systems can be provided if, instead of considering the polytopes (\ref{eq:polytopes-i}), we focus on the credal sets
\[
T^i[Bel] \doteq \Big \{ P' \in \mathcal{P}' : P'(A) \geq Bel(A), \forall A : |A| = i \Big \}.
\]
Here $\mathcal{P}'$ denotes the set of all \emph{pseudo-probability measures} $P'$ on $\Theta$, whose distribution $p' : \Theta \rightarrow \mathbb{R}$ satisfy the normalisation constraint $\sum_{x\in\Theta} p'(x) = 1$ but not necessarily the non-negativity one -- there may exist an element $x$ such that $p'(x)<0$.
In particular, we focus here on the set of pseudo-probability measures which satisfy the lower constraint on singletons,
\begin{equation} \label{eq:lower-simplex}
T^1[Bel] \doteq \Big \{ p' \in \mathcal{P}' : p'(x) \geq Bel(x) \;\; \forall x \in \Theta \Big \},
\end{equation}
and the set $T^{n-1}[Bel]$ of pseudo-probability measures which satisfy the analogous constraint on events of size $n-1$,
\begin{equation} \label{eq:upper-simplex3}
\begin{array}{lll}
T^{n-1}[Bel] & \doteq & \Big \{ P' \in \mathcal{P}' : P'(A) \geq Bel(A) \;\; \forall A: |A| = n-1 \Big \} \\ \\
& = & \Big \{ P' \in \mathcal{P}' : P'(\{x\}^c) \geq Bel(\{x\}^c) \;\; \forall x \in \Theta \Big \} 
\\ \\
& = & \Big \{ p' \in \mathcal{P}' : p'(x) \leq Pl(x) \;\; \forall x \in \Theta \Big \},
\end{array}
\end{equation}
i.e., the set of pseudo-probabilities which satisfy the upper constraint on singletons. 

\subsection{Simplicial form} 

The extension to pseudo-probabilities allows us to prove that the credal sets (\ref{eq:lower-simplex}) and (\ref{eq:upper-simplex3}) have the form of simplices (see \cite{cuzzolin2010credal} and \cite{cuzzolin2021springer}, Chapter 16).
\begin{theorem} \label{the:simplices}
The credal set $T^1[Bel]$, or \emph{lower simplex}, can be written as
\begin{equation} \label{eq:t1-simplex}
T^1[Bel] = Cl(t^1_x[Bel], x \in \Theta),
\end{equation}
namely as the convex closure of the vertices
\begin{equation} \label{eq:t1-vertices}
t^1_x[Bel] = \sum_{y \neq x} m(y) Bel_y + \left ( 1 - \sum_{y \neq x} m(y) \right ) Bel_x.
\end{equation}
Dually, the \emph{upper simplex} $T^{n-1}[Bel]$ reads as the convex closure
\begin{equation} \label{eq:tn-1-simplex}
T^{n-1}[Bel] = Cl(t^{n-1}_x[Bel], x \in \Theta)
\end{equation}
of the vertices
\begin{equation} \label{eq:tn-1-vertices}
t^{n-1}_x[Bel] = \sum_{y \neq x} Pl(y) Bel_y + \left ( 1 - \sum_{y \neq x} Pl(y) \right ) Bel_x.
\end{equation}
\end{theorem}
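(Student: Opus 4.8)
The plan is to reduce both halves of the statement to a single elementary fact: for any constant $c \geq 0$, the set $\{ q : q(x) \geq 0 \; \forall x \in \Theta, \; \sum_{x} q(x) = c \}$ is a (possibly degenerate) simplex, obtained by rescaling the standard probability simplex by the factor $c$, whose $n$ extreme points are the vectors placing all the mass $c$ on a single element of $\Theta$. The key observation is that, once we pass to the pseudo-probability space $\mathcal{P}'$ (where the normalisation $\sum_x p'(x) = 1$ is retained but non-negativity is dropped), the defining constraints of both $T^1[Bel]$ and $T^{n-1}[Bel]$ are exactly of this form after an affine change of variable that subtracts off the lower (respectively, upper) singleton bounds.

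For the lower simplex I would set $q(x) \doteq p'(x) - m(x)$ for $p' \in \mathcal{P}'$. The normalisation $\sum_x p'(x) = 1$ then becomes $\sum_x q(x) = 1 - \sum_x m(x) = 1 - k_{Bel}$, while the constraints $p'(x) \geq Bel(x) = m(x)$ of (\ref{eq:lower-simplex}) become simply $q(x) \geq 0$. Since the singleton masses never exceed the total mass, $k_{Bel} \leq 1$, so the constant $c = 1 - k_{Bel}$ is non-negative and the $q$-region is a genuine scaled simplex. Its vertices are obtained by putting $q(x) = 1 - k_{Bel}$ and $q(y) = 0$ for $y \neq x$; translating back via $p'(x) = q(x) + m(x)$ yields $p'(y) = m(y)$ for $y \neq x$ and $p'(x) = m(x) + (1 - k_{Bel}) = 1 - \sum_{y \neq x} m(y)$, which is precisely the vertex $t^1_x[Bel]$ of (\ref{eq:t1-vertices}). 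Because an affine map sends the convex hull of a point set to the convex hull of the images, this establishes (\ref{eq:t1-simplex}).

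The upper simplex is treated dually. I would first invoke the third characterisation in (\ref{eq:upper-simplex3}) — which follows from $P'(\{x\}^c) = 1 - p'(x)$ for normalised pseudo-probabilities together with $Bel(\{x\}^c) = 1 - Pl(x)$ — so that the region becomes $\{ p' \in \mathcal{P}' : p'(x) \leq Pl(x) \; \forall x \}$. Here I would set $q(x) \doteq Pl(x) - p'(x) \geq 0$, giving $\sum_x q(x) = \sum_x Pl(x) - 1 = k_{Pl} - 1$. Since $\sum_x Pl(x) = \sum_A |A|\, m(A) \geq \sum_A m(A) = 1$, the constant $c = k_{Pl} - 1$ is again non-negative, and the same simplex argument applies. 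The vertex with $q(x) = k_{Pl} - 1$ and $q(y) = 0$ for $y \neq x$ corresponds to $p'(y) = Pl(y)$ for $y \neq x$ and $p'(x) = Pl(x) - (k_{Pl} - 1) = 1 - \sum_{y \neq x} Pl(y)$, i.e.\ the claimed vertex $t^{n-1}_x[Bel]$ of (\ref{eq:tn-1-vertices}), which yields (\ref{eq:tn-1-simplex}).

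The computations are routine; the step I would treat most carefully is the passage from ``scaled standard simplex'' to the explicit convex-closure description, namely checking both that every claimed vertex lies in the credal set and that no further extreme points can appear. Working in the pseudo-probability space $\mathcal{P}'$ rather than in $\mathcal{P}$ is exactly what makes this clean: dropping the non-negativity constraint $p'(x) \geq 0$ removes the additional facets that would otherwise cut the simplex and destroy its simplicial structure. The remaining subtlety is the sign condition $c \geq 0$ in each case — the inequalities $k_{Bel} \leq 1$ and $k_{Pl} \geq 1$ — which guarantees that the regions are non-empty (possibly degenerate) simplices rather than empty or improperly oriented sets; both follow directly from the basic properties of the BPA $m$.
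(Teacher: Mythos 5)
Your proof is correct, and it takes a genuinely different --- and more self-contained --- route than the paper. The paper's appendix does not actually prove the set equalities at all: it only proves Lemma \ref{lem:affine-tx}, the affine independence of the candidate vertices $t^1_x[Bel]$ (so that their convex closure is a genuine $(n-1)$-dimensional simplex), and leaves the identification of that simplex with $T^1[Bel]$ to the cited prior work (\cite{cuzzolin2010credal}; \cite{cuzzolin2021springer}, Chapter 16). Your argument instead proves both inclusions in one shot: the translation $q(x) = p'(x) - m(x)$ (respectively the reflection $q(x) = Pl(x) - p'(x)$) is an affine bijection carrying $T^1[Bel]$ (respectively $T^{n-1}[Bel]$) onto the rescaled standard simplex $\{ q : q(x) \geq 0 \; \forall x, \; \sum_x q(x) = c \}$ with $c = 1 - k_{Bel} \geq 0$ (respectively $c = k_{Pl} - 1 \geq 0$), and affine bijections preserve convex hulls and extreme points, so the vertices pull back exactly to (\ref{eq:t1-vertices}) and (\ref{eq:tn-1-vertices}). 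What your approach buys is an elementary, complete proof that also makes transparent why the extension to pseudo-probabilities $\mathcal{P}'$ is essential (retaining $p'(x) \geq 0$ would add facets and destroy the simplicial structure) and isolates the only sign conditions needed, $k_{Bel} \leq 1$ and $k_{Pl} \geq 1$. What the paper's lemma buys is non-degeneracy, i.e.\ dimension exactly $n-1$, which your picture recovers whenever $c > 0$; in the degenerate case $k_{Bel} = 1$ all the $t^1_x[Bel]$ collapse to a single point, so your ``possibly degenerate'' caveat is the right call (and is in fact a case the paper's affine-independence argument silently excludes, since its final step divides by $1 - k_{Bel}$). The only step worth making explicit in your write-up is that identifying a Bayesian pseudo-belief function $\sum_x p'(x) Bel_x$ with its distribution vector $p'$ is itself an affine isomorphism, which is what licenses transporting the convex-hull computation from coordinate space back to the belief space.
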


By (\ref{eq:t1-vertices}), each vertex $t^1_x[Bel]$ of the lower simplex is a pseudo-probability that adds the total mass $1 - k_{Bel}$ of non-singletons to that of the element $x$, leaving all the others unchanged:
\[
m_{t^1_x[Bel]} (x) = m(x) + 1 - k_{Bel}, 
\quad 
m_{t^1_x[Bel]} (y) = m(y) \; \forall y \neq x. 
\]
In fact, as $m_{t^1_x[Bel]} (z) \geq 0$ for all $z \in \Theta$ and for all $x \in \Theta$ (all $t^1_x[Bel]$ are actual probabilities), we have that
\begin{equation} \label{eq:equality-credal}
T^1[Bel] = \mathcal{P}^1[Bel],
\end{equation}
and $T^1[Bel]$ is completely included in the probability simplex.

On the other hand, the vertices (\ref{eq:tn-1-vertices}) of the upper simplex are not guaranteed to be valid probabilities. 
\\
Each vertex $t^{n-1}_x[Bel]$ assigns to each element of $\Theta$ different from $x$ its plausibility $Pl(y) = pl(y)$, while it subtracts from $Pl(x)$ the `excess' plausibility $k_{Pl} - 1$:
\[
\begin{array}{llll}
m_{t^{n-1}_x[Bel]} (x) & = & Pl(x) + ( 1 - k_{Pl} ), &
\\ 
m_{t^{n-1}_x[Bel]} (y) & = & Pl(y) & \forall y \neq x.
\end{array}
\]
Now, as $1 - k_{Pl}$ can be a negative quantity, $m_{t^{n-1}_x[Bel]} (x)$ can also be negative and $t^{n-1}_x[Bel]$ is not guaranteed to be a `true' probability. 

We will have confirmation of this fact in the example in Section \ref{sec:ternary-credal}. 

\subsection{Lower and upper simplices and probability intervals} 

By comparing (\ref{eq:credal-interval}), (\ref{eq:lower-simplex}) and (\ref{eq:upper-simplex3}), it is clear that the credal set $\mathcal{P}[(l,u)]$ associated with a set of probability intervals $(l,u)$ is nothing but the intersection 
\[
\mathcal{P}[(l,u)] = T[l] \cap T[u]
\]
of the lower and upper simplices (\ref{upper-lower-lu}) associated with its lower- and upper-bound constraints, respectively:
\begin{equation} \label{upper-lower-lu}
T[l] \doteq \Big \{ p : p(x) \geq l(x) \; \forall x \in \Theta \Big \}, 
\quad
T[u] \doteq \Big \{ p : p(x) \leq u(x) \;\; \forall x \in \Theta \Big \}.
\end{equation}
In particular, when these lower and upper bounds are those enforced by a pair of belief and plausibility measures on the singleton elements of a frame of discernment, $l(x) = Bel(x)$ and $u(x) = Pl(x)$, we get
\[
\mathcal{P}[(Bel,Pl)] = T^1[Bel] \cap T^{n-1}[Bel]. 
\]

\subsection{Ternary case} \label{sec:ternary-credal} 

Let us consider the case of a frame of cardinality 3, $\Theta = \{x,y,z\}$, and a belief function $Bel$ with mass assignment
\begin{equation} \label{eq:example-c}
\begin{array}{ccc}
m(x) = 0.2, & m(y) = 0.1, & m(z) = 0.3, 
\\ \\
m(\{x,y\}) = 0.1, \quad & m(\{y,z\}) = 0.2, \quad & m(\Theta) = 0.1.
\end{array}
\end{equation}
Figure \ref{fig:intersection-focus} illustrates the geometry of the related credal set $\mathcal{P}[Bel]$ in the simplex, denoted by $Cl(P_x,P_y,P_z)$, of all the probability measures on $\Theta$. 

It is well known \cite{Chateauneuf89,cuzzolin08-credal} that the credal set associated with a belief function is a polytope whose vertices are associated with all possible permutations of singletons.
\begin{proposition} \label{pro:vertices}
Given a belief function $Bel:2^\Theta \rightarrow [0,1]$, the simplex $\mathcal{P}[Bel]$ of the probability measures consistent with $Bel$ is the polytope
\[
\mathcal{P}[Bel] = Cl(P^\rho[Bel] \;\forall \rho),
\]
where $\rho$ is any permutation $\{ x_{\rho(1)}, \ldots, x_{\rho(n)} \}$ of the singletons of $\Theta$, and the vertex $P^\rho[Bel]$ is the Bayesian belief function such that
\begin{equation} 
P^\rho[Bel](x_{\rho(i)}) = \sum_{A \ni x_\rho(i), A \not\ni x_\rho(j) \; \forall j<i} m(A).
\end{equation}
\end{proposition}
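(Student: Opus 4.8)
The plan is to prove the two inclusions $Cl(P^\rho[Bel]\;\forall\rho)\subseteq\mathcal{P}[Bel]$ and $\mathcal{P}[Bel]\subseteq Cl(P^\rho[Bel]\;\forall\rho)$ separately, the first being elementary and the second being the crux.

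First I would read the defining formula combinatorially: it assigns the whole mass $m(A)$ of each focal element $A$ to the \emph{single} singleton of $A$ that comes first in the order $\rho$ (indeed $x_{\rho(i)}$ collects $m(A)$ exactly when $A\ni x_{\rho(i)}$ and $A$ avoids every $\rho$-earlier singleton). From this, $P^\rho[Bel](x)\geq 0$ because $m\geq 0$, and $\sum_x P^\rho[Bel](x)=\sum_{A\neq\emptyset}m(A)=1$ since each nonempty $A$ has a unique $\rho$-first element and $m(\emptyset)=0$; so $P^\rho[Bel]$ is a genuine probability. For an arbitrary event $A$, summing over its singletons gives $P^\rho[Bel](A)=\sum\{m(B):\text{the }\rho\text{-first element of }B\text{ lies in }A\}$, and every $B\subseteq A$ contributes (its first element lies in $A$), whence $P^\rho[Bel](A)\geq\sum_{B\subseteq A}m(B)=Bel(A)$. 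Thus each $P^\rho[Bel]\in\mathcal{P}[Bel]$, and by convexity of $\mathcal{P}[Bel]$ the hull of the $P^\rho[Bel]$ is contained in it.

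The reverse inclusion is the substantial step, and I would handle it by comparing support functions rather than decomposing an arbitrary $P$ by hand: two compact convex polytopes coincide once one contains the other and they have equal support in every direction. Fix a linear objective $c:\Theta\to\mathbb{R}$, relabel so that $c(x_1)\geq\cdots\geq c(x_n)$, and write $S_i=\{x_1,\ldots,x_i\}$. Summation by parts gives, for every $P\in\mathcal{P}[Bel]$,
\[
\sum_i c(x_i)\,P(x_i)=\sum_{i=1}^{n-1}\big(c(x_i)-c(x_{i+1})\big)\,P(S_i)+c(x_n),
\]
using $P(\Theta)=1$. Because the increments $c(x_i)-c(x_{i+1})$ are non-negative and $P(S_i)\geq Bel(S_i)$, the right-hand side is at least $\sum_{i=1}^{n-1}\big(c(x_i)-c(x_{i+1})\big)Bel(S_i)+c(x_n)$, and reversing the summation by parts identifies this lower bound with $\sum_i c(x_i)\,P^{\rho}[Bel](x_i)$ for the permutation ordering the singletons by decreasing $c$. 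Hence $\min_{P\in\mathcal{P}[Bel]} c\cdot P$ is attained at a marginal vector; the support functions of $\mathcal{P}[Bel]$ and of $Cl(P^\rho[Bel]\;\forall\rho)$ therefore agree in the direction $-c$, and since $c$ is arbitrary the two polytopes coincide.

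One bookkeeping point I would flag explicitly: the clean Abel argument naturally produces the `incremental' marginal vector $x_i\mapsto Bel(S_i)-Bel(S_{i-1})$, which assigns $m(A)$ to the \emph{last} element of $A$ in the chosen order, whereas the statement assigns it to the $\rho$-\emph{first} element. Reversing a permutation swaps `first' for `last', so as $\rho$ ranges over all permutations these two families are literally the same set; making this identification explicit ensures the greedy optimum produced above is one of the $P^\rho[Bel]$ of the statement. The main obstacle is genuinely this second inclusion — it is the assertion that for a totally monotone (hence supermodular) $Bel$ the credal set equals the Weber set — and the summation-by-parts/greedy step is exactly where non-negativity of $m$, equivalently monotonicity of $Bel$ along chains, is indispensable.
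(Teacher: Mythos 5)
Your proof is correct, but there is nothing in the paper to compare it against: the paper does not prove Proposition \ref{pro:vertices} at all, stating it as a known result with citations (Chateauneuf, and earlier work on the credal geometry of belief functions). What you have reconstructed is essentially the classical argument that for a totally monotone (hence supermodular) capacity the core coincides with the convex hull of the marginal vectors. Both of your inclusions are sound: the easy one because each $P^\rho[Bel]$ is a probability (each nonempty $A$ has a unique $\rho$-first element, so the masses sum to one) dominating $Bel$ (every $B\subseteq A$ contributes to $P^\rho[Bel](A)$); the substantial one via the Abel-summation/greedy bound, which shows that every linear functional attains its minimum over $\mathcal{P}[Bel]$ at an incremental vector $x_i\mapsto Bel(S_i)-Bel(S_{i-1})$, and then concludes by equality of support functions of nested compact convex sets. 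Your bookkeeping flag is also handled correctly: the incremental vector allocates $m(A)$ to the $\rho$-\emph{last} element of $A$, which is exactly $P^{\rho'}[Bel]$ for the reversed permutation $\rho'$, so the greedy optimum is indeed one of the statement's vertices.

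One correction to your closing remark, which does not affect the proof but misstates where the hypothesis enters. Non-negativity of $m$ is \emph{not} equivalent to monotonicity of $Bel$ along chains, and it is not used in the summation-by-parts step: that step needs only the constraints $P(S_i)\geq Bel(S_i)$ and $P(\Theta)=1$, and its conclusion (credal set contained in the Weber set) holds for an arbitrary capacity. Where $m\geq 0$ (total monotonicity, or at least supermodularity) is genuinely indispensable is your \emph{first} inclusion: it is what guarantees that each marginal vector $P^\rho[Bel]$ is a probability dominating $Bel$, hence lies in $\mathcal{P}[Bel]$, so that the greedy lower bound is attained \emph{inside} the credal set and the two support functions can be identified. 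For a general capacity the core can be a proper subset of the Weber set, or empty, precisely because that first inclusion fails.
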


By Proposition \ref{pro:vertices}, for the example belief function (\ref{eq:example-c}), $\mathcal{P}[Bel]$ has as vertices the probabilities $P^{\rho^1}, P^{\rho^2}, P^{\rho^3}$, $P^{\rho^4}$, $P^{\rho^5}[Bel]$ identified by purple squares in Fig. \ref{fig:intersection-focus}, namely
\begin{equation} \label{eq:example-rho-credal}
\begin{array}{llll}
\rho^1 = (x,y,z): & \quad P^{\rho^1}[Bel](x) = .4, & P^{\rho^1}[Bel](y) = .3, & P^{\rho^1}[Bel](z) = .3,
\\ 
\rho^2 = (x,z,y): & \quad P^{\rho^2}[Bel](x) = .4, & P^{\rho^2}[Bel](y) = .1, & P^{\rho^2}[Bel](z) = .5, 
\\ 
\rho^3 = (y,x,z): & \quad P^{\rho^3}[Bel](x) = .2, & P^{\rho^3}[Bel](y) = .5, & P^{\rho^3}[Bel](z) = .3, 
\\ 
\rho^4 = (z,x,y): & \quad P^{\rho^4}[Bel](x) = .3, & P^{\rho^4}[Bel](y) = .1, & P^{\rho^4}[Bel](z) = .6, 
\\ 
\rho^5 = (z,y,x): & \quad P^{\rho^5}[Bel](x) = .2, & P^{\rho^5}[Bel](y) = .2, & P^{\rho^5}[Bel](z) = .6
\end{array}
\end{equation}
(as the permutations $( y,x,z )$ and $( y,z,x )$ yield the same probability distribution). 

\begin{figure}[ht!] 
\centering
\includegraphics[width = 10cm]{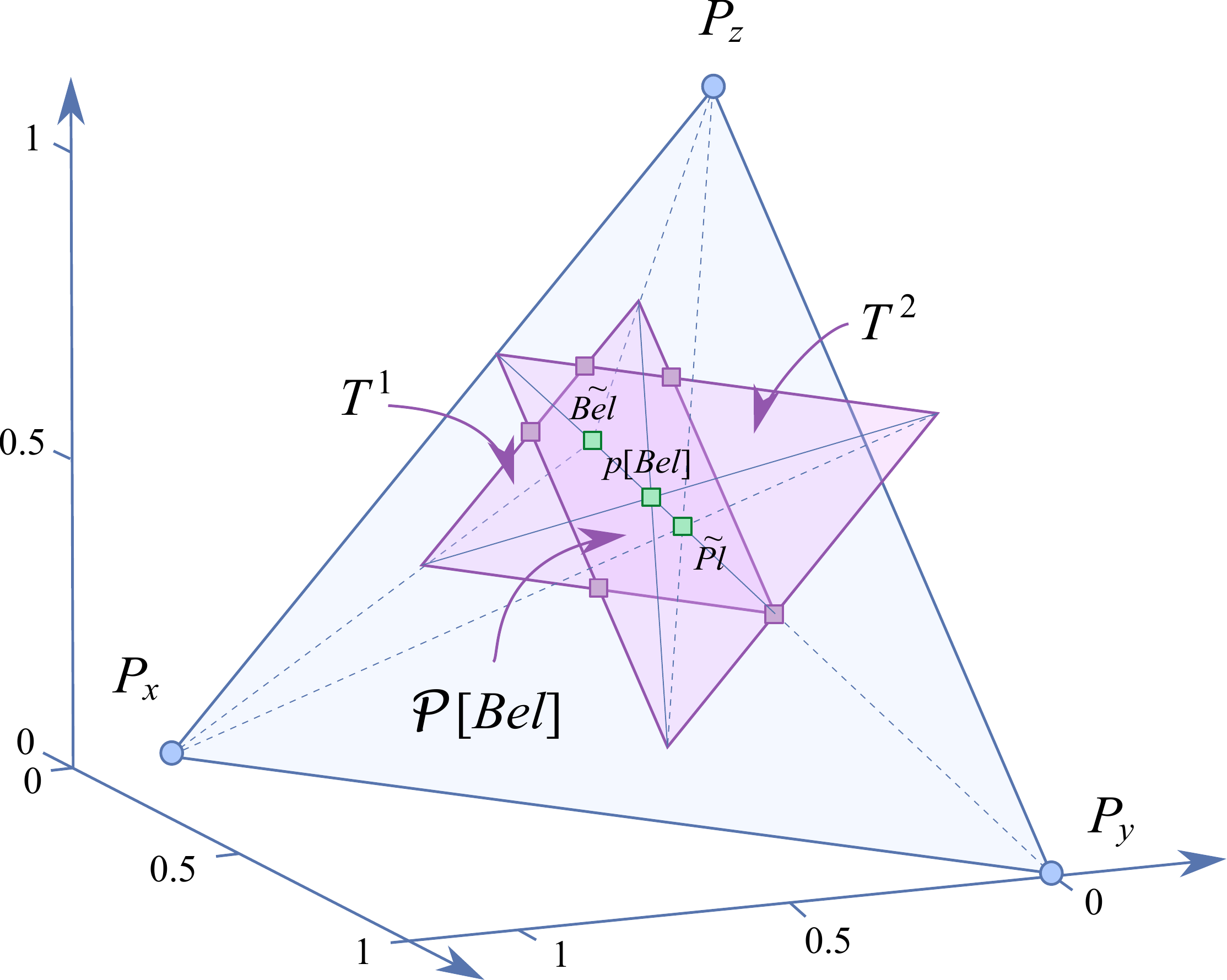}
\caption{The polytope $\mathcal{P}[Bel]$ of probabilities consistent with the belief function $Bel$ (\ref{eq:example-c}) defined on $\{x,y,z\}$ is shown in a darker shade of purple, in the probability simplex $\mathcal{P} = Cl(P_x,P_y,P_z)$ (shown in light blue). Its vertices (purple squares) are given in (\ref{eq:example-rho-credal}). The intersection probability, relative belief and relative plausibility of singletons (green squares) are the foci of the pairs of simplices $\{ T^1[Bel],T^2[Bel] \}$, $\{ T^1[Bel],\mathcal{P} \}$ and $\{ \mathcal{P},T^2[Bel] \}$, respectively. In the ternary case, $T^1[Bel]$ and $T^2[Bel]$ (light purple) are regular triangles. Geometrically, their focus is the intersection of the lines joining their corresponding vertices (dashed lines for $\{T^1[Bel],\mathcal{P}\}$,$\{\mathcal{P},T^2[Bel]\}$; solid lines for $\{T^1[Bel],T^2[Bel]\}$). \label{fig:intersection-focus}}
\end{figure}

We can notice a number of relevant facts:
\begin{enumerate}
\item
As pointed out above, $\mathcal{P}[Bel]$ (the polygon delimited by the purple squares) is the intersection of the two triangles (two-dimensional simplices) $T^1[Bel]$ and $T^2[Bel]$.
\item
The relative belief of singletons,
\[
\tilde{Bel}(x) = \frac{.2}{.6} = \frac{1}{3}, 
\quad 
\tilde{Bel}(y) = \frac{.1}{.6} = \frac{1}{6}, 
\quad
\tilde{Bel}(z) = \frac{.3}{.6} = \frac{1}{2},
\]
is the {intersection of the lines joining the corresponding vertices of the probability simplex $\mathcal{P}$ and the lower simplex $T^1[Bel]$}.
\item
The relative plausibility of singletons,
\[
\begin{array}{lll}
\tilde{Pl}(x) & = & \displaystyle \frac{m(x) + m(\{x,y\}) + m(\Theta)}{k_{Pl} - k_{Bel}} = \frac{.4}{.4+.5+.6} = \frac{4}{15}, 
\\ \\
\tilde{Pl}(y) & = & \displaystyle \frac{.5}{.4+.5+.6} = \frac{1}{3}, 
\quad 
\tilde{Pl}(z) = \frac{2}{5},
\end{array}
\]
is the {intersection of the lines joining the corresponding vertices of the probability simplex $\mathcal{P}$ and the upper simplex $T^2[Bel]$}.
\item
Finally, the intersection probability,
\[
\begin{array}{lll}
p[Bel](x) & = & \displaystyle
m(x) + \beta[Bel] \big ( m(\{x,y\}) + m(\Theta) \big ) = .2 + \frac{.4 \ast 0.2}{1.5 - 0.4} = .27,
\\ 
p[Bel](y) & = & \displaystyle
 .1 + \frac{.4}{1.1} 0.4 = .245, \quad p[Bel](z) = .485,
\end{array}
\]
is the unique intersection of the lines joining the corresponding vertices of the upper and lower simplices $T^2[Bel]$ and $T^1[Bel]$.
\end{enumerate}

Although Fig. \ref{fig:intersection-focus} suggests that $\tilde{Bel}$, $\tilde{Pl}$ and $p[Bel]$ might be consistent with $Bel$, this is a mere artefact of this ternary example, for it can be proved (\cite{cuzzolin2021springer}, Chapter 12) that neither the relative belief of singletons nor the relative plausibility of singletons necessarily belongs to the credal set $\mathcal{P}[Bel]$. 
Indeed, the point here is that the epistemic transforms $\tilde{Bel}$, $\tilde{Pl}$, $p[Bel]$ are instead {consistent with the interval probability system $\mathcal{P}[(Bel,Pl)]$ associated with the original belief function $Bel$}:
\[
\tilde{Bel}, \tilde{Pl}, p[Bel] \in \mathcal{P}[(Bel,Pl)] = T^1[Bel] \cap T^{n-1}[Bel].
\]

Their geometric behaviour, as described by facts 2, 3 and 4, holds in the general case as well.

\subsection{Focus of a pair of simplices} \label{sec:focus-simplices} 

\begin{definition} \label{def:focus}
Consider two simplices in $\mathbb{R}^{n-1}$, denoted by $S = Cl(s_1, \ldots, s_n)$ and $T = Cl(t_1, \ldots, t_n)$, with the same number of vertices. If
there exists a permutation $\rho$ of $\{1, \ldots, n\}$ such that the intersection
\begin{equation} \label{eq:lines}
\bigcap_{i=1}^n a(s_i,t_{\rho(i)})
\end{equation}
of the lines joining corresponding vertices of the two simplices
exists and is unique, then $p = f(S,T) \doteq \bigcap_{i=1}^n a(s_i,t_{\rho(i)})$ is termed the \emph{focus} of the two simplices $S$ and $T$.
\end{definition} 

Not all pairs of simplices admit a focus. For instance, the pair of simplices (triangles) $S = Cl(s_1=[2,2]',s_2=[5,2]', s_3 = [3,5]')$ and $T = Cl( t_1 = [3,1]', t_2 = [5,6]', t_3 = [2,6]' )$ in $\mathbb{R}^2$ does not admit a focus, as no matter what permutation of the order of the vertices we consider, the lines joining corresponding vertices do not intersect.
Geometrically, all pairs of simplices admitting a focus can be constructed by considering all possible stars of lines, and all possible pairs of points on each line of each star as pairs of corresponding vertices of the two simplices. 

\begin{definition} \label{def:focus-special}
We call a focus \emph{special} if the affine coordinates of $f(S,T)$ on the lines $a(s_i,t_{\rho(i)})$ all coincide, namely $\exists \alpha \in \mathbb{R}$ such that
\begin{equation} \label{eq:condition}
f(S,T) = \alpha s_i + (1 - \alpha) t_{\rho(i)} \quad \forall i = 1, \ldots, n.
\end{equation}
\end{definition}

Not all foci are special. As an example, the simplices $S = Cl(s_1=[-2,-2]', s_2 = [0,3]', s_3 = [1,0]')$ and $T = Cl( t_1 = [-1,0]', t_2 = [0,-1]', t_3 = [2,2]' )$ in $\mathbb{R}^2$ admit a focus, in particular for the permutation $\rho(1)=3, \rho(2)=2, \rho(3)=1$ (i.e., the lines $a(s_1,t_3)$, $a(s_2,t_2)$ and $a(s_3,t_1)$ intersect in $\mathbf{0} = [0,0]'$).
However, the focus $f(S,T) = [0,0]'$ is not special, as its simplicial coordinates in the three lines are $\alpha = \frac{1}{2}$, $\alpha = \frac{1}{4}$ and $\alpha = \frac{1}{2}$, respectively.

On the other hand, given any pair of simplices in $\mathbb{R}^{n-1}$ $S = Cl(s_1, \ldots, s_n)$ and $T = Cl(t_1, \ldots, t_n)$, for any permutation $\rho$ of the indices there always exists a linear variety of points which have the same affine coordinates in both simplices,
\begin{equation} \label{eq:focus}
\left \{ p \in \mathbb{R}^{n-1} \Bigg | p = \sum_{i = 1}^n \alpha_i s_i = \sum_{j = 1}^n \alpha_j t_{\rho(j)}, 
\;
\sum_{i = 1}^n \alpha_i = 1.
\right \}
\end{equation} 
Indeed, the conditions on the right-hand side of (\ref{eq:focus}) amount to a linear system of $n$ equations in $n$ unknowns ($\alpha_i$, $i=1, \ldots, n$). Thus, there exists a linear variety of solutions to such a system, whose dimension depends on the rank of the matrix of constraints in (\ref{eq:focus}).

It is rather easy to prove the following theorem.
\begin{theorem} \label{the:focus-intersection}
Any special focus $f(S,T)$ of a pair of simplices $S,T$ has the same affine coordinates in both simplices, i.e.,
\[
f(S,T) =
\sum_{i = 1}^n \alpha_i s_i = \sum_{j = 1}^n \alpha_j t_{\rho(j)}, 
\quad
\sum_{i = 1}^n \alpha_i = 1,
\]
where $\rho$ is the permutation of indices for which the intersection of the lines $a(s_i,t_{\rho(i)})$, $i=1, \ldots, n$ exists.
\end{theorem}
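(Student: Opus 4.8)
The plan is to exploit the single common coordinate $\alpha$ guaranteed by Definition \ref{def:focus-special}: it lets me solve explicitly for each vertex $t_{\rho(i)}$ in terms of $f(S,T)$ and $s_i$, which I then substitute into the affine expansion of $f(S,T)$ over the vertices of $S$. The structural fact underpinning everything is that $S$ and $T$ are genuine simplices in $\mathbb{R}^{n-1}$ with $n$ vertices, so their vertices are affinely independent and affinely span the whole of $\mathbb{R}^{n-1}$; consequently every point of $\mathbb{R}^{n-1}$ — in particular $f(S,T)$ — has a \emph{unique} set of affine coordinates with respect to either vertex set. This uniqueness is what I will invoke to finish.

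First I would record a degeneracy observation. The scalar $\alpha$ of (\ref{eq:condition}) cannot equal $1$: if it did, then $f(S,T) = s_i$ for every $i$, forcing $s_1 = \cdots = s_n$ and contradicting the affine independence of the vertices of $S$ (for $n \ge 2$). Hence $1 - \alpha \neq 0$, and I may rearrange (\ref{eq:condition}) into
\[
t_{\rho(i)} = \frac{f(S,T) - \alpha\, s_i}{1 - \alpha}, \qquad i = 1, \ldots, n.
\]

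Next, let $(\alpha_i)_{i=1}^n$ be the unique affine coordinates of $f(S,T)$ in $S$, so that $f(S,T) = \sum_{i} \alpha_i s_i$ with $\sum_i \alpha_i = 1$. Forming the \emph{same} affine combination over the vertices of $T$ and inserting the displayed expression for $t_{\rho(i)}$ gives
\[
\sum_{i=1}^n \alpha_i\, t_{\rho(i)} = \frac{1}{1-\alpha}\left( f(S,T)\sum_{i=1}^n \alpha_i - \alpha \sum_{i=1}^n \alpha_i s_i \right) = \frac{f(S,T) - \alpha\, f(S,T)}{1-\alpha} = f(S,T),
\]
where I used $\sum_i \alpha_i = 1$ and $\sum_i \alpha_i s_i = f(S,T)$. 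Thus the very same coefficients $(\alpha_i)$ realise $f(S,T)$ as an affine combination of the $t_{\rho(i)}$.

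Finally I would invoke uniqueness: since the vertices $t_{\rho(1)}, \ldots, t_{\rho(n)}$ of the simplex $T$ are affinely independent, the affine coordinates of $f(S,T)$ with respect to them are uniquely determined, hence equal to $(\alpha_i)$. This is precisely the claim, and it also certifies that the special focus lies in the linear variety (\ref{eq:focus}). I do not expect a serious obstacle: the central identity is a one-line manipulation, and the only point demanding care is eliminating the degenerate case $\alpha = 1$ so that the division by $1 - \alpha$ is legitimate; everything else rests on the standard uniqueness of barycentric coordinates in a simplex.
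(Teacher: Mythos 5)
Your proof is correct and is essentially the paper's own argument: both solve the special-focus condition (\ref{eq:condition}) for $t_{\rho(i)} = \bigl(f(S,T) - \alpha\, s_i\bigr)/(1-\alpha)$ and substitute into an affine expansion of $f(S,T)$, letting the terms collapse via $\sum_i \alpha_i = 1$. If anything yours is slightly tidier: you justify the division by $1-\alpha$ and invoke affine independence explicitly, whereas the paper (which expands in $T$ and concludes in $S$, the reverse of your direction) divides without comment and its final cancellation silently also requires $\alpha \neq 0$, a degenerate case your direction of substitution never meets.
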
 

Note that the affine coordinates $\{\alpha_i,i\}$ associated with a focus can be negative, i.e., the focus may be located outside one or both simplices.

Notice also that the barycentre itself of a simplex is a special case of a focus. In fact, the centre of mass $b$ of a $d$-dimensional simplex $S$ is the intersection of the medians of $S$, i.e., the lines joining each vertex with the barycentre of the opposite (($d-1$)-dimensional) face (see Fig. \ref{fig:bary}). But those barycentres, for all ($d-1$)-dimensional faces, themselves constitute the vertices of a simplex $T$.

\begin{figure}[ht!]
\begin{center}
\includegraphics[height = 5cm]{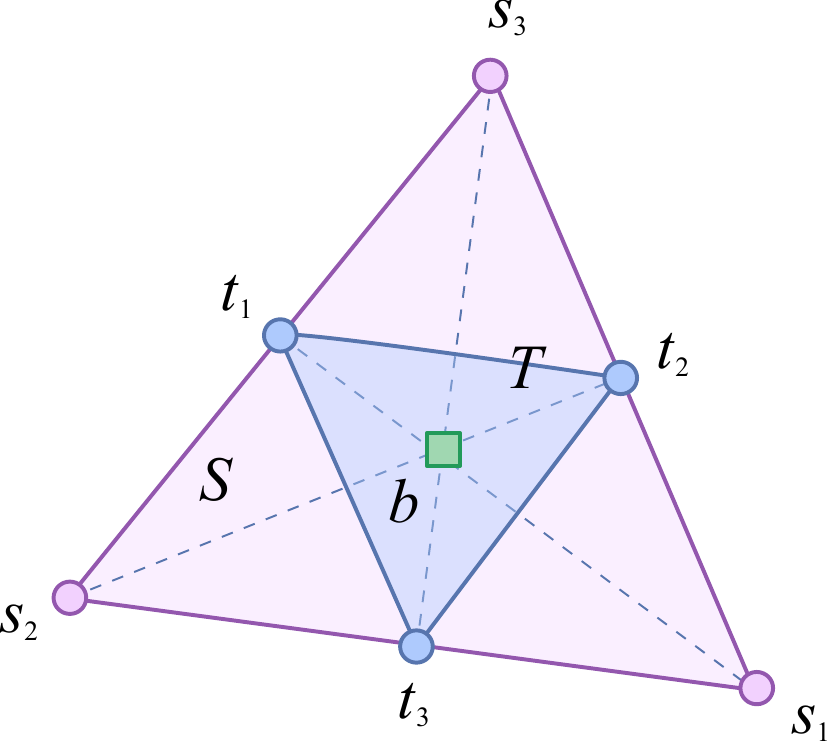}
\end{center}
\caption{The barycentre of a simplex is itself a special case of a focus, as shown here for a two-dimensional example. \label{fig:bary}}
\end{figure}

\subsection{Intersection probability as a focus} 

\begin{theorem} \label{the:pdb-focus}
For each belief function $Bel$, the intersection probability $p[Bel]$ has the same affine coordinates in the lower and upper simplices $T^1[Bel]$ and $T^{n-1}[Bel]$, respectively.
\end{theorem}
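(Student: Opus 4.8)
The plan is to compute explicitly the affine (barycentric) coordinates of $p[Bel]$ with respect to each of the two simplices and to verify that they coincide. The computation is made tractable by the very transparent structure of the two vertex systems supplied by Theorem \ref{the:simplices}: the lower-simplex vertex $t^1_x[Bel]$ is the probability that leaves every singleton mass $m(y)$, $y \neq x$, untouched and adds the entire non-singleton mass $1-k_{Bel}$ to the distinguished element $x$; dually, $t^{n-1}_x[Bel]$ keeps every $Pl(y)$, $y \neq x$, in place and replaces $Pl(x)$ by $Pl(x)+(1-k_{Pl})$. Because each vertex agrees with the ``belief of singletons'' (respectively ``plausibility of singletons'') assignment away from a single coordinate, the change-of-coordinates matrix is a rank-one perturbation of a multiple of the identity, hence trivially invertible, and I can read off coordinates one singleton at a time.

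First I would invert the lower-simplex coordinate system. Writing a generic point as $p=\sum_{x} \alpha_x\, t^1_x[Bel]$ with $\sum_x \alpha_x = 1$ and evaluating at an arbitrary singleton $w$, all the cross terms collapse to $m(w)\sum_{x\neq w}\alpha_x = m(w)(1-\alpha_w)$, leaving $p(w)=m(w)+\alpha_w(1-k_{Bel})$, so that
\[
\alpha_w=\frac{p(w)-m(w)}{1-k_{Bel}}.
\]
The identical computation on the upper simplex, using $t^{n-1}_x[Bel]$, yields $\alpha'_w=\bigl(p(w)-Pl(w)\bigr)/(1-k_{Pl})$. Both inversions presuppose non-degeneracy, i.e.\ $k_{Bel}\neq 1$ and $k_{Pl}\neq 1$; when either quantity equals $1$ the corresponding simplex collapses to a single point, $Bel$ is already Bayesian, and the statement is vacuously true, so I would dispose of that case first.

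Next I would substitute the intersection probability. By (\ref{eq:relation}) we have $p[Bel](w)=(1-\beta[Bel])m(w)+\beta[Bel]\,Pl(w)$, whence
\[
\alpha_w=\frac{\beta[Bel]\bigl(Pl(w)-m(w)\bigr)}{1-k_{Bel}},\qquad \alpha'_w=\frac{(1-\beta[Bel])\bigl(Pl(w)-m(w)\bigr)}{k_{Pl}-1}.
\]
The two are equal for every $w$ precisely when $\beta[Bel]/(1-k_{Bel})=(1-\beta[Bel])/(k_{Pl}-1)$, and inserting $\beta[Bel]=(1-k_{Bel})/(k_{Pl}-k_{Bel})$ reduces both sides to $1/(k_{Pl}-k_{Bel})$. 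Hence $\alpha_w=\alpha'_w=\bigl(Pl(w)-m(w)\bigr)/(k_{Pl}-k_{Bel})=R[(Bel,Pl)](w)$, the relative uncertainty of singletons (\ref{eq:relative-uncertainty}). This proves the theorem and, as a bonus, identifies the common affine coordinate as the relative uncertainty, linking the focus picture directly to the mass-redistribution reading of $p[Bel]$ in (\ref{eq:form2}).

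There is no deep conceptual obstacle here; the work is almost entirely bookkeeping. The two points demanding care are the non-degeneracy hypothesis just noted, and the verification that the computed $\{\alpha_w\}$ are legitimate affine coordinates and genuinely reconstruct $p[Bel]$ in \emph{both} simplices. The former reduces to a single scalar identity on $\beta[Bel]$; the latter is immediate, since $\sum_w\bigl(Pl(w)-m(w)\bigr)=k_{Pl}-k_{Bel}$ gives $\sum_w\alpha_w=1$, and the coincidence of $\alpha_w$ with $\alpha'_w$ is exactly the statement that the two representations agree. I therefore expect the one genuinely load-bearing step to be the reduction of the equality $\alpha_w=\alpha'_w$ to the defining ratio for $\beta[Bel]$.
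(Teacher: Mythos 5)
Your proof is correct and is essentially the paper's argument run in reverse: where the paper proposes the relative uncertainty $R[Bel](x) = (Pl(x)-m(x))/(k_{Pl}-k_{Bel})$ as the common coordinates and verifies by expanding $\sum_x R[Bel](x)\,t^1_x[Bel]$ and $\sum_x R[Bel](x)\,t^{n-1}_x[Bel]$ that both sums reconstruct $p[Bel]$ via (\ref{eq:form2}), you instead invert the two vertex systems (\ref{eq:t1-vertices}) and (\ref{eq:tn-1-vertices}) to solve for the coordinates of $p[Bel]$ in each simplex and then reduce their equality to the defining ratio of $\beta[Bel]$. Both computations rest on the same rank-one structure of the vertices and both end by identifying the common coordinate vector with $R[Bel]$; the modest extras your direction buys are uniqueness of the coordinates (the paper's forward verification tacitly relies on affine independence of the vertices, proved separately only for the lower simplex in Lemma \ref{lem:affine-tx}) and an explicit treatment of the degenerate Bayesian case $k_{Bel}=1$, $k_{Pl}=1$, where $\beta[Bel]$ is undefined and which the paper passes over in silence.
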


\begin{theorem} \label{the:pdb-coordinate}
The intersection probability is the special focus of the pair of lower and upper simplices $\{ T^1[Bel], T^{n-1}[Bel] \}$, with its affine coordinate on the corresponding intersecting lines equal to $\beta[Bel]$ (\ref{eq:beta}).
\end{theorem}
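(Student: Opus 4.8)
The plan is to verify directly that the intersection probability lies on each of the $n$ lines joining corresponding vertices $t^1_x[Bel]$ and $t^{n-1}_x[Bel]$ of the lower and upper simplices, and that along every such line it occupies the \emph{same} position; by Definitions \ref{def:focus} and \ref{def:focus-special} this simultaneously shows that $p[Bel]$ is the focus (for the identity permutation $\rho(x)=x$) and that this focus is special, with the common line-coordinate being the quantity we must identify as $\beta[Bel]$.

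First I would write the two families of vertices explicitly in the singleton basis, using the component descriptions already extracted in Section \ref{sec:lower-upper-simplices}: $t^1_x[Bel]$ assigns $m(x)+1-k_{Bel}$ to $x$ and $m(y)$ to each $y\neq x$, while $t^{n-1}_x[Bel]$ assigns $Pl(x)+1-k_{Pl}$ to $x$ and $Pl(y)$ to each $y\neq x$. I would then form the candidate point $(1-\beta[Bel])\,t^1_x[Bel]+\beta[Bel]\,t^{n-1}_x[Bel]$ and compare it, coordinate by coordinate, with $p[Bel](y)=\beta[Bel]\,Pl(y)+(1-\beta[Bel])\,m(y)$ from (\ref{eq:form1}). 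For a coordinate $y\neq x$ the match is immediate, since the combination returns $(1-\beta[Bel])\,m(y)+\beta[Bel]\,Pl(y)$ regardless of $x$. The only nontrivial coordinate is $x$ itself, which carries the ``excess'' terms $1-k_{Bel}$ and $1-k_{Pl}$; there the identity reduces to showing that $(1-\beta[Bel])(1-k_{Bel})+\beta[Bel](1-k_{Pl})=0$. Substituting $\beta[Bel]=(1-k_{Bel})/(k_{Pl}-k_{Bel})$ and $1-\beta[Bel]=(k_{Pl}-1)/(k_{Pl}-k_{Bel})$ factors out $(1-k_{Bel})$ and leaves $(k_{Pl}-1)+(1-k_{Pl})=0$, so the $x$-coordinate matches as well. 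Since this holds for every $x$ with the identical weight $\beta[Bel]$ on the $T^{n-1}[Bel]$ vertex, $p[Bel]$ satisfies the special-focus condition (\ref{eq:condition}) with common coordinate $\beta[Bel]$, which is exactly the asserted value.

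The one genuinely delicate point is the uniqueness clause built into Definition \ref{def:focus}: having exhibited $p[Bel]$ as a common point of the $n$ lines, I must argue that these lines are not all coincident, so that their intersection is the single point $p[Bel]$ rather than a shared line. I would settle this by examining the direction vectors $t^{n-1}_x[Bel]-t^1_x[Bel]$, which equal $Pl(y)-m(y)$ off the $x$-coordinate but carry the additional shift $k_{Bel}-k_{Pl}$ in position $x$; as $x$ ranges over $\Theta$ these directions are pairwise distinct whenever $k_{Pl}\neq k_{Bel}$, so the lines form a proper star through $p[Bel]$ and the intersection is indeed unique. This is where the non-degeneracy hypothesis implicit in the existence of $\beta[Bel]$ (equivalently, that $Bel$ is not already Bayesian) does the essential work; I expect this uniqueness bookkeeping, rather than the algebraic identity, to be the main obstacle to a fully rigorous statement. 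As a sanity check the computation also strengthens Theorem \ref{the:pdb-focus}, since the constant weighting exhibited here is precisely what forces equal affine coordinates within each simplex.
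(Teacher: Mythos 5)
Your core verification is exactly the paper's own proof, just organised slightly differently: the paper likewise imposes the special-focus condition (\ref{eq:condition}) on $\{T^1[Bel], T^{n-1}[Bel]\}$ with the identity pairing of vertices, substitutes the vertex expressions (\ref{eq:t1-vertices}) and (\ref{eq:tn-1-vertices}), and checks that $\alpha = \beta[Bel]$ makes $(1-\alpha)\,t^1_x[Bel] + \alpha\, t^{n-1}_x[Bel]$ equal to $p[Bel]$ coordinate by coordinate. Your reduction of the only non-trivial coordinate to the identity $(1-\beta[Bel])(1-k_{Bel}) + \beta[Bel](1-k_{Pl}) = 0$ is a clean repackaging of the cancellation the paper performs; that part is correct and there is nothing to object to.

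Where you go beyond the paper is the uniqueness clause of Definition \ref{def:focus}, which the paper's proof silently ignores; your instinct that it needs an argument is right, but the argument you give has a gap. Pairwise \emph{distinct} direction vectors do not yield a ``proper star'': two lines through the common point $p[Bel]$ coincide whenever their directions are \emph{parallel}, and distinct vectors can be parallel. This actually occurs. Writing $d_x \doteq t^{n-1}_x[Bel] - t^1_x[Bel]$, one has $d_x(y) = Pl(y)-m(y)$ for $y \neq x$ and $d_x(x) = Pl(x)-m(x)+k_{Bel}-k_{Pl}$; if $Pl(y)=m(y)$ for every $y \notin \{x,x'\}$ (only two non-trivial intervals, e.g.\ a belief function whose only non-singleton focal element is $\{x,x'\}$), then
\[
d_x = \big(Pl(x')-m(x')\big)\big(e_{x'}-e_x\big), \qquad d_{x'} = \big(Pl(x)-m(x)\big)\big(e_x-e_{x'}\big),
\]
which are distinct but antiparallel, so the lines indexed by $x$ and $x'$ coincide. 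In the binary case $n=2$ this kills uniqueness outright (the two lines are always the same line, so strictly speaking no focus exists there). For $n \geq 3$ the conclusion can be rescued, but by a different argument: parallelism of $d_x$ and $d_{x'}$ forces $Pl(y)=m(y)$ for all $y \notin \{x,x'\}$, and since any non-singleton focal element widens at least two singleton intervals, one cannot have all the $d_x$ pairwise parallel for a non-Bayesian $Bel$; hence some pair of the lines is non-parallel and, both passing through $p[Bel]$, meets only there. You should either restrict to $n \geq 3$ and run this corrected argument, or note explicitly that the constant-coordinate computation is the substantive content and the degenerate coincidence of lines is an edge case the definition does not cover.
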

The fraction $\alpha = \beta[Bel]$ of the width of the probability interval that generates the intersection probability can be read in the probability simplex as its coordinate on any of the lines determining the special focus of $\{ T^1[Bel], T^{n-1}[Bel] \}$.

Similar results hold for the relative belief and relative plausibility of singletons, which are the (special) foci associated with the lower and upper simplices $T^1[Bel]$ and $T^{n-1}[Bel]$, the geometric incarnations of the lower and upper constraints on singletons. Those two simplices can also be interpreted as the sets of probabilities consistent with the plausibility and belief of singletons, respectively (see \cite{cuzzolin2021springer}, Chapter 12). 


Just as the pignistic function adheres to sensible rationality principles and, as a consequence, it has a clear geometrical interpretation as the centre of mass of the credal set associated with a belief function $Bel$, the intersection probability has an elegant geometric behaviour with respect to the credal set associated with an interval probability system, being the (special) focus of the related upper and lower simplices.

Now, selecting the special focus of two simplices representing two different constraints (i.e., the point with the same convex coordinates in the two simplices) means adopting the single probability distribution which satisfies both constraints {in exactly the same way}. If we assume homogeneous behaviour in the two sets of constraints $\{ P(x)\geq Bel(x) \; \forall x\}$, $\{ P(x) \leq Pl(x) \; \forall x\}$ as a rationality principle for the probability transformation of an interval probability system, then the intersection probability necessarily follows as the unique solution to the problem. 

Another interesting results stems from the fact that the pignistic function and affine combination commute:
\[
BetP[\alpha_1 Bel_1 + \alpha_2 Bel_2] = \alpha_1 BetP[Bel_1] + \alpha_2 BetP[Bel_2]
\]
whenever $\alpha_1 + \alpha_2 = 1$.
\begin{proposition} \cite{cuzzolin2021springer}
The intersection probability is the convex combination of the barycentres $t^1[Bel], t^{n-1}[Bel]$ of the lower and upper simplices $T^1[Bel], T^{n-1}[Bel]$ associated with a belief function $Bel$, with coefficient (\ref{eq:beta})
\[
p[Bel] = \beta[Bel] t^{n-1}[Bel] + (1 - \beta[Bel]) t^1[Bel].
\]
\end{proposition}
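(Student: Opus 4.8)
The plan is to compute the two barycentres explicitly from the vertex formulas of Theorem~\ref{the:simplices}, form the asserted convex combination component by component, and then show it collapses to the intersection probability by exploiting the precise algebraic form of $\beta[Bel]$. Since both $T^1[Bel]$ and $T^{n-1}[Bel]$ are simplices on $n$ vertices, their barycentres are the plain averages of those vertices, so the whole argument reduces to elementary bookkeeping plus one cancellation.

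First I would determine the barycentre $t^1[Bel] = \frac{1}{n}\sum_{x\in\Theta} t^1_x[Bel]$ of the lower simplex. Reading the masses $m_{t^1_x[Bel]}(x) = m(x) + 1 - k_{Bel}$ and $m_{t^1_x[Bel]}(y) = m(y)$ for $y\neq x$ off~(\ref{eq:t1-vertices}), and summing the contribution to a fixed singleton $z$ over the $n$ choices of distinguished vertex, I obtain
\[
t^1[Bel](z) = m(z) + \frac{1 - k_{Bel}}{n}.
\]
The identical computation for the upper simplex, using $m_{t^{n-1}_x[Bel]}(x) = Pl(x) + 1 - k_{Pl}$ and $m_{t^{n-1}_x[Bel]}(y) = Pl(y)$ for $y\neq x$ from~(\ref{eq:tn-1-vertices}), gives
\[
t^{n-1}[Bel](z) = Pl(z) + \frac{1 - k_{Pl}}{n}.
\]

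Next I would substitute these into the right-hand side of the claimed identity. The combination $\beta[Bel]\, t^{n-1}[Bel](z) + (1-\beta[Bel])\, t^1[Bel](z)$ splits into a leading part $\beta[Bel]\,Pl(z) + (1-\beta[Bel])\,m(z)$, which is precisely $p[Bel](z)$ by~(\ref{eq:form1}) with $l=Bel$, $u=Pl$ (recalling $Bel(z)=m(z)$ on singletons), together with a residual term
\[
\frac{1}{n}\Big[\beta[Bel](1 - k_{Pl}) + (1 - \beta[Bel])(1 - k_{Bel})\Big].
\]

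The crux of the argument — and the only step that is not routine — is to show that this residual vanishes identically, independent of $z$. Substituting $\beta[Bel] = \frac{1 - k_{Bel}}{k_{Pl} - k_{Bel}}$ and $1 - \beta[Bel] = \frac{k_{Pl} - 1}{k_{Pl} - k_{Bel}}$ from~(\ref{eq:beta}), the bracket factors as $\frac{1 - k_{Bel}}{k_{Pl} - k_{Bel}}\big[(1 - k_{Pl}) + (k_{Pl} - 1)\big] = 0$. Hence the residual disappears for every $z$, the convex combination agrees with $p[Bel]$ in each component, and the proposition follows. As a sanity check, $\beta[Bel] + (1-\beta[Bel]) = 1$, so the right-hand side is a genuine convex combination whose entries automatically sum to one; I expect no obstacle beyond verifying that single cancellation, which is exactly the defining property of $\beta[Bel]$ as the fraction restoring normalisation.
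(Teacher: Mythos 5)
Your proof is correct: the barycentre formulas you derive from the vertex expressions (\ref{eq:t1-vertices}) and (\ref{eq:tn-1-vertices}), namely $t^1[Bel](z) = m(z) + (1-k_{Bel})/n$ and $t^{n-1}[Bel](z) = Pl(z) + (1-k_{Pl})/n$, are exactly right, the leading terms of the combination reproduce $\beta[Bel]\,Pl(z) + (1-\beta[Bel])\,m(z) = p[Bel](z)$ by (\ref{eq:form1}), and the residual indeed factors as $\frac{1-k_{Bel}}{k_{Pl}-k_{Bel}}\big[(1-k_{Pl}) + (k_{Pl}-1)\big] = 0$. However, your route is genuinely different from the one the paper relies on (deferring to \cite{cuzzolin2021springer}). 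That argument is structural rather than computational: it starts from the identity (\ref{eq:dualline}), $p[Bel] = \beta[Bel]\,\bar{Pl} + (1-\beta[Bel])\,\bar{Bel}$, where $\bar{Bel}$ and $\bar{Pl}$ are the belief and plausibility of singletons (\ref{eq:pbarra}); it then uses the facts that $T^1[Bel]$ and $T^{n-1}[Bel]$ are the sets of (pseudo-)probabilities consistent with $\bar{Bel}$ and $\bar{Pl}$ respectively, so that their barycentres are the pignistic transforms $t^1[Bel] = BetP[\bar{Bel}]$ and $t^{n-1}[Bel] = BetP[\bar{Pl}]$, and that $BetP$ commutes with affine combination while fixing probabilities; applying $BetP$ to both sides of (\ref{eq:dualline}) then yields the claim in one line. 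Your computation buys elementarity and self-containment — it needs nothing beyond Theorem \ref{the:simplices} and the defining normalisation property of $\beta[Bel]$, and it produces the explicit closed forms of the two barycentres as a useful by-product. The pignistic-transform argument needs more machinery (the consistent-set interpretation of the two simplices and the barycentric characterisation of $BetP$), but it explains \emph{why} the proposition holds, placing it within the affine family of transforms and the commutation theme that motivates the surrounding section, and it generalises immediately to any pair of pseudo-belief functions whose affine span contains a probability.
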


\section{Relations with other probability transforms} \label{sec:relations}

In Section \ref{sec:comparison-interval} we discussed how the intersection probability relates to alternative probability transforms for probability intervals, including Sudano's proposals.

Since the intersection probability can also be interpreted (as we have seen) as a probability transform which applies to belief functions, it is quite natural to wonder how it relates to the two main classes of such transforms: the epistemic family and the affine family (see \cite{cuzzolin2021springer}, Chapters 11 and 12).

\subsection{Epistemic transforms} \label{sec:relations-epistemic}

From (\ref{eq:form2}) it follows that, for a belief function $Bel$, 
\begin{equation}\label{eq:interpretation1}
p[Bel](x) = m(x) + (1 - k_{Bel}) \frac{Pl(x) - m(x)}{k_{Pl} - k_{Bel}},
\end{equation}
which can be rewritten as:
\begin{equation} \label{eq:convex-r}
p[Bel] = k_{Bel} \; \tilde{Bel} + (1 - k_{Bel}) R[Bel].
\end{equation}
Since $k_{Bel} = \sum_{x\in\Theta} m(x) \leq 1$, (\ref{eq:convex-r}) implies that the intersection probability $p[Bel]$ belongs to the segment linking the relative uncertainty of singletons $R[Bel]$ to the relative belief of singletons $\tilde{Bel}$. Its convex coordinate on this segment is the total mass of singletons $k_{Bel}$. 

The relative plausibility function $\tilde{Pl}$ can also be written in terms of $\tilde{Bel}$ and $R[Bel]$ as, by definition (\ref{eq:relplaus}),
\[
\begin{array}{lll}
R[Bel](x) & = & \displaystyle \frac{Pl(x) - m(x)}{k_{Pl} - k_{Bel}} = \frac{Pl(x)}{k_{Pl} - k_{Bel}} - \frac{m(x)}{k_{Pl} - k_{Bel}} 
\\ \\
& = & \displaystyle \tilde{Pl}(x) \frac{k_{Pl}}{k_{Pl} - k_{Bel}} - \tilde{Bel}(x) \frac{k_{Bel}}{k_{Pl} - k_{Bel}},
\end{array}
\]
since $\tilde{Pl}(x) = Pl(x)/k_{Pl}$ and $\tilde{Bel}(x) = m(x)/k_{Bel}$. Therefore,
\begin{equation} \label{eq:relplaus-on-segment}
\tilde{Pl} = \left ( \frac{k_{Bel}}{k_{Pl}} \right ) \tilde{Bel} + \left ( 1 -
\frac{k_{Bel}}{k_{Pl}} \right ) R[Bel].
\end{equation}

In summary, both the relative plausibility of singletons $\tilde{Pl}$ and the intersection probability $p[Bel]$ belong to the segment $Cl(R[Bel],\tilde{Bel})$ joining the relative belief $\tilde{Bel}$ and the relative uncertainty of singletons $R[Bel]$ (see Fig. \ref{fig:btildeR}).  
The convex coordinate of $\tilde{Pl}$ in $Cl(R[Bel],\tilde{Bel})$ (\ref{eq:relplaus-on-segment}) measures the ratio between the total mass and the total plausibility of the singletons, while that of $\tilde{Bel}$ measures the total mass of the singletons $k_{Bel}$.
Since $k_{Pl} = \sum_{A\subset\Theta} m(A)|A| \geq 1$, we have that $k_{Bel}/k_{Pl} \leq k_{Bel}$: hence the relative plausibility function of singletons $\tilde{Pl}$ is closer to $R[Bel]$ than $p[Bel]$ is (Figure \ref{fig:btildeR} again).
\begin{figure}[ht!]
\begin{center}
\includegraphics[width=0.7\textwidth]{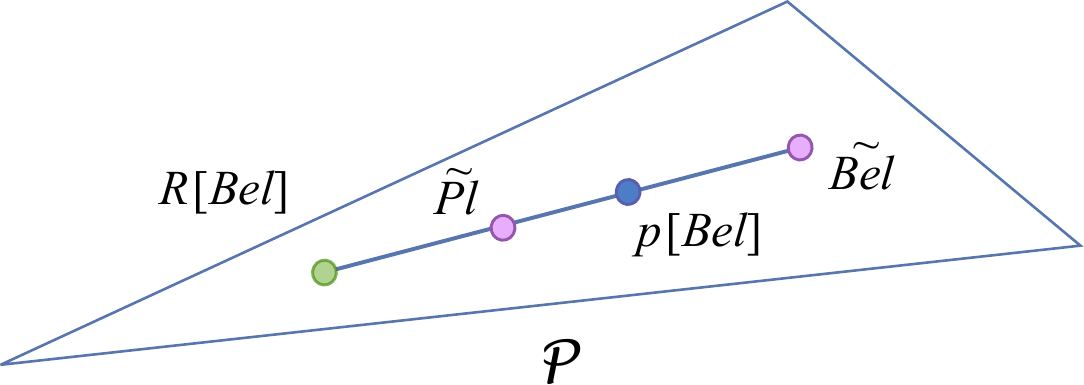}
\end{center}
\caption{\label{fig:btildeR} Location in the probability simplex $\mathcal{P}$ of the intersection probability $p[Bel]$ and the relative plausibility of singletons
$\tilde{Pl}$ with respect to the relative uncertainty of singletons $R[Bel]$. They both lie on the segment joining $R[Bel]$ and the relative belief of singletons $\tilde{Bel}$, but $\tilde{Pl}$ is closer to $R[Bel]$ than $p[Bel]$ is.}
\end{figure}

Obviously, when $k_{Bel} = 0$ (the relative belief of singletons $\tilde{Bel}$ does not exist, for $Bel$ assigns no mass to singletons), the remaining probability approximations coincide: $p[Bel] = \tilde{Pl} = R[Bel]$ by (\ref{eq:interpretation1}). 

\subsection{Pignistic function} \label{sec:relations-pignistic}

To shed even more light on $p[Bel]$, and to get an alternative interpretation of the intersection probability, it is useful to compare $p[Bel]$ as expressed in (\ref{eq:interpretation1}) with the pignistic function,
\[
BetP[Bel](x) \doteq \sum_{A \supseteq x} \frac{m(A)}{|A|} = m(x) + \sum_{A \supset x, A \neq x} \frac{m(A)}{|A|}.
\]
In $BetP[Bel]$ the mass of each event $A$, $|A|>1$, is considered \emph{separately}, and its mass $m(A)$ is shared \emph{equally} among the elements of $A$. In $p[Bel]$, instead, it is the \emph{total} mass $\sum_{|A|>1} m(A) = 1 - k_{Bel}$ of non-singleton focal sets which is considered, and this total mass is distributed \emph{proportionally} to their non-Bayesian contribution to each element of $\Theta$. 

Now, if $|A|>1$,
\[
\beta[Bel_A] = \frac{\sum_{|B|>1} m(B)}{\sum_{|B|>1} m(B) |B|} = \frac{1}{|A|},
\]
so that both $p[Bel](x)$ and $BetP[Bel](x)$ assume the form
\[
m(x) + \sum_{A \supset x, A \neq x} m(A) \beta_A,
\]
where $\beta_A = \text{const} = \beta[Bel]$ for $p[Bel]$, while $\beta_A = \beta[Bel_A]$ in the case of the pignistic function. 

Under what conditions do the intersection probability and pignistic function coincide? A sufficient condition
can be easily given for a special class of belief functions.
\begin{theorem} \label{the:sigmapign}
The intersection probability and pignistic function coincide for a given belief function $Bel$ whenever the focal elements of $Bel$ have size 1 or $k$ only.
\end{theorem}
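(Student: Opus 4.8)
The plan is to compare the two transforms term by term, exploiting the common functional form noted just above the statement: both $p[Bel](x)$ and $BetP[Bel](x)$ can be written as $m(x) + \sum_{A \supsetneq \{x\}} m(A)\,\beta_A$, where $\beta_A = \beta[Bel]$ is the constant value (\ref{eq:beta}) for the intersection probability, while $\beta_A = \beta[Bel_A] = 1/|A|$ for the pignistic function. Thus the two distributions coincide on every singleton $x$ as soon as the coefficient $1/|A|$ attached to each non-singleton focal set $A$ contributing to the sum equals the single constant $\beta[Bel]$.

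First I would evaluate $\beta[Bel]$ explicitly under the hypothesis. Writing $k_{Bel} = \sum_x m(x)$ for the total singleton mass and using $m(\emptyset)=0$, the assumption that every focal element has size $1$ or $k$ forces the remaining mass $1 - k_{Bel}$ to sit entirely on focal sets of size $k$. Consequently $k_{Pl} = \sum_A m(A)|A| = k_{Bel} + k\,(1 - k_{Bel})$, so that $k_{Pl} - k_{Bel} = k\,(1 - k_{Bel})$ and hence $\beta[Bel] = (1 - k_{Bel})/(k_{Pl} - k_{Bel}) = 1/k$, provided $1 - k_{Bel} \neq 0$.

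Second, I would observe that the only non-singleton sets carrying nonzero mass in the sum $\sum_{A \supsetneq \{x\}} m(A)\,\beta_A$ are the focal elements of size $k$, for each of which $1/|A| = 1/k = \beta[Bel]$. Therefore the pignistic coefficient $1/|A|$ and the constant intersection coefficient $\beta[Bel]$ agree on every surviving term, giving $p[Bel](x) = BetP[Bel](x)$ for all $x \in \Theta$.

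The one point needing separate care is the degenerate case $1 - k_{Bel} = 0$, in which all the mass already lies on singletons: here $\beta[Bel]$ is a $0/0$ indeterminate, but $Bel$ is Bayesian and both transforms trivially return $Bel$ itself, so the identity still holds; the same triviality covers $k = 1$. I do not anticipate any real obstacle: the substance of the argument is simply that the global constant $1/k$ simultaneously replaces $\beta[Bel]$ and each local $1/|A|$, and the only thing to get right is the bookkeeping that locates the non-singleton mass so that $k_{Pl}$ is computed correctly.
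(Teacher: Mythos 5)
Your proof is correct and follows essentially the same route as the paper's: both exploit the common form $m(x) + \sum_{A \supsetneq \{x\}} m(A)\,\beta_A$ and reduce the claim to showing $\beta[Bel] = 1/k$, which you obtain via $k_{Pl} = k_{Bel} + k(1-k_{Bel})$ and the paper obtains from the equivalent ratio $\sum_{|B|>1} m(B) / \sum_{|B|>1} m(B)|B|$. Your explicit treatment of the degenerate Bayesian case $k_{Bel}=1$ (where $\beta[Bel]$ is formally $0/0$) is a small point of care the paper's proof leaves implicit.
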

\begin{proof}
The desired equality $p[Bel] = BetP[Bel]$ is equivalent to
\[
m(x) + \sum_{A\supsetneq x} m(A) \beta[Bel] = m(x) + \sum_{A\supsetneq x} \frac{m(A)}{|A|},
\]
which in turn reduces to
\[
\sum_{A\supsetneq x} m(A) \beta[Bel] = \sum_{A\supsetneq x} \frac{m(A)}{|A|}.
\]
If $\exists$ $k : m(A) = 0$ for $|A|\neq k$, $|A|>1$, then $\beta[Bel] = 1/k$ and the equality is satisfied.
\end{proof}

In particular, this is true when $Bel$ is 2-additive (its focal elements have cardinality $\leq 2$).

\begin{exam} \label{exa:example-pdb}
Let us briefly discuss these two interpretations of $p[Bel]$ in a simple example. Consider a ternary frame $\Theta = \{ x ,y,z \}$, and a belief function $Bel$ with BPA
\begin{equation} \label{eq:example-pdb-bf}
\begin{array}{c}
\begin{array}{ccc}
m(x) = 0.1, & m(y) = 0, & m(z) = 0.2,
\end{array}\\
\begin{array}{cccc}
m(\{x,y\}) = 0.3, & m(\{x,z\}) = 0.1, & m(\{y,z\}) = 0, & m(\Theta) = 0.3.
\end{array}
\end{array}
\end{equation}
The related basic plausibility assignment is, according to (\ref{eq:bpla}),
\[
\begin{array}{c}
\begin{array}{lll}
\mu(x) & = & \displaystyle (-1)^{|x|+1} \sum_{B\supseteq \{x\}} m(B) \\ & = & \displaystyle m(x) + m(\{x,y\}) + m(\{x,z\}) + m(\Theta) = 0.8, \end{array} \\ \\ \begin{array}{lllll} \mu(y) = 0.6, & \hspace{1mm} & \mu(z) = 0.6, & \hspace{1mm} & \mu(\{x,y\}) = - 0.6, \\ \\
\mu(\{x,z\}) = -0.4, & \hspace{1mm} & \mu(\{y,z\}) = - 0.3, & \hspace{1mm} & \mu(\Theta) = 0.3.
\end{array}
\end{array}
\]
Figure \ref{fig:exa-pdb} depicts the subsets of $\Theta$ with non-zero BPA (left) and BPlA (middle) induced by the belief function (\ref{eq:example-pdb-bf}): dashed ellipses indicate a negative mass. The total mass that (\ref{eq:example-pdb-bf}) accords to singletons is $k_{Bel} = 0.1 + 0 + 0.2 = 0.3$. Thus, the line coordinate $\beta[Bel]$ of the intersection $\varsigma[Bel]$ of the line $a(Bel,Pl)$ with $\mathcal{P}'$ is
\[
\beta[Bel] = \frac{1 - k_{Bel}}{m(\{x,y\})|\{x,y\}| + m(\{x,z\})|\{x,z\}| + m(\Theta)|\Theta|} = \frac{0.7}{1.7}.
\]
By (\ref{eq:masses-varsigma}), the mass assignment of $\varsigma[Bel]$ is therefore
\[
\begin{array}{l}
\displaystyle 
m_{\varsigma[Bel]}(x) = m(x) + \beta[Bel] (\mu(x) - m(x)) = 0.1 + 0.7 \cdot \frac{0.7}{1.7} = 0.388, 
\\ \\
\displaystyle 
m_{\varsigma[Bel]}(y) = 0 + 0.6 \cdot \frac{0.7}{1.7} = 0.247, \;\;\;\;\;\;\;\;\; m_{\varsigma[Bel]}(z) = 0.2 + 0.4 \cdot \frac{0.7}{1.7} = 0.365, 
\\ \\
\displaystyle 
m_{\varsigma[Bel]}(\{x,y\}) = 0.3 - 0.9 \cdot \frac{0.7}{1.7} = - 0.071, 
\\ \\
\displaystyle 
m_{\varsigma[Bel]}(\{x,z\}) = 0.1 - 0.5 \cdot \frac{0.7}{1.7} = - 0.106, 
\\ \\
\displaystyle 
m_{\varsigma[Bel]}(\{y,z\}) = 0 - 0.3 \cdot \frac{0.7}{1.7} = -0.123, \;\;\;\; m_{\varsigma[Bel]}(\Theta) = 0.3 + 0 \cdot \frac{0.7}{1.7} = 0.3.
\end{array}
\]
We can verify that all singleton masses are indeed non-negative and add up to one, while the masses of the non-singleton events add up to zero,
\[
- 0.071 - 0.106 -0.123 + 0.3 = 0, 
\]
confirming that $\varsigma[Bel]$ is a Bayesian normalised sum function (pseudo belief function). Its mass assignment has signs which are still described by Fig. \ref{fig:exa-pdb} (middle) although, as $m_{\varsigma[Bel]}$ is a weighted average of $m$ and $\mu$, its mass values are closer to zero.

\begin{figure}[ht!]
\begin{center}
\includegraphics[width=\textwidth]{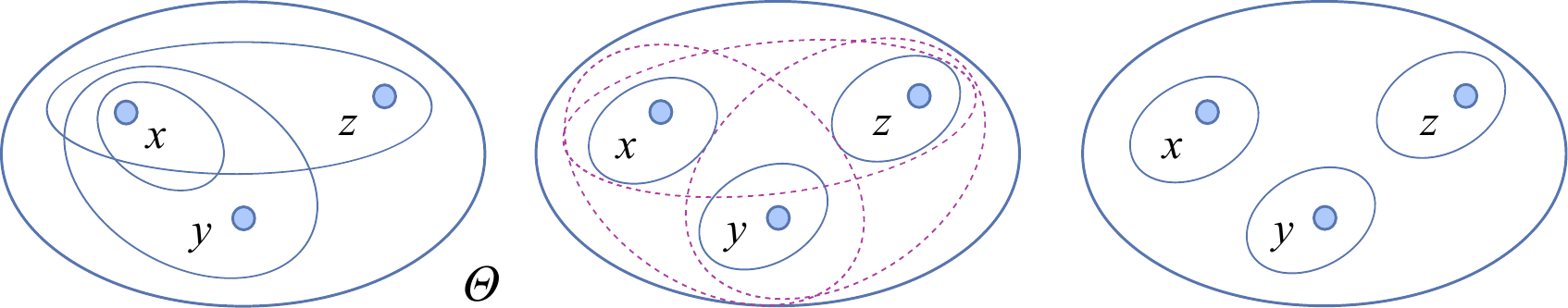}
\end{center}
\caption{Signs of non-zero masses assigned to events by the functions discussed in Example \ref{exa:example-pdb}. Left: BPA of the belief function (\ref{eq:example-pdb-bf}), with five focal elements. Middle: the associated BPlA assigns positive masses (solid ellipses) to all events of size 1 and 3, and negative ones (dashed ellipses) to all events of size 2. This is the case for the mass assignment associated with $\varsigma$ (\ref{eq:masses-varsigma}) too. Right: the intersection probability $p[Bel]$ (\ref{eq:sigma}) retains, among the latter, only the masses assigned to singletons. \label{fig:exa-pdb}}
\end{figure}

In order to compare $\varsigma[Bel]$ with the intersection probability, we need to recall (\ref{eq:interpretation1}): the non-Bayesian contributions of $x,y,z$ are, respectively,
\[
\begin{array}{lll}
Pl(x) - m(x) & = & m(\Theta) + m(\{x,y\}) + m(\{x,z\}) = 0.7, \\ Pl(y) - m(y) & = & m(\{x,y\}) + m(\Theta) = 0.6, \\ Pl(z) - m(z) & = & m(\{x,z\}) + m(\Theta) = 0.4,
\end{array}
\]
so that the relative uncertainty of singletons is 
\[
R(x) = 0.7/1.7, \quad R(y) = 0.6/1.7, \quad R(z) = 0.4/1.7. 
\]
For each singleton $\theta$, the value of the intersection probability results from adding to the original BPA $m(\theta)$ a share of the mass of the non-singleton events $1 - k_{Bel} = 0.7$ proportional to the value of $R(\theta)$ (see Fig. \ref{fig:exa-pdb}, right):
\[
\begin{array}{lllll}
p[Bel](x) & = & m(x) + (1 - k_{Bel}) R(x) = 0.1 + 0.7 * 0.7/1.7 & = & 0.388, 
\\
p[Bel](y) & = & m(y) + (1 - k_{Bel}) R(y) = 0 + 0.7 * 0.6/1.7 & = & 0.247, 
\\
p[Bel](z) & = & m(z) + (1 - k_{Bel}) R(z) = 0.2 + 0.7 * 0.4/1.7 & = & 0.365.
\end{array}
\]
We can see that $p[Bel]$ coincides with the restriction of $\varsigma[Bel]$ to singletons.

Equivalently, $\beta[Bel]$ measures the share of $Pl(x) - m(x)$ assigned to each element of the frame of discernment:
\[
\begin{array}{lllll}
p[Bel](x) & = & m(x) + \beta[Bel] (Pl(x) - m(x)) & = & 0.1 + 0.7/1.7 * 0.7, 
\\
p[Bel](y) & = & m(y) + \beta[Bel] (Pl(y) - m(y)) & = & 0 + 0.7/1.7 * 0.6, 
\\
p[Bel](z) & = & m(z) + \beta[Bel] (Pl(z) - m(z)) & = & 0.2 + 0.7/1.7 * 0.4.
\end{array}
\]
\end{exam} 

\section{Operators} \label{sec:operators}

To complete our analysis, it can be useful to understand the way the intersection probability relates to two major (geometric) operators in the space of belief functions: affine combination and convex closure.

\subsection{Affine combination} \label{sec:operators-affine}

We have seen in Section \ref{sec:relations} that $p[Bel]$ and $BetP[Bel]$ are closely related probability transforms, linked by the role of the quantity $\beta[Bel]$. It is natural to wonder whether $p[Bel]$ exhibits a similar behaviour with respect to the convex closure operator (see \cite{cuzzolin2021springer}, Chapter 4). Indeed, although the situation is a little more complex in this second case, $p[Bel]$ turns also out to be related to $Cl(.)$ in a rather elegant way. 

Let us introduce the notation $\beta[Bel_i] = N_i/D_i$.
\begin{theorem}\label{the:pdbconvex}
Given two arbitrary belief functions $Bel_1,Bel_2$ defined on the same frame of discernment, the intersection probability of their affine combination $\alpha_1 \; Bel_1$ $+ \; \alpha_2 \; Bel_2$ is, for any $\alpha_1 \in [0,1]$, $\alpha_2 = 1 - \alpha_1$,
\begin{equation}\label{eq:pcl}
\begin{array}{lll}
p[\alpha_1 \; Bel_1 + \alpha_2 \; Bel_2] & = & \displaystyle
\widehat{\alpha_1 D_1} \big (\alpha_1 p[Bel_1] + \alpha_2 T[Bel_1,Bel_2] \big)
\\ \\
& & \displaystyle + \widehat{\alpha_2
D_2} \big (\alpha_1 T[Bel_1,Bel_2]) + \alpha_2 p[Bel_2] \big),
\end{array}
\end{equation}
where $\widehat{\alpha_i D_i} = \frac{\alpha_i D_i}{\alpha_1 D_1+\alpha_2 D_2}$, $T[Bel_1,Bel_2]$ is the probability with values
\begin{equation}\label{eq:t}
T[Bel_1,Bel_2](x) \doteq \hat{D}_1 p[Bel_2,Bel_1] + \hat{D}_2 p[Bel_1,Bel_2],
\end{equation}
with $\hat{D}_i \doteq \frac{D_i}{D_1 + D_2}$, and
\begin{equation}\label{eq:p12}
\begin{array}{c}
p[Bel_2,Bel_1](x) \doteq m_2(x) + \beta[Bel_1] \big (Pl_2(x) - m_2(x) \big ), 
\\ \\
p[Bel_1,Bel_2](x) \doteq m_1(x) + \beta[Bel_2] \big ( Pl_1(x) - m_1(x) \big).
\end{array}
\end{equation}
\end{theorem}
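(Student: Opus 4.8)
The plan is to reduce everything to the definition (\ref{eq:form1}) of the intersection probability together with the elementary fact that affine combination of belief functions acts \emph{affinely} on the masses and plausibilities of singletons, and then to verify the asserted identity by a direct expansion in which the only genuinely non-linear object, the coefficient $\beta$, is handled separately.

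First I would record the behaviour of the building blocks under affine combination. Since the Möbius transform relating $m$ and $Bel$ is linear, and since $Pl(x) = \sum_{A \ni x} m(A)$ is also linear in $m$, writing $Bel = \alpha_1 Bel_1 + \alpha_2 Bel_2$ with $\alpha_1 + \alpha_2 = 1$ gives $m(x) = \alpha_1 m_1(x) + \alpha_2 m_2(x)$ and $Pl(x) = \alpha_1 Pl_1(x) + \alpha_2 Pl_2(x)$ for every singleton. Summing over $x$ and using $\alpha_1 + \alpha_2 = 1$, the numerator $N = 1 - k_{Bel}$ and denominator $D = k_{Pl} - k_{Bel}$ of $\beta[Bel]$ (\ref{eq:beta}) become $N = \alpha_1 N_1 + \alpha_2 N_2$ and $D = \alpha_1 D_1 + \alpha_2 D_2$, where $N_i, D_i$ denote the numerator and denominator of $\beta[Bel_i]$. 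Hence
\[
\beta[\alpha_1 Bel_1 + \alpha_2 Bel_2] = \frac{\alpha_1 N_1 + \alpha_2 N_2}{\alpha_1 D_1 + \alpha_2 D_2},
\]
a ratio, which is precisely the reason the naive commutation of $p[\cdot]$ with affine combination fails.

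Next I would substitute these into (\ref{eq:form1}). Writing $\Delta_i(x) = Pl_i(x) - m_i(x)$, the left-hand side becomes
\[
p[Bel](x) = \alpha_1 m_1(x) + \alpha_2 m_2(x) + \frac{(\alpha_1 N_1 + \alpha_2 N_2)(\alpha_1 \Delta_1(x) + \alpha_2 \Delta_2(x))}{\alpha_1 D_1 + \alpha_2 D_2}.
\]
On the right-hand side I would clear the weights $\widehat{\alpha_i D_i}$ and $\hat{D}_i$, using $(D_1 + D_2) T[Bel_1,Bel_2] = D_1 p[Bel_2,Bel_1] + D_2 p[Bel_1,Bel_2]$ (immediate from (\ref{eq:t})) to eliminate $T$, so that the whole expression acquires the common denominator $\alpha_1 D_1 + \alpha_2 D_2$ and a numerator
\[
\alpha_1^2 D_1 \, p[Bel_1] + \alpha_1 \alpha_2 \big( D_1\, p[Bel_2,Bel_1] + D_2\, p[Bel_1,Bel_2]\big) + \alpha_2^2 D_2 \, p[Bel_2].
\]
Then I would expand each bracket using $D_1 p[Bel_1] = D_1 m_1 + N_1 \Delta_1$, $D_2 p[Bel_2] = D_2 m_2 + N_2 \Delta_2$, $D_1 p[Bel_2,Bel_1] = D_1 m_2 + N_1 \Delta_2$ and $D_2 p[Bel_1,Bel_2] = D_2 m_1 + N_2 \Delta_1$, all of which follow from (\ref{eq:p12}) and from $\beta[Bel_i] = N_i/D_i$. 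Collecting terms by type, the mass contributions factor as $\alpha_1 m_1 (\alpha_1 D_1 + \alpha_2 D_2) + \alpha_2 m_2 (\alpha_1 D_1 + \alpha_2 D_2)$ and therefore collapse, after division, to exactly $\alpha_1 m_1(x) + \alpha_2 m_2(x)$; the remaining width contributions reassemble into $\alpha_1^2 N_1 \Delta_1 + \alpha_1 \alpha_2 (N_1 \Delta_2 + N_2 \Delta_1) + \alpha_2^2 N_2 \Delta_2 = (\alpha_1 N_1 + \alpha_2 N_2)(\alpha_1 \Delta_1 + \alpha_2 \Delta_2)$, reproducing the second term of the left-hand side.

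I expect the main obstacle to be purely the bookkeeping of this expansion, not any conceptual difficulty. The right-hand side hides four auxiliary probabilities ($p[Bel_1]$, $p[Bel_2]$ and the two cross terms $p[Bel_1,Bel_2]$, $p[Bel_2,Bel_1]$) behind the nested weights $\widehat{\alpha_i D_i}$ and $\hat{D}_i$, and the entire content of the theorem is that the cross terms, which use the \emph{mismatched} coefficients $\beta[Bel_2]$ and $\beta[Bel_1]$, supply precisely the mixed products $N_1 \Delta_2$ and $N_2 \Delta_1$ required to complete the product $(\alpha_1 N_1 + \alpha_2 N_2)(\alpha_1 \Delta_1 + \alpha_2 \Delta_2)$. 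Keeping the mass and width contributions separate throughout, and recognising that the mass part must telescope against the denominator while the width part must rebuild the product of the affinely combined numerator and the affinely combined interval width, is what makes the otherwise mechanical calculation transparent and establishes the identity for every $x \in \Theta$.
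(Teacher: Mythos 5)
Your proof is correct and follows essentially the same route as the paper's: both arguments rest on the fact that $\beta[\alpha_1 Bel_1 + \alpha_2 Bel_2] = (\alpha_1 N_1 + \alpha_2 N_2)/(\alpha_1 D_1 + \alpha_2 D_2)$ and then expand over the common denominator $\alpha_1 D_1+\alpha_2 D_2$, regrouping terms into $p[Bel_1]$, $p[Bel_2]$ and the two cross probabilities $p[Bel_1,Bel_2]$, $p[Bel_2,Bel_1]$ --- the paper derives the right-hand side forward from the left, while you verify the identity by expanding both sides, which is the same algebra run in the opposite direction. The only piece you omit is the paper's one-line check that $T[Bel_1,Bel_2]$ is indeed a probability, namely $\sum_{x}\big[D_1\, p[Bel_2,Bel_1](x) + D_2\, p[Bel_1,Bel_2](x)\big] = D_1(1-N_2) + N_1 D_2 + D_2(1-N_1) + N_2 D_1 = D_1 + D_2$, a claim made in the theorem statement; it also follows from your identity by summing over $x$, since the remaining three distributions appearing in (\ref{eq:pcl}) are probabilities.
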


\begin{figure}[ht!]
\begin{center}
\includegraphics[width=0.7 \textwidth]{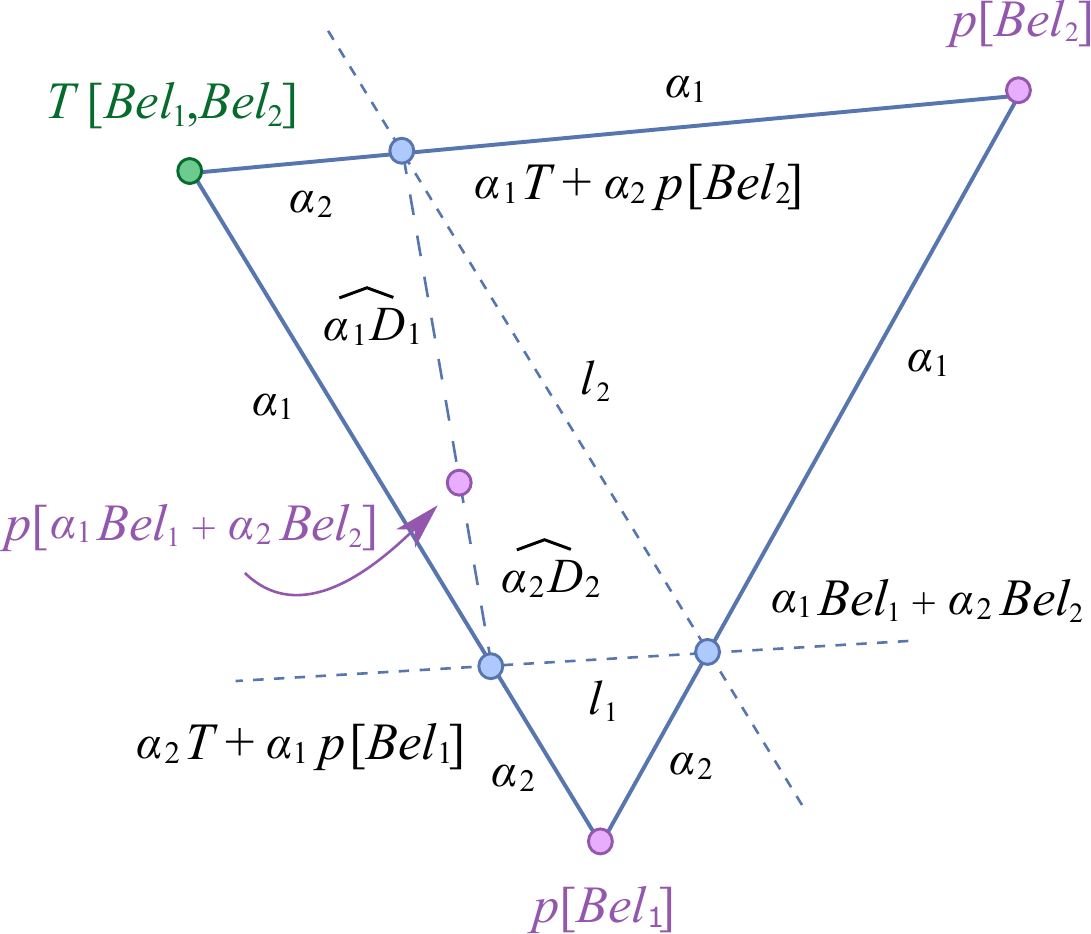}
\end{center}
\caption{Behaviour of the intersection probability $p[Bel]$ under affine combination. $\alpha_2 T + \alpha_1 p[Bel_1]$ and $\alpha_1 T + \alpha_2 p[Bel_2]$ lie in inverted locations on the segments joining $T[Bel_1,Bel_2]$ and $p[Bel_1]$, $p[Bel_2]$, respectively: $\alpha_i p[Bel_i] + \alpha_j T$ is the intersection of the line $Cl(T,p[Bel_i])$ with the line parallel
to $Cl(T,p[Bel_j])$ passing through $\alpha_1 p[Bel_1] + \alpha_2 p[Bel_2]$. 
The quantity $p[\alpha_1 Bel_1 + \alpha_2 Bel_2]$  is, finally, the
point of the segment joining them with convex coordinate $\widehat{\alpha_i D_i}$. \label{fig:T}}
\end{figure} 

Geometrically, $p[\alpha_1 \; Bel_1 + \alpha_2 \; Bel_2]$ can be constructed as in Fig. \ref{fig:T} as a point of the simplex $Cl(T[Bel_1,Bel_2],p[Bel_1],p[Bel_2])$. The point 
\[
\alpha_1 T[Bel_1,Bel_2] + \alpha_2 p[Bel_2] 
\]
is the intersection of the segment $Cl(T,p[Bel_2])$ with the line $l_2$ passing through $\alpha_1 p[Bel_1] + \alpha_2 p[Bel_2]$ and parallel to $Cl(T,p[Bel_1])$.
Dually, the point 
\[
\alpha_2 T[Bel_1,Bel_2] + \alpha_1 p[Bel_1] 
\]
is the intersection of the segment $Cl(T,p[Bel_1])$ with the line $l_1$ passing through $\alpha_1 p[Bel_1] + \alpha_2 p[Bel_2]$ and parallel to $Cl(T,p[Bel_2])$. 
Finally, 
$p[\alpha_1 Bel_1 + \alpha_2 Bel_2]$ is the point of the segment 
\[
Cl(\alpha_1 T + \alpha_2 p[Bel_2],\alpha_2 T + \alpha_1 p[Bel_1])
\] 
with convex coordinate $\widehat{\alpha_1 D_1}$ (or, equivalently, $\widehat{\alpha_2 D_2}$).

\subsubsection{Location of $T[Bel_1,Bel_2]$ in the binary case}

As an example, let us consider the location of $T[Bel_1,Bel_2]$ in the binary belief space $\mathcal{B}_2$ (Fig. \ref{fig:T2}), where
\[
\beta[Bel_1] = \beta[Bel_2] = \frac{m_i(\Theta)}{2 m_i(\Theta)} = \frac{1}{2}
\quad
\forall Bel_1,Bel_2 \in\mathcal{B}_2,
\]
and $p[Bel]$ always commutes with the convex closure operator. 
Accordingly,
\[
\begin{array}{lll}
\displaystyle 
T[Bel_1,Bel_2](x) & = & \displaystyle
\frac{m_1(\Theta)}{m_1(\Theta) + m_2(\Theta)} \left [ m_2(x) + \frac{m_2(\Theta)}{2} \right ] 
\\ \\
& & \displaystyle
+ \frac{m_2(\Theta)}{m_1(\Theta) + m_2(\Theta)} \left [ m_1(x) + \frac{m_1(\Theta)}{2} \right]
\\ \\
& = & \displaystyle 
\frac{m_1(\Theta)}{m_1(\Theta) + m_2(\Theta)} p[Bel_2] + \frac{m_2(\Theta)}{m_1(\Theta) + m_2(\Theta)} p[Bel_1].
\end{array}
\]

\begin{figure}[ht!]
\begin{center}
\includegraphics[width=0.7\textwidth]{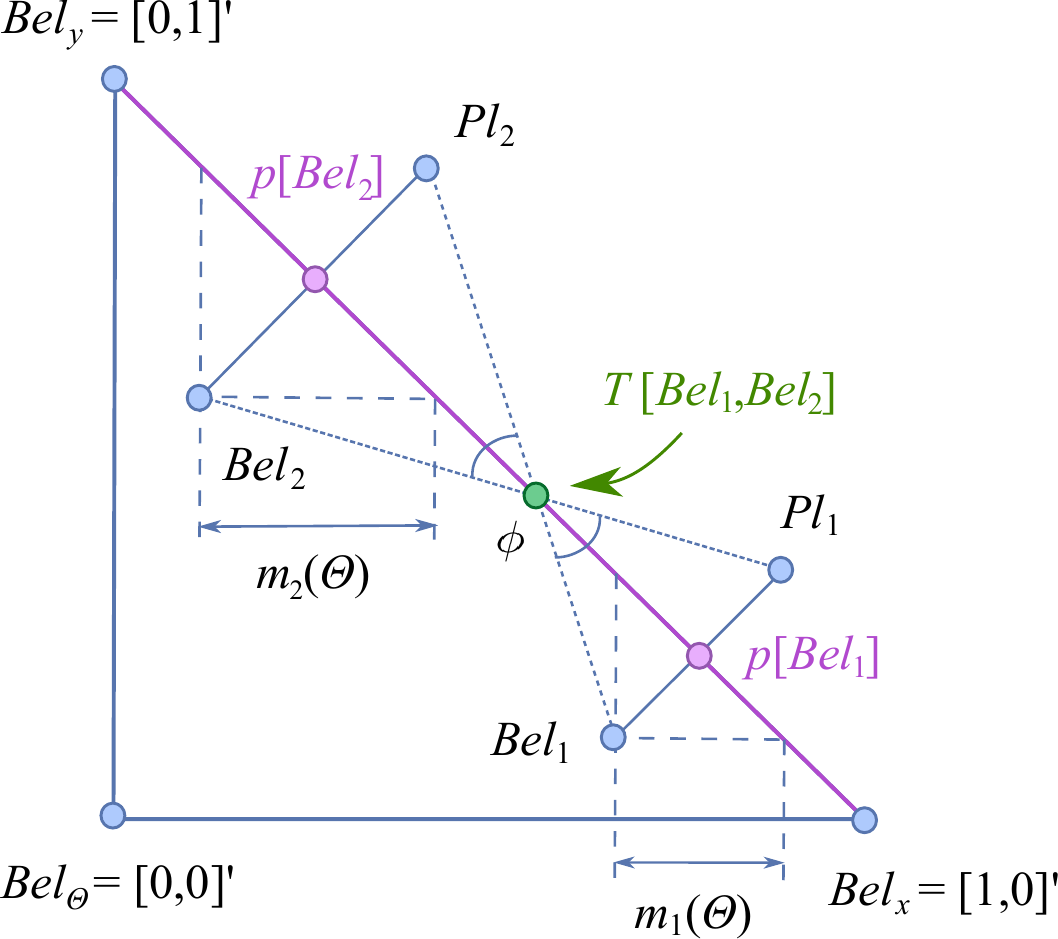}
\end{center}
\caption{Location of the probability function $T[Bel_1,Bel_2]$ in the binary belief space $\mathcal{B}_2$. \label{fig:T2}}
\end{figure} 

Looking at Figure \ref{fig:T2}, simple trigonometric considerations show that the segment defined by $Cl(p[Bel_i],T[Bel_1,Bel_2])$ has length
$m_i(\Theta)/(\sqrt{2}\tan{\phi})$, where $\phi$ is the angle between the segments $Cl(Bel_i,T)$ and $Cl(p[Bel_i],T)$. 
$T[Bel_1,Bel_2]$ is then the unique point of $\mathcal{P}$ such that the angles $\widehat{Bel_1 T p[Bel_1]}$ and $\widehat{Bel_2 T p[Bel_2]}$ coincide, i.e., $T$ is the intersection of $\mathcal{P}$ with the line passing through $Bel_i$ and the reflection of $Bel_j$ through $\mathcal{P}$. 
As this reflection (in $\mathcal{B}_2$) is nothing but $Pl_j$,
\[
T[Bel_1,Bel_2] = Cl(Bel_1,Pl_2) \cap \mathcal{P} = Cl(Bel_2, Pl_1) \cap \mathcal{P}.
\] 

\subsection{Convex closure} \label{sec:operators-closure}

Although the intersection probability does not commute with affine combination (Theorem \ref{the:pdbconvex}), $p[Bel]$ can still be assimilated into the
orthogonal projection and the pignistic function. Theorem \ref{the:pdbcommute} states the conditions under which $p[Bel]$ and convex closure ($Cl$) commute.

\begin{theorem} \label{the:pdbcommute}
The intersection probability and convex closure commute iff
\[
T[Bel_1,Bel_2] = \hat{D}_1 p[Bel_2] + \hat{D}_2 p[Bel_1]
\]
or, equivalently, either $\beta[Bel_1] = \beta[Bel_2]$ or $R[Bel_1] = R[Bel_2]$.
\end{theorem}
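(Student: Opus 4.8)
The plan is to read commutation straight off the affine-combination formula (\ref{eq:pcl}) of Theorem \ref{the:pdbconvex}, so no independent computation of $p[\alpha_1 Bel_1 + \alpha_2 Bel_2]$ is needed. By definition, $p[\cdot]$ commutes with convex closure exactly when
\[
p[\alpha_1 Bel_1 + \alpha_2 Bel_2] = \alpha_1\,p[Bel_1] + \alpha_2\,p[Bel_2]
\]
for every $\alpha_1 \in [0,1]$, $\alpha_2 = 1 - \alpha_1$. Abbreviating $a_i = \widehat{\alpha_i D_i}$ (so $a_1 + a_2 = 1$), $p_i = p[Bel_i]$ and $T = T[Bel_1,Bel_2]$, I would expand the right-hand side of (\ref{eq:pcl}) as $a_1\alpha_1 p_1 + a_2\alpha_2 p_2 + (a_1\alpha_2 + a_2\alpha_1)\,T$ and impose equality with $\alpha_1 p_1 + \alpha_2 p_2$.

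First I would collect coefficients. Using $a_1 - 1 = -a_2$ and $a_2 - 1 = -a_1$, the equality reduces to $(a_1\alpha_2 + a_2\alpha_1)\,T = \alpha_1 a_2\, p_1 + \alpha_2 a_1\, p_2$. Then I would substitute $a_i = \alpha_i D_i/(\alpha_1 D_1 + \alpha_2 D_2)$: the common factor $\alpha_1\alpha_2/(\alpha_1 D_1 + \alpha_2 D_2)$ occurs in every term, so after cancelling it (legitimate for $\alpha_1,\alpha_2 \neq 0$) the entire $\alpha$-dependence disappears and one is left with $(D_1+D_2)\,T = D_2 p_1 + D_1 p_2$, i.e. $T = \hat{D}_1 p_2 + \hat{D}_2 p_1$; the endpoints $\alpha_1 \in \{0,1\}$ are trivial. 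Crucially, this limiting condition no longer involves $\alpha_1$, so its holding for a single interior $\alpha_1$ already forces it for all of them, which is precisely the first stated characterisation. The only delicate point is the bookkeeping that makes the $\alpha$'s vanish, which I would carry out symbolically.

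For the second equivalence I would substitute the explicit forms of $T$ from (\ref{eq:t})--(\ref{eq:p12}) and of $p_1, p_2$ from (\ref{eq:form1}) into $T = \hat{D}_1 p_2 + \hat{D}_2 p_1$. Writing $\Delta_i(x) = Pl_i(x) - m_i(x)$, one has $p[Bel_2,Bel_1](x) = m_2(x) + \beta[Bel_1]\Delta_2(x)$ and $p[Bel_1,Bel_2](x) = m_1(x) + \beta[Bel_2]\Delta_1(x)$, while $p_i(x) = m_i(x) + \beta[Bel_i]\Delta_i(x)$; hence every singleton mass term cancels and the identity collapses, for each $x$, to
\[
(\beta[Bel_1] - \beta[Bel_2])\,\big(\hat{D}_1\,\Delta_2(x) - \hat{D}_2\,\Delta_1(x)\big) = 0.
\]
Since $\hat{D}_i = D_i/(D_1+D_2)$ and $R[Bel_i](x) = \Delta_i(x)/D_i$ by (\ref{eq:relative-uncertainty}), the bracket equals $\big(D_1\Delta_2(x) - D_2\Delta_1(x)\big)/(D_1+D_2)$, which vanishes for all $x$ precisely when $R[Bel_1] = R[Bel_2]$. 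Therefore the product vanishes for all $x$ iff $\beta[Bel_1] = \beta[Bel_2]$ or $R[Bel_1] = R[Bel_2]$, yielding the disjunction. I expect no real obstacle in this part beyond tracking which $\beta$ multiplies which interval width; once the two expansions are laid side by side, the argument is pure cancellation.
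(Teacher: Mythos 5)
Your proposal is correct and follows essentially the same route as the paper's proof: both start from formula (\ref{eq:pcl}), isolate the $T[Bel_1,Bel_2]$ terms, cancel the common nonzero factor $\alpha_1\alpha_2/(\alpha_1 D_1 + \alpha_2 D_2)$ to obtain the $\alpha$-independent condition $T = \hat{D}_1 p[Bel_2] + \hat{D}_2 p[Bel_1]$, and then substitute the explicit expressions so that the singleton masses cancel and the condition factors as $(\beta[Bel_1]-\beta[Bel_2])\bigl(D_1\Delta_2(x) - D_2\Delta_1(x)\bigr) = 0$, giving the disjunction $\beta[Bel_1]=\beta[Bel_2]$ or $R[Bel_1]=R[Bel_2]$. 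Your explicit remark that the reduced condition holds for one interior $\alpha_1$ iff it holds for all is a nice clarification of what the paper conveys by noting that the prefactor is ``always non-zero in non-trivial cases,'' but it is not a different argument.
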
 

Geometrically, only when the two lines $l_1,l_2$ in Fig. \ref{fig:T} are parallel
to the affine space $a(p[Bel_1],p[Bel_2])$ (i.e., $T[Bel_1,Bel_2] \in Cl(p[Bel_1],p[Bel_2])$; compare the above) does the desired quantity $p[\alpha_1 Bel_1 +
\alpha_2 Bel_2]$ belong to the line segment $Cl(p[Bel_1],p[Bel_2])$ (i.e., it is also a convex combination of $p[Bel_1]$ and $p[Bel_2]$).

Theorem \ref{the:pdbcommute} reflects the two complementary interpretations of $p[Bel]$ we gave in terms of $\beta[Bel]$ and $R[Bel]$:
\[
\begin{array}{l}
p[Bel] = m(x) + (1 - k_{Bel}) R[Bel](x),  
\\ \\
p[Bel] = m(x) + \beta[Bel] (Pl(x) - m(x)).
\end{array}
\]
If $\beta[Bel_1] = \beta[Bel_2]$, both belief functions assign to each singleton the same share of their non-Bayesian contribution.
If $R[Bel_1] = R[Bel_2]$, the non-Bayesian mass $1 - k_{Bel}$ is distributed in the same way to the elements of $\Theta$. 

A sufficient condition for the commutativity of $p[.]$ and $Cl(.)$ can be obtained via
the following decomposition of $\beta[Bel]$:
\begin{equation} \label{eq:beta-sigma}
\begin{array}{l}
\displaystyle \beta[Bel] = \frac{\sum_{|B|>1} m(B)}{\sum_{|B|>1} m(B) |B|} = \frac{\sum_{k = 2}^n \sum_{|B| = k}
m(B)}{ \sum_{k = 2}^n k \cdot \sum_{|B| = k} m(B) } = \frac{\sigma_2 + \cdots + \sigma_n}{ 2 \sigma_2 + \cdots + n \sigma_n},
\end{array}
\end{equation}
where $\sigma_k \doteq \sum_{|B|=k} m(B)$.

\begin{theorem} \label{the:pdb-sufficient-convex}
If the ratio between the total masses of focal elements of different cardinality is the same for all the belief functions involved, namely
\begin{equation} \label{eq:pdb-sufficient-convex}
\begin{array}{cccc}
\displaystyle \frac{\sigma_1^l}{\sigma_1^m} = \frac{\sigma_2^l}{\sigma_2^m} & \quad \forall l,m \geq 2 & \text{s.t.} & \sigma_1^m, \sigma_2^m \neq 0,
\end{array}
\end{equation}
then the intersection probability (considered as an operator mapping belief functions to probabilities) and convex combination commute.
\end{theorem}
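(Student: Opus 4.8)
The plan is to reduce everything to the characterization already established in Theorem \ref{the:pdbcommute}, which states that $p[\cdot]$ and convex closure commute precisely when $\beta[Bel_1] = \beta[Bel_2]$ \emph{or} $R[Bel_1] = R[Bel_2]$. Since the hypothesis (\ref{eq:pdb-sufficient-convex}) constrains only the \emph{aggregate} masses of focal elements grouped by cardinality, and not the contributions of individual singletons (which is what $R[Bel]$ records), it is the first of these two equalities, $\beta[Bel_1] = \beta[Bel_2]$, that I would aim to derive. Once that is in hand, the theorem follows immediately.

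The key observation is that the decomposition (\ref{eq:beta-sigma}),
\[
\beta[Bel] = \frac{\sigma_2 + \cdots + \sigma_n}{2\sigma_2 + \cdots + n\sigma_n},
\]
expresses $\beta$ as a ratio that is homogeneous of degree zero in the vector $(\sigma_2, \ldots, \sigma_n)$ of total masses carried at cardinalities $2, \ldots, n$ (the singleton mass $\sigma_1 = k_{Bel}$ enters neither numerator nor denominator). Consequently $\beta[Bel]$ depends only on the \emph{direction} of this vector, not on its overall scale: two belief functions with proportional cardinality-mass profiles must share the same value of $\beta$.

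It then remains to read the hypothesis (\ref{eq:pdb-sufficient-convex}) as exactly such a proportionality statement. Writing $\sigma_i^k$ for the total mass that $Bel_i$ assigns to events of size $k$, the condition $\sigma_1^l/\sigma_1^m = \sigma_2^l/\sigma_2^m$ for all $l,m \geq 2$ says that $\sigma_1^k/\sigma_2^k$ is independent of $k$; calling this common value $c$, we obtain $\sigma_1^k = c\,\sigma_2^k$ for every $k \geq 2$. Substituting into the displayed ratio, the factor $c$ cancels from numerator and denominator, yielding $\beta[Bel_1] = \beta[Bel_2]$, whence commutativity follows by Theorem \ref{the:pdbcommute}.

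The cancellation itself is routine; the only point requiring care, and the place where I would expect the argument to need hedging, is the treatment of vanishing masses. One must fix a cardinality $m \geq 2$ with $\sigma_1^m, \sigma_2^m \neq 0$ in order to define the proportionality constant $c = \sigma_1^m/\sigma_2^m$, and one should separately dispose of the degenerate case in which some $Bel_i$ is Bayesian (all $\sigma_i^k = 0$ for $k \geq 2$), where $\beta$ is undefined but $p[Bel_i] = Bel_i$, making commutativity trivial. Provided at least one cardinality $\geq 2$ is populated in both belief functions, the scale-invariance argument above goes through unchanged.
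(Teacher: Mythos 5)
Your proposal is correct and takes essentially the same route as the paper's own proof: both use the decomposition (\ref{eq:beta-sigma}) to show that the hypothesis forces $\beta[Bel_1] = \beta[Bel_2]$ (the paper via a cross-multiplication and division argument, you via the equivalent observation that $\beta$ is scale-invariant in the vector $(\sigma_2,\ldots,\sigma_n)$), and then conclude by the characterisation in Theorem \ref{the:pdbcommute}. Your explicit hedging on the degenerate cases (a Bayesian summand, or no cardinality populated in both belief functions) is in fact slightly more careful than the paper, whose proof simply assumes a common cardinality $k$ with $\sigma_1^k \neq 0 \neq \sigma_2^k$.
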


\section{Conclusions} \label{sec:decision}

The intersection probability possesses a simple credal interpretation in the probability simplex (Section \ref{sec:credal}), as it can be linked to a pair of credal sets, thus in turn extending the classical interpretation of the pignistic transformation as the barycentre of the polygon of consistent probabilities. 

Just like $\mathcal{P}[Bel]$ is the credal set associated with a belief function $Bel$, the upper and lower simplices geometrically embody the probability interval associated with $Bel$:
\[
\mathcal{P}[(Bel,Pl)] = \Big \{ p \in \mathcal{P} : Bel(x) \leq p(x) \leq Pl(x), \forall x \in \Theta \Big \}.
\]
The intersection probability turns out to be the focus to the pair $\{ T^1[Bel], T^{n-1}[Bel] \}$ of lower and upper simplices:
\begin{equation} \label{eq:star-credal}
f(T^1[Bel], T^{n-1}[Bel]) = p[Bel].
\end{equation}
Its coordinates as foci encode major features of the underlying belief function, in particular the fraction $\beta$ of the related probability interval which yields the intersection probability. 

This credal interpretation 
hints at the possible formulation of a decision making framework for probability intervals analogous to Smets's \emph{transferable belief model} (TBM) \cite{smets93belief}.
We can think of the TBM as a pair $\{ \mathcal{P}[Bel], BetP[Bel] \}$ formed by a credal set linked to each belief function $Bel$ (in this case the polytope of consistent probabilities) and a probability transformation (the pignistic function).
As the barycentre of a simplex is a special case of a focus, the pignistic transformation is just another probability transformation induced by the focus of two simplices.

The results in this paper therefore suggest a similar framework
\[
\Big \{ \big \{ T^{1}[Bel], T^{n-1}[Bel] \big \}, p[Bel] \Big \},
\]
in which interval constraints on probability distributions on $\mathcal{P}$ are represented by a similar pair, formed by the above pair of simplices and by the probability transformation identified by their focus. Decisions are then made based on the appropriate focus probability, i.e., the intersection probability.

In the TBM \cite{smets93belief}, disjunctive/conjunctive combination rules are applied to belief functions to update or revise our state of belief according to new evidence. The formulation of a similar alternative frameworks for interval probability systems would then require us to design specific evidence elicitation/revision operators for them.
This elicits a number of questions: how can we design such operator(s)? How are they related to combination rules for belief functions in the case of probability intervals induced by belief functions?

We will further explore the betting interpretation of the intersection probability in the near future.



\section*{Appendix}

\subsection*{Proof of Theorem \ref{the:simplices}}

\begin{lemma} \label{lem:affine-tx}
The points $\{ t_x^1[Bel], x \in \Theta \}$ are affinely independent.
\end{lemma}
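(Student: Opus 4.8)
The plan is to read affine independence straight off the explicit coordinates of the vertices, so the whole argument reduces to one line of linear algebra. Using the mass-assignment characterisation recorded immediately after Theorem \ref{the:simplices} — namely $m_{t^1_x[Bel]}(x) = m(x) + 1 - k_{Bel}$ and $m_{t^1_x[Bel]}(y) = m(y)$ for $y \neq x$ — I would first rewrite each vertex as a single common point plus one rescaled vertex of the probability simplex:
\[
t^1_x[Bel] = \sum_{y \neq x} m(y) Bel_y + \left(1 - \sum_{y \neq x} m(y)\right) Bel_x = \sum_{y\in\Theta} m(y) Bel_y + (1 - k_{Bel}) Bel_x = \bar{Bel} + (1 - k_{Bel}) Bel_x,
\]
where $\bar{Bel} = \sum_{y\in\Theta} m(y) Bel_y$ is independent of $x$. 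In this form all the $x$-dependence is isolated in the term $(1-k_{Bel}) Bel_x$, which is exactly what makes the independence transparent.

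Next I would verify the defining condition of affine independence. Suppose $\sum_{x\in\Theta} \lambda_x \, t^1_x[Bel] = 0$ with $\sum_{x\in\Theta}\lambda_x = 0$. Substituting the expression above, the constant part contributes $\big(\sum_x \lambda_x\big)\bar{Bel}$, which vanishes by the constraint on the coefficients, leaving
\[
(1 - k_{Bel}) \sum_{x\in\Theta} \lambda_x Bel_x = 0.
\]
Since the categorical belief functions $\{ Bel_x, x\in\Theta\}$ are the vertices of the probability simplex $\mathcal{P}$ and hence linearly independent, and since the scalar $1 - k_{Bel}$ is nonzero, this forces $\lambda_x = 0$ for every $x$, which is precisely affine independence. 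Equivalently, one may simply note $t^1_x[Bel] - t^1_{x'}[Bel] = (1-k_{Bel})(Bel_x - Bel_{x'})$, so the $n-1$ edge vectors emanating from any fixed vertex are a common nonzero multiple of the standard affinely independent directions of $\mathcal{P}$.

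The only point requiring care — and the sole potential obstacle — is the scalar $1 - k_{Bel}$: the argument works precisely because $k_{Bel} = \sum_{x} m(x) < 1$ whenever $Bel$ is not Bayesian, so the vertices are genuinely spread apart. The degenerate case $k_{Bel} = 1$ (a Bayesian $Bel$) is not really a difficulty but an exclusion, since there $1 - k_{Bel} = 0$, all the $t^1_x[Bel]$ collapse onto the single point $\bar{Bel}$, and the lower simplex degenerates to a point; the lemma is therefore understood under the standing assumption that $Bel$ is non-Bayesian. Beyond flagging this, the proof is routine linear algebra with no further complication.
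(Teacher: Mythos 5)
Your proof is correct, and at bottom it rests on the same fact as the paper's: the affine independence of the categorical belief functions $\{Bel_x,\; x \in \Theta\}$. The route there, however, is different. The paper argues by contradiction: it assumes one vertex admits an affine decomposition $t_x^1[Bel] = \sum_{z \neq x} \alpha_z t_z^1[Bel]$ in terms of the others, expands each $t_z^1[Bel]$ via (\ref{eq:t1-vertices}), regroups the sums, and concludes that equality would force $\sum_{z \neq x} \alpha_z Bel_z = Bel_x$, contradicting the affine independence of the $Bel_x$. You instead isolate the $x$-dependence once and for all through the decomposition $t^1_x[Bel] = \bar{Bel} + (1 - k_{Bel}) Bel_x$, where $\bar{Bel} = \sum_y m(y) Bel_y$ does not depend on $x$, so the vertices of $T^1[Bel]$ are a translate of a rescaled copy of the vertices of $\mathcal{P}$ and independence transfers in one line. (A small remark: you invoke \emph{linear} independence of the $Bel_x$, which does hold in the belief-space embedding since the singleton coordinates form an identity block, but affine independence --- which is what the paper asserts --- already suffices, because your coefficients satisfy $\sum_x \lambda_x = 0$.) What your packaging buys, beyond brevity, is that it exposes the one hypothesis that actually matters: the scalar $1 - k_{Bel}$ must be nonzero. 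The paper's proof uses this silently --- its concluding ``if and only if $\sum_{z \neq x} \alpha_z Bel_z = Bel_x$'' step divides by $1 - k_{Bel}$ without saying so --- yet when $Bel$ is Bayesian ($k_{Bel} = 1$) all the $t^1_x[Bel]$ collapse onto the single point $\bar{Bel}$ and the lemma fails as stated. Making that degenerate case explicit, as you do, is a genuine improvement over the paper's treatment.
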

\begin{proof} 
Let us suppose, contrary to the thesis, that there exists an affine decomposition of one of the points, say $t_x[Bel]$, in terms of the others: 
\[
t_x^1[Bel] = \sum_{z \neq x} \alpha_z t_z^1[Bel], \quad \alpha_z \geq 0 \; \forall z \neq x, \; \sum_{z \neq x} \alpha_z = 1. 
\]
But then we would have, by the definition of $t_z^1[Bel]$,
\[
\begin{array}{lll}
t_x^1[Bel] & = & \displaystyle \sum_{z \neq x} \alpha_z t_z^1[Bel] 
\\
& = & 
\displaystyle
\sum_{z \neq x} \alpha_z \left ( \sum_{y \neq z} m(y) Bel_y \right ) + \sum_{z \neq x} \alpha_z \big ( m(z) + 1 - k_{Bel} \big ) Bel_z 
\\ \\
& = & 
\displaystyle 
m(x) Bel_x \sum_{z \neq x} \alpha_z + \sum_{z \neq x} Bel_z m(z) (1 - \alpha_z) 
\end{array}
\]

\[
\begin{array}{lll}
& & 
+ \displaystyle 
\sum_{z \neq x} \alpha_z m(z) Bel_z + (1 - k_{Bel}) \sum_{z \neq x} \alpha_z Bel_z 
\\ \\
& = & 
\displaystyle 
\sum_{z \neq x} m(z) Bel_z + m(x) Bel_x + (1 - k_{Bel}) \sum_{z \neq x} \alpha_z Bel_z.
\end{array}
\]
The latter is equal to (\ref{eq:t1-vertices})
\[
t_x^1[Bel] = \sum_{z \neq x} m(z) Bel_z + (m(x) + 1 - k_{Bel}) Bel_x
\]
if and only if
$\sum_{z \neq x} \alpha_z Bel_z = Bel_x.$
But this is impossible, as the categorical probabilities $Bel_x$ are trivially affinely independent.
\end{proof}

\subsection*{Proof of Theorem \ref{the:focus-intersection}}

Suppose that $p$ is a special focus of $S$ and $T$, namely $\exists \alpha \in \mathbb{R}$ such that
\[
p = \alpha s_i + (1 - \alpha) t_{\rho(i)} \quad \forall i = 1, \ldots, n,
\]
for some permutation $\rho$ of $\{1, \ldots, n\}$.
Then, necessarily,
\[
t_{\rho(i)} = \frac{1}{1-\alpha} [ p - \alpha s_i ]  \quad \forall i = 1, \ldots, n. 
\]
If $p$ has coordinates $\{\alpha_i, i = 1, \ldots, n\}$ in $T$, $p = \sum_{i = 1}^n \alpha_i t_{\rho(i)}$, it follows that
\[
\begin{array}{lll}
p & = & \displaystyle \sum_{i = 1}^n \alpha_i t_{\rho(i)} = \frac{1}{1-\alpha} \sum_i \alpha_i \big [ p - \alpha s_i \big ]
\\ \\
 & = & \displaystyle \frac{1}{1 - \alpha} \left [ p \sum_i \alpha_i - \alpha \sum_i \alpha_i s_i \right ] = \frac{1}{1-\alpha} \left [ p - \alpha \sum_i \alpha_i s_i \right ].
\end{array}
\]
The latter implies that $p = \sum_i \alpha_i s_i$, i.e., $p$ has the same simplicial coordinates in $S$ and $T$.

\subsection*{Proof of Theorem \ref{the:pdb-focus}}

The common simplicial coordinates of $p[Bel]$ in $T^1[Bel]$ and $T^{n-1}[Bel]$ turn out to be the values of the relative uncertainty function (\ref{eq:relative-uncertainty}) for $Bel$,
\begin{equation} \label{eq:r-belief}
R[Bel](x) = \frac{Pl(x) - m(x)}{k_{Pl} - k_{Bel}}.
\end{equation}
Recalling the expression (\ref{eq:t1-vertices}) for the vertices of $T^1[Bel]$, the point of the simplex $T^1[Bel]$ with coordinates (\ref{eq:r-belief}) is
\[
\begin{array}{lll}
& &
\displaystyle 
\sum_x R[Bel](x) t_x^1[Bel] 
\\
& = & 
\displaystyle 
\sum_x R[Bel](x) \left [ \sum_{y \neq x} m(y) Bel_y + \bigg ( 1 - \sum_{y \neq x} m(y) \bigg ) Bel_x \right ] 
\\ \\
& = & 
\displaystyle 
\sum_x R[Bel](x) \left [ \sum_{y \in\Theta } m(y) Bel_y + (1 - k_{Bel}) Bel_x \right ] 
\\ \\
& = & 
\displaystyle 
\sum_x Bel_x \left [ (1 - k_{Bel}) R[Bel](x) + m(x) \sum_y R[Bel](y) \right ] 
\\ \\
& = & 
\displaystyle 
\sum_x Bel_x \Big [ (1 - k_{Bel}) R[Bel](x) + m(x) \Big ],
\end{array}
\]
as $R[Bel]$ is a probability ($\sum_y R[Bel](y) = 1$). By Equation (\ref{eq:form2}), the above quantity coincides with $p[Bel]$. 

The point of $T^{n-1}[Bel]$ with the same coordinates, $\{ R[Bel](x), x \in \Theta \}$, is
\[
\begin{array}{lll}
& &
\displaystyle 
\sum_x R[Bel](x) t_x^{n-1}[Bel] 
\\
& = & 
\displaystyle 
\sum_x R[Bel](x) \left [ \sum_{y \neq x} Pl(y) Bel_y + \bigg ( 1 - \sum_{y \neq x} Pl(y) \bigg ) Bel_x \right ] 
\\ \\
& = & 
\displaystyle 
\sum_x R[Bel](x) \left [ \sum_{y \in\Theta } Pl(y) Bel_y + (1 - k_{Pl}) Bel_x \right ] 
\\ \\
& = & 
\displaystyle \sum_x Bel_x \left [ (1 - k_{Pl}) R[Bel](x) + Pl(x) \sum_y R[Bel](y) \right ] 
\\ \\
& = & 
\displaystyle 
\sum_x Bel_x \Big [ (1 - k_{Pl}) R[Bel](x) + Pl(x) \Big ] 
\\ \\
& = &  
\displaystyle
\sum_x Bel_x \bigg [ Pl(x) \frac{1 - k_{Bel}}{k_{Pl} - k_{Bel}} - m(x) \frac{1 - k_{Bel}}{k_{Pl} - k_{Bel}} \bigg ],
\end{array}
\]
which is equal to $p[Bel]$ by (\ref{eq:r-belief}). 

\subsection*{Proof of Theorem \ref{the:pdb-coordinate}}

Again, we need to impose the condition (\ref{eq:condition}) on the pair $\{ T^1[Bel], T^{n-1}[Bel] \}$, namely
\[
p[Bel] = t_x^1[Bel] + \alpha (t_x^{n-1}[Bel] - t_x^1[Bel]) = (1 - \alpha) t_x^1[Bel] + \alpha t_x^{n-1}[Bel]
\]
for all the elements $x \in \Theta$ of the frame, $\alpha$ being some constant real number. This is equivalent to (after substituting the expressions (\ref{eq:t1-vertices}), (\ref{eq:tn-1-vertices}) for $t_x^1[Bel]$ and $t_x^{n-1}[Bel]$)
\[
\begin{array}{lll}
& &
\displaystyle \sum_{x \in\Theta} Bel_x \Big [ m(x) + \beta[Bel] (Pl(x) - m(x)) \Big ] 
\\ 
& = & \displaystyle 
(1-\alpha) \left [ \sum_{y \in\Theta} m(y) Bel_y + (1 - k_{Bel}) Bel_x \right ]
\\  \\
& & \displaystyle
+ \alpha \left [ \sum_{y \in\Theta} Pl(y) Bel_y + ( 1 - k_{Pl}) Bel_x \right ] 
\\ \\
& = & \displaystyle 
Bel_x \Big [ (1-\alpha) (1 - k_{Bel}) + (1-\alpha) m(x) + \alpha Pl(x) + \alpha (1 - k_{Pl}) \Big ] 
\\ \\
& & \displaystyle 
+ \sum_{y \neq x} Bel_y \Big [ (1-\alpha) m(y) + \alpha Pl(y) \Big ] 
\\ \\
& = & \displaystyle 
Bel_x \Big \{ (1 - k_{Bel}) + m(x) + \alpha \big [ Pl(x) + (1 - k_{Pl}) - m(x) - (1 - k_{Bel}) \big ] \Big \} 
\\ \\
& & \displaystyle  + \sum_{y \neq x} Bel_y \Big [ m(y) + \alpha \big ( Pl(y) - m(y)  \big ) \Big ].
\end{array}
\]
If we set $\alpha = \beta[Bel] = (1 - k_{Bel}) / (k_{Pl} - k_{Bel})$, we get the following for the coefficient of $Bel_x$ in the above expression (i.e., the probability value of $x$): 
\[
\begin{array}{lll}
& & \displaystyle
\frac{1 - k_{Bel}}{k_{Pl} - k_{Bel}} \big [ Pl(x) + (1 - k_{Pl}) - m(x) - (1 - k_{Bel}) \big ] + (1 - k_{Bel}) + m(x)
\\ \\
& = & \beta[Bel] [ Pl(x) - m(x) ] + (1 - k_{Bel}) + m(x) - (1 - k_{Bel}) = p[Bel](x). 
\end{array}
\]
On the other hand,
\[
m(y) + \alpha (Pl(y) - m(y) ) = m(y) + \beta[Bel] (Pl(y) - m(y) ) = p[Bel](y) 
\]
for all $y \neq x$, no matter what the choice of $x$.

\subsection*{Proof of Theorem \ref{the:pdbconvex}}

By definition, the quantity $p[\alpha_1 Bel_1 + \alpha_2 Bel_2](x)$ can be written as
\begin{equation}\label{eq:palpha}
m_{\alpha_1 Bel_1 + \alpha_2 Bel_2} (x) + \beta [\alpha_1 Bel_1 + \alpha_2 Bel_2]
\sum_{A\supsetneq \{x\}} m_{\alpha_1 Bel_1 + \alpha_2 Bel_2}(A),
\end{equation}
where
\[
\begin{array}{lll}
& & \displaystyle
\beta[\alpha_1 Bel_1 + \alpha_2 Bel_2]
\\ \\
 & = & \displaystyle 
\frac{\displaystyle \sum_{|A|>1} m_{\alpha_1 Bel_1 + \alpha_2 Bel_2}(A)}
{\displaystyle \sum_{|A|>1} m_{\alpha_1 Bel_1 + \alpha_2 Bel_2}(A) |A|} 
= 
\frac{\displaystyle \alpha_1 \sum_{|A|>1}m_1(A) + \alpha_2 \sum_{|A|>1}m_2(A)}{\displaystyle \alpha_1 \sum_{|A|>1} m_1(A) |A| + \alpha_2 \sum_{|A|>1} m_2(A) |A|} 
\\ \\ 
& = & \displaystyle \frac{\alpha_1 N_1 + \alpha_2 N_2}{\alpha_1 D_1 + \alpha_2 D_2}
= \frac{\alpha_1 D_1 \beta[Bel_1] + \alpha_2 D_2 \beta[Bel_2]}{\alpha_1 D_1 + \alpha_2 D_2} 
\\ \\
& = & \widehat{\alpha_1 D_1} \beta[Bel_1] + \widehat{\alpha_2 D_2} \beta[Bel_2],
\end{array}
\]
once we introduce the notation $\beta[Bel_i] = N_i/D_i$.  

Substituting this decomposition for $\beta[\alpha_1 Bel_1 + \alpha_2 Bel_2]$ into (\ref{eq:palpha}) yields
\begin{equation} \label{eq:proof-T}
\begin{array}{lll}
& & \displaystyle \alpha_1 m_1(x) + \alpha_2 m_2(x)
\\
& & \displaystyle
+ \Big ( \widehat{\alpha_1 D_1} \beta[Bel_1] + \widehat{\alpha_2 D_2} \beta[Bel_2] \Big ) \left ( \alpha_1 \sum_{A\supsetneq \{x\}} m_1(A) + \alpha_2 \sum_{A\supsetneq \{x\}} m_2(A) \right ) 
\\ \\
& = & \displaystyle \frac{(\alpha_1 D_1+\alpha_2 D_2)(\alpha_1 m_1(x) + \alpha_2 m_2(x))}{\alpha_1 D_1+\alpha_2 D_2} 
\\ \\
& & \displaystyle + \frac{\alpha_1 D_1 \beta[Bel_1] + \alpha_2 D_2 \beta[Bel_2]}{\alpha_1 D_1+\alpha_2 D_2} \left ( \alpha_1 \sum_{A\supsetneq \{x\}} m_1(A) + \alpha_2 \sum_{A\supsetneq \{x\}} m_2(A) \right),
\end{array}
\end{equation}
which can be reduced further to
\[
\begin{array}{lll}
& & \displaystyle \frac{\alpha_1 D_1}{\alpha_1 D_1+\alpha_2 D_2} \Big ( \alpha_1 m_1(x) + \alpha_2 m_2(x) \Big ) 
\\
& & + \displaystyle 
\frac{\alpha_1 D_1}{\alpha_1 D_1+\alpha_2 D_2} \beta[Bel_1] \left ( \alpha_1 \sum_{A\supsetneq \{x\}} m_1(A) + \alpha_2 \sum_{A\supsetneq \{x\}} m_2(A) \right ) 
\\
& & \displaystyle
+ \frac{\alpha_2 D_2}{\alpha_1 D_1+\alpha_2 D_2} \Big ( \alpha_1 m_1(x) + \alpha_2 m_2(x) 
\Big )
\\
& & \displaystyle
+ \frac{\alpha_2 D_2}{\alpha_1 D_1+\alpha_2 D_2} \beta[Bel_2] \left ( \alpha_1 \sum_{A\supsetneq \{x\}} m_1(A) + \alpha_2 \sum_{A\supsetneq \{x\}} m_2(A) \right) 
\\ \\
& = & \displaystyle \frac{\alpha_1 D_1}{\alpha_1 D_1+\alpha_2 D_2} \alpha_1 \left ( m_1(x) + \beta[Bel_1] \sum_{A\supsetneq \{x\}} m_1(A) \right ) 
\end{array}
\]

\begin{equation}
\begin{array}{lll}
& & \displaystyle
+ \frac{\alpha_1 D_1}{\alpha_1 D_1+\alpha_2 D_2} \alpha_2 \left ( m_2(x) + \beta[Bel_1] \sum_{A\supsetneq \{x\}} m_2(A) \right ) 
\\ \\
& & + \displaystyle \frac{\alpha_2 D_2}{\alpha_1 D_1 + \alpha_2 D_2} \alpha_1 \left (
m_1(x) + \beta[Bel_2] \sum_{A\supsetneq \{x\}} m_1(A) \right ) 
\\ \\
& & + \displaystyle \frac{\alpha_2 D_2}{\alpha_1 D_1 + \alpha_2 D_2} \alpha_2 \left ( m_2(x) + \beta[Bel_2] \sum_{A\supsetneq \{x\}} m_2(A) \right ) 
\\ \\
& = &  \displaystyle
\frac{\alpha^2_1 D_1}{\alpha_1 D_1+\alpha_2 D_2} p[Bel_1] + \frac{\alpha^2_2 D_2}{\alpha_1 D_1+\alpha_2 D_2} p[Bel_2] 
\\ \\
& & \displaystyle
+ \frac{\alpha_1 \alpha_2}{\alpha_1 D_1 + \alpha_2 D_2} \Big ( D_1 p[Bel_2,Bel_1](x) + D_2 p[Bel_1,Bel_2](x) \Big ),
\end{array}
\end{equation}
after recalling (\ref{eq:p12}). 

We can notice further that the function
\[
F(x) \doteq D_1 p[Bel_2,Bel_1](x) + D_2 p[Bel_1,Bel_2](x)
\]
is such that
\[
\begin{array}{ll}
&
\displaystyle
\sum_{x \in \Theta} F(x)
\\
=
&
\displaystyle 
\sum_{x \in \Theta} \big [ D_1 m_2(x) + N_1 (Pl_2- m_2(x)) + D_2 m_1(x) + N_2
(Pl_1 - m_1(x)) \big ] 
\\ 
= 
&
D_1 (1 - N_2) + N_1 D_2 + D_2 (1 - N_1) + N_2 D_1 = D_1 + D_2
\end{array}
\]
(making use of (\ref{eq:beta})). Thus, $T[Bel_1,Bel_2](x) = F(x)/(D_1+D_2)$ is a probability (as $T[Bel_1,Bel_2](x)$ is always non-negative), expressed by (\ref{eq:t}). 
By (\ref{eq:proof-T}), the quantity $p[\alpha_1 Bel_1 + \alpha_2 Bel_2](x)$ can be expressed as
\[
\frac{\alpha_1^2 D_1 p[Bel_1](x) + \alpha_2^2 D_2 p[Bel_2](x) + \alpha_1 \alpha_2 (D_1 + D_2) T[Bel_1,Bel_2](x)}{\alpha_1 D_1+\alpha_2 D_2},
\]
i.e., (\ref{eq:pcl}). 

\subsection*{Proof of Theorem \ref{the:pdbcommute}}

By (\ref{eq:pcl}), we have that
\[
\begin{array}{lll}
& & p[\alpha_1 Bel_1+\alpha_2 Bel_2] - \alpha_1 p[Bel_1] - \alpha_2 p[Bel_2]
\\ \\
& = &  \widehat{\alpha_1 D_1} \alpha_1 p[Bel_1] + \widehat{\alpha_1 D_1} \alpha_2 T + \widehat{\alpha_2 D_2} \alpha_1 T + \widehat{\alpha_2 D_2} \alpha_2 p[Bel_2] 
\\ \\
& & - \alpha_1 p[Bel_1] - \alpha_2 p[Bel_2] 
\\ \\
& = & \alpha_1 p[Bel_1] (\widehat{\alpha_1 D_1} -1) + \widehat{\alpha_1 D_1} \alpha_2 T + \widehat{\alpha_2 D_2} \alpha_1 T + \alpha_2 p[Bel_2] (\widehat{\alpha_2 D_2}-1) 
\\ \\
& = & - \alpha_1 p[Bel_1] \widehat{\alpha_2 D_2} + \widehat{\alpha_1 D_1} \alpha_2 T + \widehat{\alpha_2 D_2} \alpha_1 T - \alpha_2 p[Bel_2] \widehat{\alpha_1 D_1}
\\ \\
& = & \widehat{\alpha_1 D_1} \Big (\alpha_2 T - \alpha_2 p[Bel_2] \Big ) + \widehat{\alpha_2 D_2} \Big (\alpha_1 T - \alpha_1 p[Bel_1] \Big ) 
\\ \\
& = & 
\displaystyle
\frac{\displaystyle \alpha_1 \alpha_2}{\displaystyle\alpha_1 D_1 + \alpha_2 D_2} \Big [D_1 (T-p[Bel_2]) + D_2 (T-p[Bel_1]) \Big ].
\end{array}
\]
This is zero iff
\[
T[Bel_1,Bel_2] (D_1+D_2) = p[Bel_1] D_2 + p[Bel_2] D_1,
\]
which is equivalent to
\[
T[Bel_1,Bel_2] = \hat{D}_1 p[Bel_2] + \hat{D}_2 p[Bel_1],
\]
as $(\alpha_1 \alpha_2) / (\alpha_1 D_1 + \alpha_2 D_2)$ is always non-zero in non-trivial cases. This is equivalent to (after replacing the expressions for $p[Bel]$ (\ref{eq:interpretation2}) and $T[Bel_1,Bel_2]$ (\ref{eq:t}))
\[
D_1 (Pl_2- m_2(x))(\beta[Bel_2] - \beta[Bel_1]) + D_2 (Pl_1 - m_1(x))(\beta[Bel_1] - \beta[Bel_2]) = 0,
\]
which is in turn equivalent to
\[
\begin{array}{c}
(\beta[Bel_2] - \beta[Bel_1]) \Big[ D_1 (Pl_2(x) - m_2(x)) - D_2 (Pl_1(x) - m_1(x)) \Big] = 0.
\end{array}
\]
Obviously, this is true iff $\beta[Bel_1] = \beta[Bel_2]$ \emph{or} the second factor is zero, i.e.,
\[
\begin{array}{lll}
& & 
\displaystyle D_1 D_2 \frac{Pl_2(x) - m_2(x)}{D_2} - D_1 D_2 \frac{Pl_1(x) - m_1(x)}{D_1}
\\ \\
& = & D_1 D_2 (R[Bel_2](x) - R[Bel_1](x)) = 0
\end{array}
\]
for all $x\in\Theta$, i.e., $R[Bel_1] = R[Bel_2]$. 

\subsection*{Proof of Theorem \ref{the:pdb-sufficient-convex}}

By (\ref{eq:beta-sigma}), the equality $\beta[Bel_1] = \beta[Bel_2]$ is equivalent to
\[
(2 \sigma_2^2 + \cdots + n \sigma_2^n) (\sigma_1^2 + \cdots + \sigma_1^n) 
= 
(2 \sigma_1^2 + \cdots + n \sigma_1^n) (\sigma_2^2 + \cdots + \sigma_2^n).
\]
Let us assume that there exists a cardinality $k$ such that
$\sigma_1^k \neq 0 \neq \sigma_2^k$. We can then divide the two sides by $\sigma_1^{k}$ and $\sigma_2^{k}$, obtaining
\[
\begin{array}{lll}
& & \displaystyle 
\left ( 2 \frac{\sigma_2^2}{\sigma_2^{k}} + \cdots + k + \cdots + n \frac{\sigma_2^n}{\sigma_2^{k}} \right) \left ( \frac{\sigma_1^2}{\sigma_1^{k}} + \cdots + 1 + \cdots + \frac{\sigma_1^n}{\sigma_1^{k}} \right) 
\\ \\
& = & \displaystyle \left ( 2 \frac{\sigma_1^2}{\sigma_1^{k}} + \cdots + k + \cdots + n \frac{\sigma_1^n}{\sigma_1^{k_1}} \right ) \left ( \frac{\sigma_2^2}{\sigma_2^{k}} + \cdots + 1 + \cdots + \frac{\sigma_2^n}{\sigma_2^{k}} \right ).
\end{array}
\]
Therefore, if $\sigma_1^j / \sigma_1^k = \sigma_2^j / \sigma_2^k$ $\forall j\neq k$, the condition $\beta[Bel_1] = \beta[Bel_2]$ is satisfied. But this is equivalent to (\ref{eq:pdb-sufficient-convex}).

\bibliographystyle{plain}
\bibliography{arxiv-ijcai-survey}

\begin{thebibliography}{100}

\bibitem{antonucci10-credal}
Alessandro Antonucci and Fabio Cuzzolin.
\newblock Credal sets approximation by lower probabilities: {A}pplication to
  credal networks.
\newblock In Eyke Hüllermeier, Rudolf Kruse, and Frank Hoffmann, editors, {\em
  Computational Intelligence for Knowledge-Based Systems Design}, volume 6178
  of {\em Lecture Notes in Computer Science}, pages 716--725. Springer, Berlin
  Heidelberg, 2010.

\bibitem{aregui08constructing}
Astride Aregui and Thierry Den{\oe}ux.
\newblock Constructing consonant belief functions from sample data using
  confidence sets of pignistic probabilities.
\newblock {\em International Journal of Approximate Reasoning}, 49(3):575--594,
  2008.

\bibitem{bauer97approximation}
Mathias Bauer.
\newblock Approximation algorithms and decision making in the
  {D}empster--{S}hafer theory of evidence -- {A}n empirical study.
\newblock {\em International Journal of Approximate Reasoning},
  17(2-3):217--237, 1997.

\bibitem{Yaghlane01ecsqaru}
Amel Ben~Yaghlane, Thierry Den{\oe}ux, and Khaled Mellouli.
\newblock Coarsening approximations of belief functions.
\newblock In S.~Benferhat and P.~Besnard, editors, {\em Proceedings of the 6th
  European Conference on Symbolic and Quantitative Approaches to Reasoning and
  Uncertainty (ECSQARU-2001)}, pages 362--373, 2001.

\bibitem{black97geometric}
Paul~K. Black.
\newblock Geometric structure of lower probabilities.
\newblock In Goutsias, Malher, and Nguyen, editors, {\em Random Sets: Theory
  and Applications}, pages 361--383. Springer, 1997.

\bibitem{burger10brest}
Thomas Burger and Fabio Cuzzolin.
\newblock The barycenters of the k-additive dominating belief functions and the
  pignistic k-additive belief functions.
\newblock In {\em Proceedings of the First International Workshop on the Theory
  of Belief Functions (BELIEF 2010)}, 2010.

\bibitem{Chateauneuf89}
A.~Chateauneuf and Jean-Yves Jaffray.
\newblock Some characterization of lower probabilities and other monotone
  capacities through the use of {M}{\"o}ebius inversion.
\newblock {\em Mathematical social sciences}, (3):263--283, 1989.

\bibitem{Choquet53}
Gustave Choquet.
\newblock Theory of capacities.
\newblock {\em Annales de l'Institut Fourier}, 5:131--295, 1953.

\bibitem{Cobb03isf}
Barry~R. Cobb and Prakash~P. Shenoy.
\newblock A comparison of {B}ayesian and belief function reasoning.
\newblock {\em Information Systems Frontiers}, 5(4):345--358, 2003.

\bibitem{cobb06ijar}
Barry~R. Cobb and Prakash~P. Shenoy.
\newblock On the plausibility transformation method for translating belief
  function models to probability models.
\newblock {\em International Journal of Approximate Reasoning}, 41(3):314--330,
  2006.

\bibitem{cobb03on}
Barry~R. Cobb and Prakash~P. Shenoy.
\newblock On transforming belief function models to probability models.
\newblock Technical report, University of Kansas, School of Business, Working
  Paper No. 293, February 2003.

\bibitem{cuzzolin01bcc}
Fabio Cuzzolin.
\newblock Lattice modularity and linear independence.
\newblock In {\em Proceedings of the 18th British Combinatorial Conference
  (BCC'01)}, 2001.

\bibitem{cuzzolin01thesis}
Fabio Cuzzolin.
\newblock {\em Visions of a generalized probability theory}.
\newblock {PhD} dissertation, Universit\`a degli Studi di Padova, 19 February
  2001.

\bibitem{cuzzolin04smcb}
Fabio Cuzzolin.
\newblock Geometry of {D}empster's rule of combination.
\newblock {\em IEEE Transactions on Systems, Man and Cybernetics part B},
  34(2):961--977, 2004.

\bibitem{cuzzolin04ipmu}
Fabio Cuzzolin.
\newblock Simplicial complexes of finite fuzzy sets.
\newblock In {\em Proceedings of the 10th International Conference on
  Information Processing and Management of Uncertainty (IPMU'04)}, volume~4,
  pages 4--9, 2004.

\bibitem{cuzzolin05amai}
Fabio Cuzzolin.
\newblock Algebraic structure of the families of compatible frames of
  discernment.
\newblock {\em Annals of Mathematics and Artificial Intelligence},
  45(1-2):241--274, 2005.

\bibitem{cuzzolin07ecsqaru}
Fabio Cuzzolin.
\newblock On the orthogonal projection of a belief function.
\newblock In {\em Proceedings of the International Conference on Symbolic and
  Quantitative Approaches to Reasoning with Uncertainty (ECSQARU'07)}, volume
  4724 of {\em Lecture Notes in Computer Science}, pages 356--367. Springer,
  Berlin / Heidelberg, 2007.

\bibitem{cuzzolin07bcc}
Fabio Cuzzolin.
\newblock {On the relationship between the notions of independence in matroids,
  lattices, and Boolean algebras}.
\newblock In {\em Proceedings of the British Combinatorial Conference
  (BCC'07)}, 2007.

\bibitem{cuzzolin07report}
Fabio Cuzzolin.
\newblock Relative plausibility, affine combination, and {D}empster's rule.
\newblock Technical report, INRIA Rhone-Alpes, 2007.

\bibitem{cuzzolin07smcb}
Fabio Cuzzolin.
\newblock Two new {B}ayesian approximations of belief functions based on convex
  geometry.
\newblock {\em IEEE Transactions on Systems, Man, and Cybernetics - Part B},
  37(4):993--1008, 2007.

\bibitem{cuzzolin2008geometric}
Fabio Cuzzolin.
\newblock {A geometric approach to the theory of evidence}.
\newblock {\em IEEE Transactions on Systems, Man, and Cybernetics, Part C:
  Applications and Reviews}, 38(4):522--534, 2008.

\bibitem{cuzzolin08pricai-moebius}
Fabio Cuzzolin.
\newblock Alternative formulations of the theory of evidence based on basic
  plausibility and commonality assignments.
\newblock In {\em Proceedings of the Pacific Rim International Conference on
  Artificial Intelligence (PRICAI'08)}, pages 91--102, 2008.

\bibitem{cuzzolin08isaim-matroid}
Fabio Cuzzolin.
\newblock Boolean and matroidal independence in uncertainty theory.
\newblock In {\em Proceedings of the International Symposium on Artificial
  Intelligence and Mathematics (ISAIM 2008)}, 2008.

\bibitem{cuzzolin2008dual}
Fabio Cuzzolin.
\newblock {Dual properties of the relative belief of singletons}.
\newblock In Tu-Bao Ho and Zhi-Hua Zhou, editors, {\em PRICAI 2008: Trends in
  Artificial Intelligence}, volume 5351, pages 78--90. Springer, 2008.

\bibitem{cuzzolin08isaim-simplicial}
Fabio Cuzzolin.
\newblock An interpretation of consistent belief functions in terms of
  simplicial complexes.
\newblock In {\em Proceedings of the International Symposium on Artificial
  Intelligence and Mathematics (ISAIM 2008)}, 2008.

\bibitem{cuzzolin08-credal}
Fabio Cuzzolin.
\newblock On the credal structure of consistent probabilities.
\newblock In Steffen Hölldobler, Carsten Lutz, and Heinrich Wansing, editors,
  {\em Logics in Artificial Intelligence}, volume 5293 of {\em Lecture Notes in
  Computer Science}, pages 126--139. Springer, Berlin Heidelberg, 2008.

\bibitem{cuzzolin2008semantics}
Fabio Cuzzolin.
\newblock {Semantics of the relative belief of singletons}.
\newblock In {\em Interval/Probabilistic Uncertainty and Non-Classical Logics},
  pages 201--213. Springer, 2008.

\bibitem{cuzzolin08unclog-semantics}
Fabio Cuzzolin.
\newblock Semantics of the relative belief of singletons.
\newblock In {\em Proceedings of the International Workshop on
  Interval/Probabilistic Uncertainty and Non-Classical Logics (UncLog'08)},
  2008.

\bibitem{cuzzolin09ecsqaru}
Fabio Cuzzolin.
\newblock Complexes of outer consonant approximations.
\newblock In {\em Proceedings of the 10th European Conference on Symbolic and
  Quantitative Approaches to Reasoning with Uncertainty (ECSQARU'09)}, pages
  275--286, 2009.

\bibitem{cuzzolin09-intersection}
Fabio Cuzzolin.
\newblock The intersection probability and its properties.
\newblock In Claudio Sossai and Gaetano Chemello, editors, {\em Symbolic and
  Quantitative Approaches to Reasoning with Uncertainty}, volume 5590 of {\em
  Lecture Notes in Computer Science}, pages 287--298. Springer, Berlin
  Heidelberg, 2009.

\bibitem{cuzzolin2010credal}
Fabio Cuzzolin.
\newblock {Credal semantics of Bayesian transformations in terms of probability
  intervals}.
\newblock {\em IEEE Transactions on Systems, Man, and Cybernetics, Part B:
  Cybernetics}, 40(2):421--432, 2010.

\bibitem{cuzzolin10fss}
Fabio Cuzzolin.
\newblock The geometry of consonant belief functions: simplicial complexes of
  necessity measures.
\newblock {\em Fuzzy Sets and Systems}, 161(10):1459--1479, 2010.

\bibitem{cuzzolin10amai}
Fabio Cuzzolin.
\newblock Geometry of relative plausibility and relative belief of singletons.
\newblock {\em Annals of Mathematics and Artificial Intelligence},
  59(1):47--79, May 2010.

\bibitem{cuzzolin10ida}
Fabio Cuzzolin.
\newblock Three alternative combinatorial formulations of the theory of
  evidence.
\newblock {\em Intelligent Data Analysis}, 14(4):439--464, 2010.

\bibitem{cuzzolin11isipta-conditional}
Fabio Cuzzolin.
\newblock Geometric conditional belief functions in the belief space.
\newblock In {\em Proceedings of the 7th International Symposium on Imprecise
  Probabilities and Their Applications (ISIPTA'11)}, 2011.

\bibitem{cuzzolin11-consistent}
Fabio Cuzzolin.
\newblock On consistent approximations of belief functions in the mass space.
\newblock In Weiru Liu, editor, {\em Symbolic and Quantitative Approaches to
  Reasoning with Uncertainty}, volume 6717 of {\em Lecture Notes in Computer
  Science}, pages 287--298. Springer, Berlin Heidelberg, 2011.

\bibitem{CUZZOLIN2012786}
Fabio Cuzzolin.
\newblock On the relative belief transform.
\newblock {\em International Journal of Approximate Reasoning}, 53(5):786--804,
  2012.

\bibitem{cuzzolin14algebraic}
Fabio Cuzzolin.
\newblock {Chapter 12: An algebraic study of the notion of independence of
  frames}.
\newblock In S.~Chakraverty, editor, {\em Mathematics of Uncertainty Modeling
  in the Analysis of Engineering and Science Problems}. IGI Publishing, 2014.

\bibitem{cuzzolin14lp}
Fabio Cuzzolin.
\newblock Lp consonant approximations of belief functions.
\newblock {\em IEEE Transactions on Fuzzy Systems}, 22(2):420--436, April 2014.

\bibitem{Cuzzolin2014tfs}
Fabio Cuzzolin.
\newblock Lp consonant approximations of belief functions.
\newblock {\em IEEE Transactions on Fuzzy Systems}, 22(2):420--436, 2014.

\bibitem{cuzzolin14annals}
Fabio Cuzzolin.
\newblock On the fiber bundle structure of the space of belief functions.
\newblock {\em Annals of Combinatorics}, 18(2):245--263, 2014.

\bibitem{cuzzolin18belief-maxent}
Fabio Cuzzolin.
\newblock Generalised max entropy classifiers.
\newblock In S{\'e}bastien Destercke, Thierry Den{\oe}ux, Fabio Cuzzolin, and
  Arnaud Martin, editors, {\em Belief Functions: Theory and Applications},
  pages 39--47, Cham, 2018. Springer International Publishing.

\bibitem{cuzzolin2021springer}
Fabio Cuzzolin.
\newblock {\em The geometry of uncertainty - The geometry of imprecise
  probabilities}.
\newblock Springer Nature, 2021.

\bibitem{cuzzolin10brest}
Fabio Cuzzolin.
\newblock Geometric conditioning of belief functions.
\newblock In {\em Proceedings of the Workshop on the Theory of Belief Functions
  (BELIEF'10)}, April 2010.

\bibitem{cuzzolin03isipta}
Fabio Cuzzolin.
\newblock Geometry of upper probabilities.
\newblock In {\em Proceedings of the 3rd Internation Symposium on Imprecise
  Probabilities and Their Applications (ISIPTA'03)}, July 2003.

\bibitem{cuzzolin06-geometry}
Fabio Cuzzolin.
\newblock The geometry of relative plausibilities.
\newblock In {\em Proceedings of the 11th International Conference on
  Information Processing and Management of Uncertainty (IPMU'06), special
  session on "Fuzzy measures and integrals, capacities and games''}, Paris,
  France, July 2006.

\bibitem{cuzzolin11isipta-consonant}
Fabio Cuzzolin.
\newblock Lp consonant approximations of belief functions in the mass space.
\newblock In {\em Proceedings of the 7th International Symposium on Imprecise
  Probability: Theory and Applications (ISIPTA'11)}, July 2011.

\bibitem{cuzzolin09isipta-consistent}
Fabio Cuzzolin.
\newblock Consistent approximation of belief functions.
\newblock In {\em Proceedings of the 6th International Symposium on Imprecise
  Probability: Theory and Applications (ISIPTA'09)}, June 2009.

\bibitem{cuzzolin02fsdk}
Fabio Cuzzolin.
\newblock Geometry of {D}empster's rule.
\newblock In {\em Proceedings of the 1st International Conference on Fuzzy
  Systems and Knowledge Discovery (FSKD'02)}, November 2002.

\bibitem{cuzzolin05hawaii}
Fabio Cuzzolin.
\newblock On the properties of relative plausibilities.
\newblock In {\em Proceedings of the International Conference of the IEEE
  Systems, Man, and Cybernetics Society (SMC'05)}, volume~1, pages 594--599,
  October 2005.

\bibitem{cuzzolin00rss}
Fabio Cuzzolin.
\newblock Families of compatible frames of discernment as semimodular lattices.
\newblock In {\em Proceedings of the International Conference of the Royal
  Statistical Society (RSS 2000)}, September 2000.

\bibitem{cuzzolin14lap}
Fabio Cuzzolin.
\newblock {\em Visions of a generalized probability theory}.
\newblock Lambert Academic Publishing, September 2014.

\bibitem{Cuzzolin99}
Fabio Cuzzolin and Ruggero Frezza.
\newblock An evidential reasoning framework for object tracking.
\newblock In Matthew~R. Stein, editor, {\em Proceedings of SPIE - Photonics
  East 99 - Telemanipulator and Telepresence Technologies VI}, volume 3840,
  pages 13--24, 19-22 September 1999.

\bibitem{cuzzolin05isipta}
Fabio Cuzzolin and Ruggero Frezza.
\newblock Evidential modeling for pose estimation.
\newblock In {\em Proceedings of the 4th Internation Symposium on Imprecise
  Probabilities and Their Applications (ISIPTA'05)}, July 2005.

\bibitem{cuzzolin00mtns}
Fabio Cuzzolin and Ruggero Frezza.
\newblock Integrating feature spaces for object tracking.
\newblock In {\em Proceedings of the International Symposium on the
  Mathematical Theory of Networks and Systems (MTNS 2000)}, June 2000.

\bibitem{cuzzolin01space}
Fabio Cuzzolin and Ruggero Frezza.
\newblock Geometric analysis of belief space and conditional subspaces.
\newblock In {\em Proceedings of the 2nd International Symposium on Imprecise
  Probabilities and their Applications (ISIPTA'01)}, June 2001.

\bibitem{cuzzolin01lattice}
Fabio Cuzzolin and Ruggero Frezza.
\newblock Lattice structure of the families of compatible frames.
\newblock In {\em Proceedings of the 2nd International Symposium on Imprecise
  Probabilities and their Applications (ISIPTA'01)}, June 2001.

\bibitem{cuzzolin13fusion}
Fabio Cuzzolin and Wenjuan Gong.
\newblock Belief modeling regression for pose estimation.
\newblock In {\em Proceedings of the 16th International Conference on
  Information Fusion (FUSION 2013)}, pages 1398--1405, 2013.

\bibitem{daniel06ijis}
Milan Daniel.
\newblock On transformations of belief functions to probabilities.
\newblock {\em International Journal of Intelligent Systems}, 21(3):261--282,
  2006.

\bibitem{decampos94}
Luis~M. de~Campos, Juan~F. Huete, and Seraf\'in Moral.
\newblock Probability intervals: a tool for uncertain reasoning.
\newblock {\em International Journal of Uncertainty, Fuzziness and
  Knowledge-Based Systems}, 2(2):167--196, 1994.

\bibitem{DeFinetti74}
Bruno de~Finetti.
\newblock {\em Theory of Probability}.
\newblock Wiley, London, 1974.

\bibitem{Dempster68b}
Arthur~P. Dempster.
\newblock A generalization of {B}ayesian inference.
\newblock {\em Journal of the Royal Statistical Society, Series B},
  30(2):205--247, 1968.

\bibitem{Dempster68a}
Arthur~P. Dempster.
\newblock Upper and lower probabilities generated by a random closed interval.
\newblock {\em Annals of Mathematical Statistics}, 39(3):957--966, 1968.

\bibitem{Denneberg1994}
Dieter Denneberg.
\newblock Conditioning (updating) non-additive measures.
\newblock {\em Annals of Operations Research}, 52(1):21--42, 1994.

\bibitem{denneberg99interaction}
Dieter Denneberg and Michel Grabisch.
\newblock Interaction transform of set functions over a finite set.
\newblock {\em Information Sciences}, 121(1-2):149--170, 1999.

\bibitem{Denoeux01ijufk}
Thierry Den{\oe}ux.
\newblock Inner and outer approximation of belief structures using a
  hierarchical clustering approach.
\newblock {\em International Journal of Uncertainty, Fuzziness and
  Knowledge-Based Systems}, 9(4):437--460, 2001.

\bibitem{DENOEUX2008234}
Thierry Den{\oe}ux.
\newblock Conjunctive and disjunctive combination of belief functions induced
  by nondistinct bodies of evidence.
\newblock {\em Artificial Intelligence}, 172(2):234--264, 2008.

\bibitem{Denoeux02ijar}
Thierry Den{\oe}ux and Amel Ben~Yaghlane.
\newblock Approximating the combination of belief functions using the fast
  {M}{\"o}bius transform in a coarsened frame.
\newblock {\em International Journal of Approximate Reasoning},
  31(1--2):77--101, October 2002.

\bibitem{dubois86logical}
Didier Dubois and Henri Prade.
\newblock A set-theoretic view of belief functions {L}ogical operations and
  approximations by fuzzy sets.
\newblock {\em International Journal of General Systems}, 12(3):193--226, 1986.

\bibitem{dubois88possibility}
Didier Dubois and Henri Prade.
\newblock {\em Possibility theory}.
\newblock Plenum Press, New York, 1988.

\bibitem{dubois88representation}
Didier Dubois and Henri Prade.
\newblock Representation and combination of uncertainty with belief functions
  and possibility measures.
\newblock {\em Computational Intelligence}, 4(3):244--264, 1988.

\bibitem{Dubois90}
Didier Dubois and Henri Prade.
\newblock Consonant approximations of belief functions.
\newblock {\em International Journal of Approximate Reasoning}, 4:419--449,
  1990.

\bibitem{dubois2012possibility}
Didier Dubois and Henri Prade.
\newblock {\em Possibility theory: an approach to computerized processing of
  uncertainty}.
\newblock Springer Science \& Business Media, 2012.

\bibitem{fagin91new}
Ronald Fagin and Joseph~Y. Halpern.
\newblock A new approach to updating beliefs.
\newblock In {\em Proceedings of the Sixth Annual Conference on Uncertainty in
  Artificial Intelligence (UAI'90)}, pages 347--374, 1990.

\bibitem{Ferson03pboxes}
Scott Ferson, Vladik Kreinovich, Lev Ginzburg, Davis~S. Myers, and Kari Sentz.
\newblock Constructing probability boxes and {D}empster--{S}hafer structures.
\newblock Technical Report SAND2002-4015, Sandia National Laboratories, 2003.

\bibitem{gennari02-integrating}
Giambattista Gennari, Alessandro Chiuso, Fabio Cuzzolin, and Ruggero Frezza.
\newblock Integrating shape and dynamic probabilistic models for data
  association and tracking.
\newblock In {\em Proceedings of the 41st IEEE Conference on Decision and
  Control (CDC'02)}, volume~3, pages 2409--2414, December 2002.

\bibitem{gong2017belief}
Wenjuan Gong and Fabio Cuzzolin.
\newblock A belief-theoretical approach to example-based pose estimation.
\newblock {\em IEEE Transactions on Fuzzy Systems}, 26(2):598--611, 2017.

\bibitem{ha98geometric}
Vu~Ha, AnHai Doan, Van~H. Vu, and Peter Haddawy.
\newblock Geometric foundations for interval-based probabilities.
\newblock {\em Annals of Mathematics and Artical Inteligence}, 24(1-4):1--21,
  1998.

\bibitem{Ha}
Vu~Ha and Peter Haddawy.
\newblock Theoretical foundations for abstraction-based probabilistic planning.
\newblock In {\em Proceedings of the 12th International Conference on
  Uncertainty in Artificial Intelligence (UAI'96)}, pages 291--298, August
  1996.

\bibitem{haenni08aggregating}
Rolf Haenni.
\newblock Aggregating referee scores: an algebraic approach.
\newblock In U.~Endriss and W.~Goldberg, editors, {\em Proceedings of the 2nd
  International Workshop on Computational Social Choice (COMSOC'08)}, pages
  277--288, 2008.

\bibitem{HAENNI2002103}
Rolf Haenni and Norbert Lehmann.
\newblock Resource bounded and anytime approximation of belief function
  computations.
\newblock {\em International Journal of Approximate Reasoning}, 31(1):103--154,
  2002.

\bibitem{halpern03book}
Joseph~Y. Halpern.
\newblock {\em Reasoning About Uncertainty}.
\newblock MIT Press, 2017.

\bibitem{hullermeier2021aleatoric}
Eyke H{\"u}llermeier and Willem Waegeman.
\newblock Aleatoric and epistemic uncertainty in machine learning: An
  introduction to concepts and methods.
\newblock {\em Machine Learning}, 110(3):457--506, 2021.

\bibitem{unclog08book}
V.-N. Huynh, Y.~Nakamori, H.~Ono, J.~Lawry, V.~Kreinovich, and Hung~T. Nguyen,
  editors.
\newblock {\em Interval / Probabilistic Uncertainty and Non-Classical Logics}.
\newblock Springer, 2008.

\bibitem{rota97book}
Daniel~A. Klain and Gian-Carlo Rota.
\newblock {\em Introduction to Geometric Probability}.
\newblock Cambridge University Press, 1997.

\bibitem{Klawonn:1992:DBT:2074540.2074558}
Frank Klawonn and Philippe Smets.
\newblock The dynamic of belief in the transferable belief model and
  specialization-generalization matrices.
\newblock In {\em Proceedings of the Eighth International Conference on
  Uncertainty in Artificial Intelligence (UAI'92)}, pages 130--137. Morgan
  Kaufmann, 1992.

\bibitem{Kramosil95}
Ivan Kramosil.
\newblock Approximations of believeability functions under incomplete
  identification of sets of compatible states.
\newblock {\em Kybernetika}, 31(5):425--450, 1995.

\bibitem{kyburg87bayesian}
Henry~E. Kyburg.
\newblock Bayesian and non-{B}ayesian evidential updating.
\newblock {\em Artificial Intelligence}, 31(3):271--294, 1987.

\bibitem{lehrer05updating}
Ehud Lehrer.
\newblock Updating non-additive probabilities - a geometric approach.
\newblock {\em Games and Economic Behavior}, 50:42--57, 2005.

\bibitem{levi80book}
Isaac Levi.
\newblock {\em {The enterprise of knowledge: An essay on knowledge, credal
  probability, and chance}}.
\newblock The MIT Press, Cambridge, Massachusetts, 1980.

\bibitem{long2021visualization}
Hong~Feng Long, Zhen~Ming Peng, and Yong Deng.
\newblock Visualization of basic probability assignment.
\newblock 2021.

\bibitem{Lowrance90}
John~D. Lowrance, Thomas~D. Garvey, and Thomas~M. Strat.
\newblock A framework for evidential reasoning systems.
\newblock In Glenn Shafer and Judea Pearl, editors, {\em Readings in uncertain
  reasoning}, pages 611--618. Morgan Kaufman, 1990.

\bibitem{luo2020vector}
Ziyuan Luo and Yong Deng.
\newblock A vector and geometry interpretation of basic probability assignment
  in dempster-shafer theory.
\newblock {\em International Journal of Intelligent Systems}, 35(6):944--962,
  2020.

\bibitem{Nguyen78}
Hung~T. Nguyen.
\newblock On random sets and belief functions.
\newblock {\em Journal of Mathematical Analysis and Applications}, 65:531--542,
  1978.

\bibitem{pan2020probability}
Lipeng Pan and Yong Deng.
\newblock Probability transform based on the ordered weighted averaging and
  entropy difference.
\newblock {\em International Journal of Computers Communications \& Control},
  15(4), 2020.

\bibitem{Shafer76}
Glenn Shafer.
\newblock {\em A Mathematical Theory of Evidence}.
\newblock Princeton University Press, 1976.

\bibitem{shafer01book}
Glenn Shafer and Vladimir Vovk.
\newblock {\em Probability and Finance: It's Only a Game!}
\newblock Wiley, New York, 2001.

\bibitem{smets88beliefversus}
Philippe Smets.
\newblock Belief functions versus probability functions.
\newblock In B.~Bouchon, L.~Saitta, and R.~R. Yager, editors, {\em Proceedings
  of the International Conference on Information Processing and Management of
  Uncertainty in Knowledge-Based Systems (IPMU'88)}, pages 17--24. Springer
  Verlag, 1988.

\bibitem{Smets:1990:CPP:647232.719592}
Philippe Smets.
\newblock Constructing the pignistic probability function in a context of
  uncertainty.
\newblock In {\em Proceedings of the Fifth Annual Conference on Uncertainty in
  Artificial Intelligence (UAI '89)}, pages 29--40. North-Holland, 1990.

\bibitem{ubf}
Philippe Smets.
\newblock The nature of the unnormalized beliefs encountered in the
  transferable belief model.
\newblock In {\em Proceedings of the 8th Annual Conference on Uncertainty in
  Artificial Intelligence (UAI-92)}, pages 292--29, San Mateo, CA, 1992. Morgan
  Kaufmann.

\bibitem{smets93belief}
Philippe Smets.
\newblock Belief functions : the disjunctive rule of combination and the
  generalized {B}ayesian theorem.
\newblock {\em International Journal of Approximate Reasoning}, 9(1):1--35,
  1993.

\bibitem{smets2005decision}
Philippe Smets.
\newblock {Decision making in the TBM: the necessity of the pignistic
  transformation}.
\newblock {\em International Journal of Approximate Reasoning}, 38(2):133--147,
  2005.

\bibitem{smets05ijar}
Philippe Smets.
\newblock {Decision making in the TBM: the necessity of the pignistic
  transformation}.
\newblock {\em International Journal of Approximate Reasoning}, 38(2):133--147,
  February 2005.

\bibitem{smets94transferable}
Philippe Smets and Robert Kennes.
\newblock The {T}ransferable {B}elief {M}odel.
\newblock {\em Artificial Intelligence}, 66(2):191--234, 1994.

\bibitem{sudano01icif}
John~J. Sudano.
\newblock Pignistic probability transforms for mixes of low- and
  high-probability events.
\newblock In {\em Proceedings of the Fourth International Conference on
  Information Fusion (FUSION 2001)}, pages 23--27, 2001.

\bibitem{sudano03-equivalence}
John~J. Sudano.
\newblock Equivalence between belief theories and naive {B}ayesian fusion for
  systems with independent evidential data: part {I}, the theory.
\newblock In {\em Proceedings of the Sixth International Conference on
  Information Fusion (FUSION 2003)}, volume~2, pages 1239--1243, July 2003.

\bibitem{sugeno74fuzzy}
Michio Sugeno.
\newblock {\em Theory of fuzzy integrals and its applications}.
\newblock {PhD} dissertation, Tokyo Institute of Technology, 1974.
\newblock Tokyo, Japan.

\bibitem{suppes1977}
Patrick Suppes and Mario Zanotti.
\newblock On using random relations to generate upper and lower probabilities.
\newblock {\em Synthese}, 36(4):427--440, 1977.

\bibitem{tessem92interval}
Bj\o{o}rnar Tessem.
\newblock Interval probability propagation.
\newblock {\em International Journal of Approximate Reasoning}, 7(3-4):95--120,
  1992.

\bibitem{tessem93approximations}
Bj\o{o}rnar Tessem.
\newblock Approximations for efficient computation in the theory of evidence.
\newblock {\em Artificial Intelligence}, 61(2):315--329, 1993.

\bibitem{Troffaes07}
Matthias Troffaes.
\newblock Decision making under uncertainty using imprecise probabilities.
\newblock {\em International Journal of Approximate Reasoning}, 45(1):17--29,
  2007.

\bibitem{voorbraak89efficient}
F.~Voorbraak.
\newblock A computationally efficient approximation of {D}empster--{S}hafer
  theory.
\newblock {\em International Journal on Man-Machine Studies}, 30(5):525--536,
  1989.

\bibitem{Wald1945}
Abraham Wald.
\newblock Statistical decision functions which minimize the maximum risk.
\newblock {\em Annals of Mathematics}, 46(2):265--280, 1945.

\bibitem{walley91book}
Peter Walley.
\newblock {\em Statistical Reasoning with Imprecise Probabilities}.
\newblock Chapman and Hall, New York, 1991.

\bibitem{walley00towards}
Peter Walley.
\newblock Towards a unified theory of imprecise probability.
\newblock {\em International Journal of Approximate Reasoning},
  24(2-3):125--148, 2000.

\bibitem{wang91geometrical}
Chua-Chin Wang and Hon-Son Don.
\newblock A geometrical approach to evidential reasoning.
\newblock In {\em Proceedings of the IEEE International Conference on Systems,
  Man, and Cybernetics (SMC'91)}, volume~3, pages 1847--1852, 1991.

\bibitem{wang97choquet}
Zhenyuan Wang and George~J. Klir.
\newblock Choquet integrals and natural extensions of lower probabilities.
\newblock {\em International Journal of Approximate Reasoning}, 16(2):137--147,
  1997.

\bibitem{Weiler2003approximation}
Thomas Weiler.
\newblock Approximation of belief functions.
\newblock {\em International Journal of Uncertainty, Fuzziness and
  Knowledge-Based Systems}, 11(6):749--777, 2003.

\bibitem{yager87on}
Ronald~R. Yager.
\newblock On the {D}empster--{S}hafer framework and new combination rules.
\newblock {\em Information Sciences}, 41(2):93--138, 1987.

\bibitem{Zadeh78}
Lotfi~A. Zadeh.
\newblock Fuzzy sets as a basis for a theory of possibility.
\newblock {\em Fuzzy Sets and Systems}, 1:3--28, 1978.

\end{thebibliography}

\end{document}